\documentclass[11pt,letterpaper]{article}
\usepackage{arxiv}

\usepackage[utf8]{inputenc} %
\usepackage[T1]{fontenc}    %
\usepackage{lmodern}
\usepackage{nicefrac}
\usepackage{amsthm,amsmath,amsfonts,amssymb}
\usepackage{graphicx}
\usepackage{mathtools}
\usepackage{chngcntr}
\usepackage{xcolor}
\usepackage{wrapfig, blindtext}
\usepackage{url}
\usepackage{subfig}
\usepackage{adjustbox}
\usepackage{caption}
\usepackage[hidelinks]{hyperref}
\usepackage{natbib}

\let\cite\citep

\usepackage{amsmath,amsfonts,bm}

\newcommand\blfootnote[1]{%
  \begingroup
  \renewcommand\thefootnote{}\footnote{#1}%
  \addtocounter{footnote}{-1}%
  \endgroup
}

\def\eqref#1{equation~\ref{#1}}

\def\1{\bm{1}}

\def\eps{{\epsilon}}

\newcommand{\IG}{\phi}
\newcommand{\act}{\sigma}
\newcommand{\EV}{\sigma_\lambda}
\newcommand{\Loss}{\mathcal{L}}
\DeclareMathAlphabet{\mathsfit}{\encodingdefault}{\sfdefault}{m}{sl}
\SetMathAlphabet{\mathsfit}{bold}{\encodingdefault}{\sfdefault}{bx}{n}

\newcommand{\E}{\mathbb{E}}

\newcommand{\R}{\mathbb{R}}

\usepackage{amsthm}

\newtheoremstyle{nonumconjecture}
    {}
    {}
    {\itshape}
    {}
    {\bfseries}
    {.}
    {.5em}
    {}

\theoremstyle{nonumconjecture}

\newtheorem{Theorem}{Theorem}

\newtheorem{remark}{Remark}

\newtheorem {lemma} [Theorem]    {Lemma}
\newtheorem {corollary}  [Theorem]    {Corollary}

\newtheorem {theorem}[Theorem]    {Theorem}
\newcommand{\bn}{\text{\sc{Bn}}}
\newcommand{\diag}{\text{diag}}
\newcommand{\I}{\mathcal{I}}
\renewcommand{\det}{\text{det}}
\newcommand{\tr}{\text{Tr}}
\newcommand{\wg}{\text{Wg}}
\newcommand{\norm}[1]{\left\|#1\right\|}

\title{Towards Training Without Depth Limits:\\  Batch Normalization Without Gradient Explosion}

\author{
Alexandru Meterez*\\
ETH Zürich\\
\texttt{ameterez@ethz.ch} \\
\And
Amir Joudaki* \\
ETH Zürich \\
\texttt{ajoudaki@ethz.ch} \\
\And
Francesco Orabona \\
CEMSE, KAUST \\
\texttt{francesco@orabona.com} \\
\AND
Alexander Immer \\
ETH Zürich \\
\texttt{aimmer@ethz.ch} \\
\And
Gunnar R\"atsch \\
ETH Zürich \\
\texttt{raetsch@inf.ethz.ch} \\
\And
Hadi Daneshmand \\
MIT LIDS \\
\texttt{hdanesh@mit.edu}
}

\begin{document}
\date{}

\maketitle
\begin{abstract}
Normalization layers are one of the key building blocks for deep neural networks. Several theoretical studies have shown that batch normalization improves the signal propagation, by avoiding the representations from becoming collinear across the layers. However, results on mean-field theory of batch normalization also conclude that this benefit comes at the expense of exploding gradients in depth. Motivated by these two aspects of batch normalization, in this study we pose the following question: 
``Can a batch-normalized network keep the optimal signal propagation properties, but \emph{avoid} exploding gradients?'' We answer this question in the affirmative by giving a particular construction of an \emph{Multi-Layer Perceptron (MLP) with linear activations} and batch-normalization that provably has \emph{bounded gradients} at any depth. Based on Weingarten calculus, we develop a rigorous and non-asymptotic theory for this constructed MLP that gives a precise characterization of forward signal propagation, while proving that gradients remain bounded for linearly independent input samples, which holds in most practical settings. Inspired by our theory, we also design an activation shaping scheme that empirically achieves the same properties for certain non-linear activations.  
\end{abstract}
\blfootnote{*: Equal contribution}
\blfootnote{Code is available at: \href{https://github.com/alexandrumeterez/bngrad}{\text{github.com/alexandrumeterez/bngrad}}}

\section{Introduction}
\label{sec:introduction}

What if we could train even deeper neural networks? Increasing depth empowers neural networks, by turning them into powerful data processing machines. For example, increasing depth allows large language models (LLMs) to capture longer structural dependencies~\citep{devlin2018bert,liu2019roberta,brown2020language,goyal2021larger,raffel2020exploring}. Also, sufficiently deep convolutional networks can outperform humans in image classification~\citep{liu2022convnet,woo2023convnext,wu2021cvt}. Nevertheless, increasing depth imposes an inevitable computational challenge: deeper networks are harder to optimize. In fact, standard optimization methods exhibit a slower convergence when training deep neural networks. Hence, computation has become a barrier in deep learning, demanding extensive research and engineering. 

A critical problem is that deeper networks suffer from the omnipresent issue of rank collapse at initialization: the outputs become collinear for different inputs as the network grows in depth~\citep{saxe2013exact}. Rank collapse is not only present in MLPs and convolutional networks~\citep{feng2022rank,saxe2013exact,daneshmand2020batch}, but also in transformer architectures~\citep{dong2021attention, noci2022signal}. This issue significantly contributes to the training slowdown of deep neural networks. Hence, it has become the focus of theoretical and experimental studies~\citep{saxe2013exact,feng2022rank,daneshmand2021batch,noci2022signal}.  

One of the most successful methods to avoid rank collapse is Batch Normalization (BN)~\citep{ioffe2015batch}, as proven in a number of theoretical studies ~\citep{yang2019mean, daneshmand2020batch, daneshmand2021batch}. Normalization imposes a particular bias across the layers of neural networks~\citep{joudaki2023impact}. More precisely, the representations of a batch of inputs become more orthogonal after each normalization~\citep{joudaki2023impact}. This orthogonalization effect precisely avoids the rank collapse of deep neural networks at initialization~\citep{yang2019mean,joudaki2023impact,daneshmand2021batch,joudaki2023bridging}. 

While batch normalization effectively avoids rank collapse, it causes numerical issues. The existing literature proves that batch normalization layers cause exploding gradients in MLPs in an activation-independent manner~\cite{yang2019mean}. Gradient explosion limits increasing depth by causing numerical issues during backpropagation. For networks without batch normalization, there are effective approaches to avoid gradient explosion and vanishing, such as tuning the variance of the random weights based on the activation and the network width~\citep{he2015delving,glorot2010understanding}. However, such methods cannot avoid gradient explosion in the presence of batch normalization~\cite{yang2019mean}. Thus, the following important question remains unanswered:  
\begin{center} \emph{Is there any network with batch normalization without gradient explosion and rank collapse issues?}\end{center}

\paragraph{Contributions.}
We answer the above question affirmatively by giving a specific MLP construction initialized with orthogonal random weight matrices, rather than Gaussian. To show that the MLP still has optimal signal propagation, we prove that the MLP output embeddings become isometric (\eqref{eq:iso_gap}), implying the output representations becomes more orthogonal with depth. For a batch of linearly independent inputs, we prove
\begin{align}\label{eq:iso_gap_decay}
    \E\big[\text{isometry gap}\big] = \mathcal{O}\left( e^{-\text{depth}/C}\right),
\end{align}
where $C$ is a constant depending only on the network width and input and the expectation is taken over the random weight matrices. Thus, for sufficiently deep networks, the representations rapidly approach an orthogonal matrix. While \citet{daneshmand2021batch} prove that the outputs converge to within an $\mathcal{O}(\text{width}^{-1/2})$-ball close to orthogonality, we prove that the output representations become perfectly orthogonal in the infinite depth limit. This perfect orthogonalization turns out to be key in proving our result about avoiding gradient explosion. In fact, for MLPs initialized with Gaussian weights and BN, \citet[Theorem 3.9]{yang2019mean} prove that the gradients explode at an exponential rate in depth. In a striking contrast, we prove that gradients of an MLP with BN and orthogonal weights remain bounded as
\begin{align}\label{eq:grad_norm_bounded}
    \E \big[
\log\left(\text{gradient norm for each layer}\right)\big] = \mathcal{O}\left( \text{width}^5\right).
\end{align}
Thus, the gradient is bounded by a constant that only depends on the network width where the expectation is taken over the random weight matrices. It is worth noting that both isometry and log-norm gradient bounds are derived \emph{non-asymptotically}. Thus, in contrast to the previously studied mean-field or infinite width regime, our theoretical results hold in practical settings where the width is finite. 

The limitation of our theory is that it holds for a simplification in the BN module and linear activations. However, our results provide guidelines to avoid gradient explosion in MLPs with non-linear activations. We experimentally show that it is possible to avoid gradient explosion for certain non-linear activations with orthogonal random weights together with ``activation shaping''~\cite{martens2021rapid}. Finally, we experimentally demonstrate that avoiding gradient explosion stabilizes the training of deep MLPs with BN. 

\section{Related work}
\label{sec:related_work}
\paragraph{The challenge of depth in learning.} Large depth poses challenges for the optimization of neural networks, which becomes slower by increasing the number of layers. This depth related slowdown is mainly attributed to: (i) gradient vanishing/explosion, and (ii) the rank collapse of hidden representations. (i) Gradient vanishing and explosion is a classic problem in neural networks ~\citep{hochreiter1998vanishing}. For some neural architectures, this issue can be effectively solved. For example, \citet{he2015delving} propose a particular initialization scheme that avoids gradient vanishing/explosion for neural networks with rectifier non-linearities while  \citet{glorot2010understanding} study the effect of initialization on sigmoidal activations. However, such initializations cannot avoid gradient explosion for networks with batch normalization~\citep{yang2019mean,lubana2021beyond}. (ii) \citet{saxe2013exact} demonstrate that outputs become independent from inputs with growing depth, which is called the rank collapse issue~\citep{daneshmand2020batch,dong2021attention}. Various techniques have been developed to avoid rank collapse such as batch normalization~\cite{ioffe2015batch}, residual connections~\cite{he2016res}, and self-normalizing activations~\cite{klambauer2017self}. 
Here, we focus on batch normalization since our primary goal is to avoid the systemic issue of gradient explosion for batch normalization.

\paragraph{Initialization with orthogonal matrices.}
\citet{saxe2013exact} propose initializing the weights with random orthogonal matrices for linear networks without normalization layers. Orthogonal matrices avoid the rank collapse issue in linear networks, thereby enabling a depth-independent training convergence. \citet{pennington2017resurrecting} show that MLPs with sigmoidal activations achieve dynamical isometry when initialized with orthogonal weights. Similar benefits have been achieved by initializing CNNs with orthogonal or almost orthogonal kernels~\citep{xiao2018dynamical,mishkin2015all}, and by initializing RNN transition matrices with elements from the orthogonal and unitary ensembles ~\citep{arjovsky2016unitary,le2015simple,henaff2016recurrent}. Similarly, we use orthogonal random matrices to avoid gradient explosion. What sets our study apart from this literature is that our focus is on batch normalization and the issue of gradient explosion.

\paragraph{Networks with linear activation functions.}
Due to its analytical simplicity, the identity function has been widely used in theoretical studies for neural networks. Studies on identity activations date back to at least two decades. \citet{fukumizu1998effect} studies batch gradient descent in linear neural networks and its effect on overfitting and generalization. \citet{baldi1995learning} provide an overview over various theoretical manuscripts studying linear neural networks. Despite linearity, as \citet{saxe2013exact, saxe2013learning} observe, the gradient dynamics in a linear MLP are highly nonlinear. In a line of work,  \citet{saxe2013learning,saxe2013exact, saxe2019mathematical} study the training dynamics of deep neural networks with identity activations and introduce the notion of dynamical isometry. \citet{baldi1989neural} and \citet{yun2017global} study the mean squared error optimization landscape in linear MLPs. More recently, the optimum convergence rate of gradient descent in deep linear neural networks has been studied by \citet{arora2018convergence} and \citet{shamir2019exponential}. \citet{du2019width} prove that under certain conditions on the model width and input degeneracy, linear MLPs with Xavier initialized weights~\citep{glorot2010understanding} converge linearly to the global optimum. Akin to these studies, we also analyze networks with linear activations. However, batch normalization is a non-linear function, hence the network we study in this paper is a highly non-linear function of its inputs.

\paragraph{Mean field theory for random neural networks.} 
The existing analyses for random networks often rely on mean-field regimes where the network width tends to infinity~\citep{pennington2017resurrecting,yang2019mean,li2022neural,pennington2017nonlinear}. However, there is a discrepancy between mean-field regimes and the practical regime of finite width. While some analyses attempt to bridge this gap~\citep{joudaki2023bridging,daneshmand2021batch}, their results rely on technical assumptions that are hard to validate. In contrast, our non-asymptotic results hold for standard neural networks used in practice. Namely, our main assumption for avoiding rank collapse and gradient explosion is that samples in the input batch are not linearly dependent, which we will show is necessary. To go beyond mean-field regimes, we leverage recent theoretical advancements in Weingarten calculus~\citep{weingarten1978asymptotic, collins2003moments, banica2011polynomial, collins2006integration, collins2022weingarten}.

\section{Main results} \label{sec:main}
We will develop our theory by constructing networks that do not suffer from gradient explosion~(Sec.~\ref{sec:explosion}) and still orthogonalize~(Sec.~\ref{sec:orthogonality}). The construction is similar to the network studied by  \citet{daneshmand2021batch}: an MLP with batch normalization and linear activations. Formally, let $X^\ell \in \R^{d\times n}$ denote the representation of $n$ samples in $\R^d$ at layer $\ell$, then
\begin{align}
    X_{\ell+1} = \bn(W_\ell X_{\ell}), && \ell=0, \dots, L \label{eqn:mlp_def},
\end{align}
where $W_\ell \in \R^{d\times d}$ are random weights. Analogous to recent theoretical studies of batch normalization~\citep{daneshmand2021batch,daneshmand2020batch}, we define the BN operator $\bn: \R^{d \times n} \to \R^{d \times n}$ as 
\begin{align}
    \bn(X) &= \diag(XX^\top)^{-\frac{1}{2}}X, \quad \bn(X)_{ij} = \frac{X_{ij}}{\sqrt{\sum_{k=1}^d X_{ik}^2}}~.
    \label{eqn:bn_def}
\end{align}
Note that compared to the standard BN operator, mean reduction in \eqref{eqn:bn_def} is omitted. Our motivation for this modification, similar to \citet{daneshmand2021batch}, is purely technical and to streamline our theory. We will experimentally show that using standard BN modules instead does not influence our results on gradient explosion and signal propagation (for more details see Figure~\ref{fig:mean_reduction}). A second minor difference is that in the denominator, we have omitted a $\frac1n$ factor. However, this only amounts to a constant scaling of the representations and does not affect our results. 

Compared to \citet{daneshmand2021batch}, we need two main modifications to avoid gradient explosion: (i) $n=d$, and (ii) $W_\ell$ are random \emph{orthogonal} matrices. More precisely, we assume the distribution of $W_\ell$ is the Haar measure over the orthogonal group denoted by $\mathbb{O}_d$ ~\citep{collins2006integration}. Such an initialization scheme is widely used in deep neural networks without batch normalization~\citep{saxe2013exact,xiao2018dynamical,pennington2017resurrecting}. For MLP networks with BN, we prove such initialization avoids the issue of gradient explosion, while simultaneously orthogonalizing the inputs.

\subsection{Tracking signal propagation via orthogonality}
\label{sec:orthogonality}
As discussed, batch normalization has an important orthogonalization bias that influences training. Without normalization layers, representations in many architectures face the issue of rank-collapse, which happens when network outputs become collinear for arbitrary inputs, hence their directions become insensitive to the changes in the input. In contrast, the outputs in networks with batch normalization become increasingly orthogonal through the layers, thereby enhancing the signal propagation in depth~\cite{daneshmand2021batch}. Thus, it is important to check whether the constructed network maintains the important property of orthogonalization. 

\paragraph{Isometry gap.}
Our analysis relies on the notion of \emph{isometry gap}, $\IG: \R^{d\times d} \to \R$, introduced by \citet{joudaki2023impact}. Isometry gap is defined as
\begin{align}\label{eq:iso_gap}
 \IG(X) = -\log \left(\frac{\det(X^\top X)^{\frac{1}{d}} }{\frac{1}{d}\tr(X^\top X)} \right). 
\end{align}
One can readily check that $\IG(X) \geq 0$ and it is zero when $X$ is an orthogonal matrix, i.e., $X X^\top = I_d$. The \emph{isometry} denoted by $\I:\R^{d\times d} \to \R$ is defined as $\I(X) = \exp(-\IG(X))$.

\paragraph{Geometric interpretation of isometry.}
While the formula for isometry gap may seem enigmatic at first, it has a simple geometric interpretation that makes it intuitively understandable. The determinant $\det(X^\top X) = \det(X)^2$ is the squared volume of the parallelepiped spanned by the columns of $X$, while $\tr(X^\top X)$ is the sum squared-norms of the columns of $X$. Thus, the ratio between the two provides a scale-invariant notion of volume and isometry. On the one hand, if there is any collinearity between the columns, the volume will vanish and the isometry gap will be infinity, $\phi(X) = \infty$. On the other hand, $\phi(X) = 0$ implies $X^\top X$ is a scaled identity matrix. We will prove $\phi$ serves as a Lyapunov function for the chain of hidden representations $\{X_\ell\}_{\ell=0}^\infty$.

\paragraph{Theory for orthogonalization.}
The following theorem establishes the link between orthogonality of representations and depth. 
\begin{theorem}
\label{thm:iso_gap_decay_in_depth}
There is an absolute constant $C$ such that for any layer $\ell \leq L$ we have
\begin{align}
    &\E \IG(X_{\ell+1})\le \IG(X_0)e^{-\ell/k}, & \text{where}&& k:=Cd^2(1+ d\IG(X_0))~.
\end{align}
\end{theorem}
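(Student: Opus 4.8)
The plan is to first observe that the orthogonal multiplication does not change the isometry gap, so only the normalization matters. Writing $Y := W_\ell X_\ell$, orthogonality gives $Y^\top Y = X_\ell^\top W_\ell^\top W_\ell X_\ell = X_\ell^\top X_\ell$, so $\det(Y^\top Y)$ and $\tr(Y^\top Y)$, and hence $\IG$, are invariant under $W_\ell$. Since $\bn$ rescales each row of $Y$ to unit norm, the rows of $X_{\ell+1}$ satisfy $\tr(X_{\ell+1}X_{\ell+1}^\top)=d$, so the trace term in $\IG(X_{\ell+1})$ vanishes; combining this with $X_{\ell+1}=\diag(YY^\top)^{-1/2}Y$ and $\det(X_{\ell+1}^\top X_{\ell+1}) = \det(Y^\top Y)/\prod_i (YY^\top)_{ii}$ yields the exact recursion
\begin{align}
\IG(X_{\ell+1}) = \IG(X_\ell) + \tfrac1d\sum_{i=1}^d \log\big(w_i^\top \tilde A\, w_i\big),
\end{align}
where $w_i$ is the $i$-th row of $W_\ell$, $A=X_\ell X_\ell^\top$, and $\tilde A := (d/\tr A)\,A$ is its trace-normalized version (so $\tr\tilde A=d$ and, by scale-invariance of $\IG$, $\IG(X_\ell)=-\tfrac1d\log\det\tilde A$).

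\textbf{Pathwise Lyapunov property and conditional expectation.} Because $a_i := (W_\ell \tilde A W_\ell^\top)_{ii}$ satisfy $\sum_i a_i = \tr\tilde A = d$, concavity of $\log$ gives $\tfrac1d\sum_i\log a_i \le \log(\tfrac1d\sum_i a_i)=0$, so $\IG(X_{\ell+1})\le\IG(X_\ell)$ almost surely. This is the promised Lyapunov property, and crucially it forces $\IG(X_\ell)\le\IG(X_0)$ for all $\ell$ along every trajectory. Next I take the expectation over $W_\ell$: each row of a Haar-orthogonal matrix is marginally uniform on the unit sphere, so by linearity
\begin{align}
\E\big[\IG(X_{\ell+1})\mid X_\ell\big] = \IG(X_\ell) + \E_{w}\big[\log(w^\top \tilde A\, w)\big],
\end{align}
with $w$ uniform on $S^{d-1}$. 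The theorem thus reduces to a single spectral inequality for the deterministic matrix $\tilde A$.

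\textbf{The quantitative one-step contraction (main obstacle).} Set $Z := w^\top\tilde A w = \sum_i \tilde\mu_i w_i^2$ in the eigenbasis of $\tilde A$. Then $\E Z = \tfrac1d\tr\tilde A = 1$, so Jensen already gives $\E[\log Z]\le 0$, merely re-deriving the Lyapunov property in expectation. The heart of the result is to upgrade this to a quantitative bound
\begin{align}
-\,\E_w\big[\log(w^\top\tilde A w)\big] \ \ge\ \frac{\IG(X_\ell)}{C d^2\,(1+d\,\IG(X_\ell))}.
\end{align}
My plan is: (i) compute the low-order sphere moments of $Z$ via Weingarten calculus, obtaining $\E Z=1$ and $\Var(Z)=\tfrac{2}{d(d+2)}\sum_i(\tilde\mu_i-1)^2$; (ii) convert this variance into a strictly positive lower bound using $\E[-\log Z] = \E[Z-1-\log Z] \ge \tfrac12\,\E[(Z-1)^2\,\mathbf{1}_{Z\le1}]$ (valid since $z-1-\log z\ge\tfrac12(1-z)^2$ for $z\le1$), while controlling the upper tail $\E[(Z-1)^2\,\mathbf{1}_{Z>1}]$ through higher sphere moments; and (iii) a spectral comparison relating the eigenvalue energy $\tfrac1d\sum_i(\tilde\mu_i-1)^2$ to the isometry gap $-\tfrac1d\sum_i\log\tilde\mu_i$. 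Step (iii) is where the factor $(1+d\,\IG)$ is forced: a single near-singular eigenvalue drives $\IG$ to infinity while both the quadratic energy and $\E[-\log Z]$ stay $\mathcal{O}(1)$, so the comparison constant must be allowed to degrade with $\IG$. I expect (ii)--(iii) to be the principal difficulty, since they require uniform control of the upper tail of $Z$ and of nearly collinear inputs; this is exactly where the non-asymptotic, finite-width Weingarten moment estimates carry the argument.

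\textbf{Iterating the contraction.} Combining the one-step bound with the pathwise inequality $\IG(X_\ell)\le\IG(X_0)$, the state-dependent rate $1/(Cd^2(1+d\,\IG(X_\ell)))$ is at least $1/k$ with $k=Cd^2(1+d\,\IG(X_0))$, so $\E[\IG(X_{\ell+1})\mid X_\ell]\le \IG(X_\ell)(1-1/k)$. Taking total expectations and iterating from $\ell=0$ gives $\E\,\IG(X_{\ell+1})\le \IG(X_0)(1-1/k)^{\ell+1}\le \IG(X_0)e^{-(\ell+1)/k}\le \IG(X_0)e^{-\ell/k}$, which is the claim.
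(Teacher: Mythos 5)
Your reduction is correct and genuinely different from (and in places cleaner than) the paper's. The exact identity $\IG(X_{\ell+1}) = \IG(X_\ell)+\tfrac1d\sum_i\log(w_i^\top\tilde A w_i)$ checks out, and because the correction term is an average of logarithms over rows, linearity of expectation lets you work only with the marginal law of a single row of $W_\ell$ (uniform on the sphere), whose low-order moments are classical. The paper instead starts from the row-norm inequality $\I(\bn(X'))\ge(1+\mathrm{variance}/\mathrm{mean}^2)\,\I(X)$ and must control $\E\bigl[(\sum_j\|X'_{j\cdot}\|)^2\bigr]$, whose cross terms $\E[\|X'_{i\cdot}\|\,\|X'_{j\cdot}\|]$ involve two distinct rows jointly and therefore require the Weingarten formula for $\E[W_{ik}^2W_{jq}^2]$; your route avoids that machinery entirely. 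Your pathwise Lyapunov step, the reduction to a spectral inequality for $-\E\log Z$, your step (iii) (which is exactly the paper's case split at $\IG\le\frac{1}{16d}$ relating $\sum_k(\lambda_k-1)^2$ to $\IG$), and the final iteration using $\IG(X_\ell)\le\IG(X_0)$ all mirror the paper's logic and are sound.

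The one place your sketch is not yet a proof is step (ii). Your variance formula $\Var(Z)=\frac{2}{d(d+2)}\sum_i(\tilde\mu_i-1)^2$ is correct, and the paper's one-step decrement is $\frac{\sum_k(\lambda_k-1)^2}{2d^2(d+2)}=\Theta(\Var(Z)/d)$, so to reproduce $k=Cd^2(1+d\,\IG(X_0))$ you must show $-\E\log Z\gtrsim\Var(Z)/d$. The easy pointwise route, $z-1-\log z\ge(z-1)^2/(2\max(1,z)^2)$ combined with $Z\le\lambda_{\max}(\tilde A)\le d$, only yields $-\E\log Z\ge\Var(Z)/(2d^2)$, i.e.\ $k=Cd^3(1+d\,\IG(X_0))$ --- qualitatively the same conclusion but a factor $d$ weaker than the stated theorem. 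Your proposed split into $\tfrac12\E[(Z-1)^2\mathbf{1}_{Z\le1}]$ plus upper-tail control could in principle recover the missing factor, but showing that the lower tail of $Z$ carries at least a $1/(Cd)$ fraction of the variance uniformly over admissible spectra is precisely the part you have not written down, and it is not automatic: for a spectrum with one eigenvalue near $d$, $Z$ behaves like $d\,w_1^2$ and the claim needs a genuine anti-concentration argument rather than a moment comparison. So: right architecture and an arguably slicker reduction, but the quantitative core that fixes the power of $d$ in $k$ remains open in your write-up.
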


Theorem~\ref{thm:iso_gap_decay_in_depth} states that if the samples in the input batch are not linearly dependent, representations approach orthogonality at an exponential rate in depth. The orthogonalization in depth ensures the avoidance of the rank collapse of representations, which is a known barrier to training deep neural networks~\cite{daneshmand2020batch,saxe2013exact,bjorck2018understanding}. 

Figure~\ref{fig:linear_contrast}
compares the established theoretical decay rate of $\phi$ with the practical rate. Interestingly, the plot confirms that the rate depends on width in practice, akin to the theoretical rate in Theorem~\ref{thm:iso_gap_decay_in_depth}. 

\begin{wrapfigure}{R}{0.5\textwidth}
  \centering
\vspace{-0.5cm}
\hspace{-0.5cm}
\includegraphics[width=0.5\textwidth]{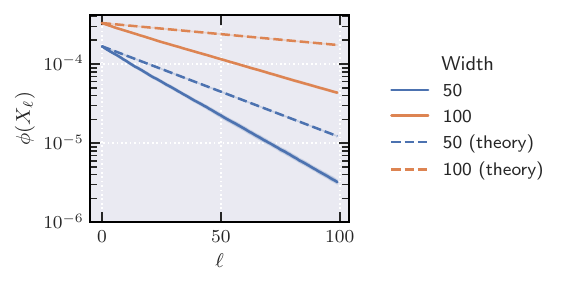}
\vspace{-.4cm}
  \caption{Isometry gap (y-axis, log-scale) in depth for an MLP with orthogonal weights, over randomly generated data. As predicted by Theorem~\ref{thm:iso_gap_decay_in_depth}, isometry gap of representations vanishes at an exponential rate. The solid traces are averaged over $10$ independent runs, and the dashed traces show the theoretical prediction from Theorem~\ref{thm:iso_gap_decay_in_depth}. 
  }
\vspace{-0.3cm}
  \label{fig:linear_contrast} 
\end{wrapfigure}
It is worth mentioning that the condition on the input samples to not be linearly dependent is necessary to establish this result. One can readily check that starting from a rank-deficient input, neither matrix products, nor batch-normalization operations can increase the rank of the representations. Since this assumption is quantitative, we can numerically verify it by randomly drawing many input mini-batches and check if they are linearly dependent. For CIFAR10, CIFAR100, MNIST and FashionMNIST, we empirically tested that most batches across various batch sizes are full-rank (see Section~\ref{sec:datasets_independence} for details on the average rank of a batch in these datasets). 

Theorem~\ref{thm:iso_gap_decay_in_depth} distinguishes itself from the existing orthogonalization results in the literature~\citep{yang2019mean,joudaki2023impact} as it is non-asymptotic and holds for networks with finite width. Since practical networks have finite width and depth, non-asymptotic results are crucial for their applicability to real-world settings. While \citet{daneshmand2021batch} provide a non-asymptotic bound for orthogonalization, the main result relies on an assumption that is hard to verify. 

\textit{Proof idea of Theorem~\ref{thm:iso_gap_decay_in_depth}.}
We leverage a recent result established by  \citet{joudaki2023impact}, proving that the isometry gap does not decrease with BN layers. 
For all non-degenerate matrices $X \in \R^{d \times d}$, the following holds
    \begin{align*}
        \I(\bn(X))\geq \left(1+ \frac{\mathrm{variance}\{\|X_{j\cdot}\|\}_{j=1}^d}{(\mathrm{mean} \{\|X_{j\cdot}\|\}_{j=1}^d)^2}\right) \I(X)~.
    \end{align*}
Using the above result, we can prove that matrix multiplication with orthogonal weights also does not decrease isometry as stated in the next lemma. 
\begin{lemma}[Isometry after rotation]
\label{lem:iso_rotation}
Let $X \in \R^{d\times d}$ and $W \in \R^{d\times d}$ be an orthogonal matrix and $X' = W X$; then,  
\begin{align}
   \I(\bn(X'))\geq \left(1+ \frac{\mathrm{variance}\{\|X'_{j\cdot}\|\}_{j=1}^d}{(\mathrm{mean} \{\|X'_{j\cdot}\|\}_{j=1}^d)^2}\right) \I(X)~.
\end{align}
\end{lemma}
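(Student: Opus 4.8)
The plan is to reduce the lemma to the inequality of \citet{joudaki2023impact} quoted just above, by exploiting the invariance of isometry under left-multiplication by an orthogonal matrix. First I would apply that quoted inequality not to $X$ but directly to the matrix $X' = WX$, which gives
\[
\I(\bn(X')) \geq \left(1 + \frac{\mathrm{variance}\{\|X'_{j\cdot}\|\}_{j=1}^d}{(\mathrm{mean}\,\{\|X'_{j\cdot}\|\}_{j=1}^d)^2}\right)\I(X').
\]
The variance-over-mean factor is now already expressed in terms of the rows of $X'$, exactly matching the factor in the lemma statement, so the only remaining task is to replace the $\I(X')$ on the right-hand side by $\I(X)$.

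The key step is the observation that $\I(X) = \det(X^\top X)^{1/d}/(\tfrac{1}{d}\tr(X^\top X))$ depends on $X$ \emph{only through the Gram matrix} $X^\top X$, since both the determinant and the trace in its definition are functions of $X^\top X$. Because $W$ is orthogonal we have $W^\top W = I_d$, and therefore
\[
X'^\top X' = X^\top W^\top W X = X^\top X.
\]
Thus $X$ and $X'$ share the same Gram matrix, whence $\det(X'^\top X') = \det(X^\top X)$ and $\tr(X'^\top X') = \tr(X^\top X)$, giving $\I(X') = \I(X)$. Substituting this equality into the displayed inequality yields the lemma.

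I expect no substantial obstacle: the entire content is the orthogonal invariance of the Gram matrix, and the single point deserving a word of care is the non-degeneracy hypothesis required by the cited result. Since $W$ is invertible, $X'$ is non-degenerate precisely when $X$ is, so the quoted inequality applies whenever $X$ is full rank --- which is the only regime relevant to Theorem~\ref{thm:iso_gap_decay_in_depth}. If instead $X$ were rank-deficient, then $\I(X) = 0$ and the asserted bound holds trivially because $\I(\bn(X')) \geq 0$.
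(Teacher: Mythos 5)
Your proposal is correct and follows essentially the same route as the paper: apply the quoted inequality of \citet{joudaki2023impact} to $X'=WX$ and then use orthogonal invariance to replace $\I(X')$ by $\I(X)$. The only cosmetic difference is that you argue via the Gram matrix identity $X'^\top X' = X^\top X$ directly, while the paper separately checks $\det(X'X'^\top)=\det(XX^\top)$ and $\tr(X'X'^\top)=\tr(XX^\top)$; these are equivalent observations.
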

It is straightforward to check that there exists at least an orthogonal matrix $W$ for which $\I(\bn(WX))=1$ (see Corollary~\ref{cor:isometry_increase}). Thus, $\I(\cdot)$ strictly increases for some weight matrices, as long as $X$ is not orthogonal. When the distribution of $W$ is the Haar measure over the orthogonal group, we can leverage recent developments in Weingarten calculus~\citep{weingarten1978asymptotic, banica2011polynomial,collins2006integration,collins2022weingarten} to calculate a rate for the isometry increase in expectation: 
\begin{theorem}
    \label{thm:iso_bound}
    Suppose $W \sim \mathbb{O}_d$ is a matrix drawn from $\mathbb{O}_d$ such that the distribution of $W$ and $UW$ are the same for all orthogonal matrices $U$. Let $\{\lambda_i\}_{i=1}^d$ be the eigenvalues of $XX^\top$. Then,
    \begin{align}
        \E_W \left[\I(\bn(WX))\right ] \geq \left( 1 - \frac{\sum_{k=1}^d (\lambda_k - 1)^2}{2d^2(d+2)} \right)^{-1} \I(X)   
    \end{align}
    holds for all $X = \bn(\cdot)$, with equality for orthogonal matrices.
\end{theorem}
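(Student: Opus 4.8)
The plan is to turn $\I(\bn(WX))$ into an explicit function of the eigenvalues $\{\lambda_k\}$ of $XX^\top$ and the random squared row-norms of $WX$, and then extract the improvement factor from a sharp AM--GM deficit fed with the second moments of a single Haar row; although the paper advertises Weingarten calculus, the only probabilistic input I would need is the second moment of one row. Concretely, write $A = XX^\top$ and $M_{ii} = (WAW^\top)_{ii}$, the squared norm of the $i$-th row of $WX$. Since $W$ is orthogonal, $WAW^\top$ is similar to $A$, so $\det(WAW^\top) = \det A = \prod_k \lambda_k$ and $\sum_i M_{ii} = \tr A = \sum_k\lambda_k$. Because $X = \bn(\cdot)$ has unit-norm rows, $\tr A = d$; and because $\bn$ rescales the rows of $WX$ to unit norm, $\tr(\bn(WX)\bn(WX)^\top) = d$ identically. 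Evaluating the determinant of $\diag(M)^{-1/2}(WAW^\top)\diag(M)^{-1/2}$ then gives the exact identity
\[
\I(\bn(WX)) = (\det A)^{1/d}\Big(\prod_{i=1}^d M_{ii}\Big)^{-1/d}, \qquad \I(X) = (\det A)^{1/d},
\]
so it remains to show $\E_W\big[(\prod_i M_{ii})^{-1/d}\big] \ge \big(1 - \tfrac{Q}{2d^2(d+2)}\big)^{-1}$, where $Q := \sum_k (\lambda_k - 1)^2$.

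Diagonalising (legitimate since $W$ is Haar) I take $A = \diag(\lambda_1,\dots,\lambda_d)$, so $M_{ii} = \sum_a W_{ia}^2 \lambda_a$ with $\E M_{ii} = \tfrac1d\sum_a\lambda_a = 1$ and $0 < M_{ii} \le \sum_i M_{ii} = d$. The deterministic heart of the argument is the elementary inequality
\[
\log x \le (x-1) - \frac{(x-1)^2}{2d}, \qquad 0 < x \le d,
\]
which I would verify by noting that the difference has a double zero at $x=1$, is convex on $(0,1]$, and is convex-then-concave on $[1,d]$ with nonnegative endpoints (so the chord argument covers the concave part). Applying it to each $M_{ii}$, summing, and using $\sum_i(M_{ii}-1) = \tr A - d = 0$ yields $\prod_i M_{ii} \le \exp\!\big(-\tfrac{1}{2d}\sum_i(M_{ii}-1)^2\big)$, hence the pointwise bound $(\prod_i M_{ii})^{-1/d} \ge 1 + \tfrac{1}{2d^2}\sum_i(M_{ii}-1)^2$.

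Taking expectations reduces everything to $\E_W\big[\sum_i(M_{ii}-1)^2\big] = d\,\Var(M_{11})$. Here I invoke the single-row Haar moment $\E[W_{1a}^2 W_{1b}^2] = (1 + 2\delta_{ab})/(d(d+2))$; expanding $\Var(M_{11}) = \sum_{a,b}(\lambda_a-1)(\lambda_b-1)\E[W_{1a}^2 W_{1b}^2]$ and killing the off-diagonal sum with $\sum_a(\lambda_a-1)=0$ leaves $\Var(M_{11}) = 2Q/(d(d+2))$, so $\E_W[\sum_i(M_{ii}-1)^2] = 2Q/(d+2)$. This gives $\E_W[\I(\bn(WX))] \ge \I(X)\big(1 + \tfrac{Q}{d^2(d+2)}\big)$, and since $Q \le d(d-1) < d^2(d+2)$ the scalar inequality $1 + 2u \ge (1-u)^{-1}$ for $u = \tfrac{Q}{2d^2(d+2)} \in [0,\tfrac12)$ upgrades this to the claimed $(1 - \tfrac{Q}{2d^2(d+2)})^{-1}\I(X)$. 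Orthogonal $X$ has every $\lambda_k = 1$, so $Q = 0$ and both sides equal $1$, giving the equality case.

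The delicate point I expect to be the main obstacle is the quadratic-deficit constant in the logarithmic inequality: it must hold uniformly over the entire attainable range of row-norms, which extends up to $M_{ii} = d$ when $XX^\top$ concentrates along one eigendirection. The deficit of $-\log$ around its tangent line degrades like $1/\max_i M_{ii}$, and it is exactly this uniform-over-$(0,d]$ demand that costs a factor of order $d$ relative to the true second-order rate $Q/(d(d+2))$ and accounts for the $d^2$ in the denominator of the theorem. A secondary assumption is non-degeneracy ($M_{ii} > 0$, i.e.\ all $\lambda_k > 0$), needed so that $\bn$ and the logarithms are well defined --- precisely the linear-independence hypothesis used elsewhere in the paper.
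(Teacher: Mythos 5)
Your proof is correct, and it reaches the stated bound by a genuinely different route from the paper's. The paper starts from the imported inequality $\I(\bn(X'))\geq\bigl(1+\mathrm{variance}/\mathrm{mean}^2\bigr)\I(X)$ for the row norms of $X'=WX$, rewrites the factor as $d^2/\bigl(\sum_j\|X'_{j\cdot}\|\bigr)^2$, applies Jensen to move the expectation into the denominator, and then bounds the pairwise products $\E\bigl[\|X'_{i\cdot}\|\,\|X'_{j\cdot}\|\bigr]$ by linearizing $\sqrt{\cdot}$ and evaluating the two-row fourth moments $\E[W_{ik}^2W_{jq}^2]$ with $i\neq j$ via the orthogonal Weingarten formula. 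You instead observe the exact identity $\I(\bn(WX))=(\det A)^{1/d}\bigl(\prod_i M_{ii}\bigr)^{-1/d}$ with $A=XX^\top$ and $M_{ii}=(WAW^\top)_{ii}$, prove the \emph{pointwise} bound $\bigl(\prod_i M_{ii}\bigr)^{-1/d}\geq 1+\frac{1}{2d^2}\sum_i(M_{ii}-1)^2$ from your log-deficit inequality on $(0,d]$ together with $\sum_i M_{ii}=d$, and then need only the classical single-row sphere moments $\E[W_{1a}^2W_{1b}^2]=(1+2\delta_{ab})/(d(d+2))$ --- no genuine Weingarten computation, since a single row of a Haar orthogonal matrix is just a uniform unit vector. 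Your route is more elementary and self-contained, replaces a Jensen step by a deterministic one, and in fact yields the stronger factor $1+\frac{Q}{d^2(d+2)}$ before you deliberately relax it to the stated $\bigl(1-\frac{Q}{2d^2(d+2)}\bigr)^{-1}$ via $1+2u\geq(1-u)^{-1}$ for $u\leq\frac12$; the cost is that the determinant identity requires $X$ square and non-degenerate, which is exactly the setting of the theorem. Two small items would make the write-up airtight: verify the right endpoint of the logarithmic inequality explicitly, i.e.\ $g(d)=\frac{d^2-1}{2d}-\log d\geq 0$ for $d\geq 2$ (it holds since $g(2)>0$ and $g'(d)=\frac{(d-1)^2}{2d^2}\geq 0$), noting that the convex/concave transition is at $x=\sqrt d$; and state that exchangeability of the rows of $W$ under the Haar measure is what justifies $\E\sum_i(M_{ii}-1)^2=d\,\Var(M_{11})$.
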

The structure in $X$ induced by \bn{} ensures its eigenvalues lie in the interval $(0,1]$, in that the multiplicative factor in the above inequality is always greater than one. In other words, $\I(\cdot)$ increases by a constant factor in expectation that depends on how close $X$ is to an orthogonal matrix. 

The connection between Theorem~\ref{thm:iso_bound} and the main isometry gap bound stated in Theorem~\ref{thm:iso_gap_decay_in_depth} is established in the following Corollary (recall $\phi = -\log \I$). 

\begin{corollary}[Isometry gap bound]
Suppose the same setup as in Theorem~\ref{thm:iso_bound}, where $X' = WX$. Then, we have:
\begin{align}
    \E_W[\IG(X') | X] \leq \IG(X) + \log \left(1 - \frac{\sum_k(\lambda_k - 1)^2}{2d^2(d+2)}\right).
\end{align}

\end{corollary}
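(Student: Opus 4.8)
The plan is to reduce $\IG(\bn(WX))-\IG(X)$ to a single determinant and then push a scalar expectation bound through Jensen's inequality. Throughout I read $X'$ as the next-layer representation $\bn(WX)$: the literal choice $X'=WX$ is vacuous here, since orthogonality of $W$ gives $(WX)^\top(WX)=X^\top X$ and hence $\IG(WX)=\IG(X)$. Write $\Sigma:=XX^\top$ with eigenvalues $\{\lambda_k\}$, set $D:=\diag(W\Sigma W^\top)$, and abbreviate $c:=\sum_k(\lambda_k-1)^2/(2d^2(d+2))$. The first step is an exact identity. Because $X$ and $X'$ are both outputs of $\bn$, their rows are unit vectors, so $\tr(X^\top X)=\tr((X')^\top X')=d$ and the denominator $\tfrac1d\tr(\cdot)$ in $\IG$ equals $1$ for each; thus $\IG(Y)=-\tfrac1d\log\det(Y^\top Y)$ for $Y\in\{X,X'\}$. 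Writing $X'=D^{-1/2}WX$ and using $|\det W|=1$ gives $\det((X')^\top X')=\det(D)^{-1}\det(X^\top X)$, hence the clean recursion $\IG(X')=\IG(X)+\tfrac1d\log\det D$. Since $\tfrac1d\sum_i D_{ii}=\tfrac1d\tr\Sigma=1$, AM--GM yields $\det D\le1$, consistent with the gap never increasing.

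Taking $\E_W[\,\cdot\mid X]$ and using linearity, $\E_W[\IG(X')]=\IG(X)+\tfrac1d\E_W[\log\det D]$. Now I apply Jensen to the scalar random variable $\det D$ with the concave function $\log$, obtaining $\E_W[\log\det D]\le\log\E_W[\det D]$. The whole statement therefore reduces to the scalar estimate $\E_W[\det D]\le(1-c)^d$; granting it, $\E_W[\IG(X')]\le\IG(X)+\tfrac1d\log(1-c)^d=\IG(X)+\log(1-c)$, which is exactly the claim.

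Establishing $\E_W[\det D]\le(1-c)^d$ is the crux and the main obstacle. This is the same estimate that already powers Theorem~\ref{thm:iso_bound}: feeding it instead through the convex map $t\mapsto t^{-1/d}$ gives $\E_W[(\det D)^{-1/d}]\ge(\E_W[\det D])^{-1/d}\ge(1-c)^{-1}$, which is that theorem after multiplying by $\I(X)$ (using $\I(\bn(WX))=\I(X)(\det D)^{-1/d}$). I would prove it by Weingarten calculus: expand $\det D=\prod_i\big(\sum_{a,b}W_{ia}W_{ib}\Sigma_{ab}\big)$ and integrate the resulting degree-$2d$ monomials in $W$ over $\mathbb{O}_d$, collecting terms by the eigenvalues $\lambda_k$. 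The bookkeeping over the Weingarten pairings is where the real work sits, and it should yield a second-order coefficient matching $\sum_k(\lambda_k-1)^2/(2d^2(d+2))$.

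Finally, I would flag explicitly that the Corollary does \emph{not} follow by merely taking logarithms in Theorem~\ref{thm:iso_bound}. That theorem lower-bounds $\E_W[\I(X')]$, whereas the Corollary upper-bounds $\E_W[\IG(X')]=-\E_W[\log\I(X')]$; concavity gives $\E_W[\log\I(X')]\le\log\E_W[\I(X')]$, so Jensen points the wrong way and cannot convert one into the other. The theorem and the corollary are parallel consequences of the single scalar bound $\E_W[\det D]\le(1-c)^d$, extracted through two different convexity inequalities. One should also resist the cheaper route of bounding each $\E_W[\log D_{ii}]$ through $\Var(D_{ii})=\tfrac{2}{d(d+2)}\sum_k(\lambda_k-1)^2$ alone, since $\E[\log S]\le\log(1-\tfrac12\Var S)$ is false for general positive $S$ with $\E S=1$; it is the joint law of the diagonal (the full determinant), not its marginals, that delivers the stated constant.
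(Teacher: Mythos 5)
Your setup is correct and in fact sharper than the paper's in one respect: the exact identity $\IG(\bn(WX))=\IG(X)+\tfrac1d\log\det D$ with $D=\diag(WXX^\top W^\top)$ is right (and your reading of $X'$ as the post-normalization representation matches what the paper's own proof actually manipulates, namely $\bn(X')$ with $X'=WX$). Your observation that Jensen points the wrong way for deducing the Corollary directly from the statement of Theorem~\ref{thm:iso_bound} is also correct. But there is a genuine gap: everything is funneled into the single unproven estimate $\E_W[\det D]\le(1-c)^d$, and that estimate is not a bookkeeping exercise. Since $\det D=\prod_{i=1}^d\bigl(\sum_k\lambda_k W_{ik}^2\bigr)$ is a degree-$2d$ polynomial in the entries of $W$, integrating it over $\mathbb{O}_d$ requires the full degree-$2d$ Weingarten machinery, with $(2d-1)!!$ pairings whose contributions carry both signs; matching the second-order coefficient (which your perturbative expansion around $\lambda_k=1$ would do, and indeed comes out with room to spare) does not control the higher-order terms uniformly over all admissible spectra $\{\lambda_k\}$ with $\sum_k\lambda_k=d$, which is exactly the regime (large $\IG$) where the Corollary is needed for the depth induction. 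As written, the crux of the proof is asserted rather than proved.

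The paper avoids this entirely by inserting one more relaxation \emph{before} Jensen: by AM--GM on the row norms, $\tfrac1d\log\det D=2\log(\mathrm{GM}\{\|X'_{j\cdot}\|\})\le\log\bigl(\sum_j\|X'_{j\cdot}\|\bigr)^2-\log d^2$ (this is precisely Lemma~\ref{lem:iso_rotation}). Jensen then only requires the scalar bound $\E_W\bigl[\bigl(\sum_j\|X'_{j\cdot}\|\bigr)^2\bigr]\le d^2-\frac{\sum_k(\lambda_k-1)^2}{2(d+2)}$, i.e.\ equation~\ref{eq:bound} from the proof of Theorem~\ref{thm:iso_bound}, which after linearizing the square roots via $\sqrt{x}\le1+\tfrac12(x-1)$ reduces to the two degree-four moments $\E[W_{ik}^2W_{jq}^2]$ — a closed-form Weingarten computation. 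So the trade is: you keep an exact identity and pay with an intractable degree-$2d$ integral; the paper gives up a little through AM--GM and pays only a degree-4 integral. To complete your argument you would either need to actually establish $\E_W[\det D]\le(1-c)^d$ (e.g.\ by a monotone coupling or a log-concavity argument, neither of which is sketched), or fall back on the AM--GM step, at which point you have reproduced the paper's proof.
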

  Notice that the term $\frac{\sum_{k=1}^d (\lambda_k - 1)^2}{2d^2(d+2)} = \mathcal{O}(\frac{1}{d})$, yielding $\log\left[1 - \frac{\sum_{k=1}^d (\lambda_k - 1)^2}{2d^2(d+2)} \right] \leq 0$.
The rest of proof is based on an induction over the layers, presented in Appendices~\ref{sec:conditional_orthonality} and~\ref{sec:proof_isometry_depth}.  

\subsection{Orthogonalization and gradient explosion}
There is a subtle connection between orthogonalization and gradient explosion. Suppose the input batch is rank-deficient, i.e., degenerate. As elaborated above, since all operations in our MLP can be formulated as matrix products, they cannot recover the rank of the representations, which thus remain degenerate. By perturbing the input such that it becomes full-rank, the output matrix becomes orthogonal, hence non-degenerate at an exponential rate in depth as proven in Theorem~\ref{thm:iso_gap_decay_in_depth}. 

\begin{wrapfigure}{R}{0.5\textwidth}
  \centering
\vspace{-0.5cm}
\hspace{-0.5cm}
\includegraphics[width=0.5\textwidth]{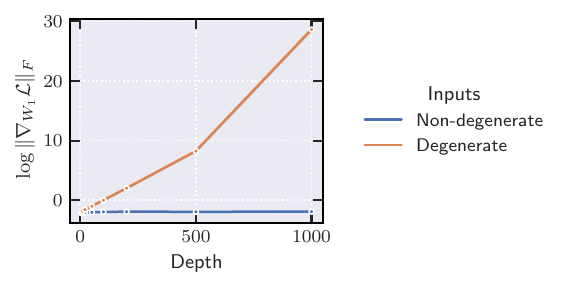}
\vspace{-.3cm}
  \caption{Logarithmic plot for the gradient norm of the first layer for networks with different number of layers evaluated on degenerate (orange) and non-degenerate (blue) inputs. The degenerate inputs contain repeated samples from CIFAR10 in the batch, measured at initialization for MLPs of various depths. While gradients explode for degenerate inputs, there is no explosion for non-degenerate inputs. Traces are averaged over $10$ independent runs.
  }
\vspace{-0.9cm}
  \label{fig:degenerate_input} 
\end{wrapfigure}
Thus, a slight change in inputs leads to a significant change in outputs from degeneracy to orthogonality. Considering that the gradient measures changes in the loss for infinitesimal inputs changes, the large changes in outputs potentially lead to gradient explosion. While this is only an intuitive argument, we observe that in practice the gradient does explode for degenerate inputs, as shown in Figure~\ref{fig:degenerate_input}. 

Nonetheless, in Figure~\ref{fig:degenerate_input} we observe that for non-degenerate inputs the gradient norm does not explode. In fact, we observe that inputs are often non-degenerate in practice (see Table~\ref{tab:batch_degen} for details). Thus, an important question is whether the gradient norm remains bounded for non-degenerate input batches. Remarkably, we can not empirically verify that for \emph{all degenerate inputs} the gradient norm remains bounded. Therefore, a theoretical guarantee is necessary to ensure avoiding gradient explosion. 

\subsection{Avoiding gradient explosion in depth}
\label{sec:explosion}
So far, we have proven that the constructed network maintains the orthogonalization property of BN. Now, we turn our focus to the gradient analysis. The next theorem proves that the constructed network does not suffer from gradient explosion in depth for non-degenerate input matrices. 

\begin{theorem}\label{thm:no_explosion}
Let the loss function $\Loss: \R^{d \times d} \to \R$ be $\mathcal{O}(1)$-Lipschitz, and input batch $X_0$ be non-degenerate. Then, there exists an absolute constant $C$ such that for all $\ell \leq L$ it holds
\begin{align}
    \E \left[ \log \| \nabla_{W_\ell} \mathcal{L}(X_\ell)\|\right] \leq  Cd^5 (\phi(X_0)^3 + 1)
\end{align}
where the expectation is over the random orthogonal weight matrices. 
\end{theorem}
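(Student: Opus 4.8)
The plan is to bound the gradient by backpropagation, to control the operator norm of each layer's Jacobian through the conditioning of the representations, and then to use the geometric decay of the isometry gap from Theorem~\ref{thm:iso_gap_decay_in_depth} to show that the accumulated backpropagation factors stay bounded independently of $L$. Write $Y_j = W_j X_j$ and $X_{j+1} = \bn(Y_j)$. Since $Y_\ell = W_\ell X_\ell$, the chain rule gives $\nabla_{W_\ell}\mathcal{L} = (\nabla_{Y_\ell}\mathcal{L})\,X_\ell^\top$, so $\|\nabla_{W_\ell}\mathcal{L}\|_F \le \|X_\ell\|_{\mathrm{op}}\,\|\nabla_{Y_\ell}\mathcal{L}\|_F \le \sqrt{d}\,\|\nabla_{Y_\ell}\mathcal{L}\|_F$, because the rows of $X_\ell$ are unit vectors and hence $\|X_\ell\|_{\mathrm{op}}\le\|X_\ell\|_F=\sqrt d$. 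Backpropagating from the output, $\nabla_{Y_\ell}\mathcal{L} = D_{\bn}(Y_\ell)^\top W_{\ell+1}^\top D_{\bn}(Y_{\ell+1})^\top\cdots W_{L-1}^\top D_{\bn}(Y_{L-1})^\top\nabla_{X_L}\mathcal{L}$, where $D_{\bn}(Y_j)$ is the Jacobian of $\bn$ at $Y_j$. The matrices $W_j$ are orthogonal and hence norm-preserving, while $\|\nabla_{X_L}\mathcal{L}\|_F = \mathcal{O}(1)$ by the Lipschitz assumption. Thus, deterministically in the weights,
\[
\log\|\nabla_{W_\ell}\mathcal{L}\| \;\le\; \mathcal{O}(1) + \tfrac12\log d + \sum_{j=\ell}^{L-1}\log\|D_{\bn}(Y_j)\|_{\mathrm{op}}.
\]

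Next I would control $\|D_{\bn}(Y_j)\|_{\mathrm{op}}$. Since $\bn$ normalizes each row independently, its Jacobian is block-diagonal across rows, acting on row $i$ as $\tfrac{1}{r_i}\big(I-\hat y_i\hat y_i^\top\big)$, where $r_i$ is the norm of row $i$ of $Y_j$ and $\hat y_i$ is its direction. Each block has operator norm $1/r_i$, so $\|D_{\bn}(Y_j)\|_{\mathrm{op}}=\max_i 1/r_i$. Because $r_i^2 = (Y_jY_j^\top)_{ii}\ge \lambda_{\min}(Y_jY_j^\top) = \lambda_{\min}(X_jX_j^\top)=:\lambda_{\min}^{(j)}$ (orthogonal $W_j$ leaves the spectrum unchanged), we obtain $\log\|D_{\bn}(Y_j)\|_{\mathrm{op}}\le \tfrac12\log(1/\lambda_{\min}^{(j)})$. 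It remains to relate the smallest eigenvalue to the isometry gap. With unit-norm rows, the eigenvalues $\{\lambda_k^{(j)}\}$ of $X_jX_j^\top$ satisfy $\sum_k\lambda_k^{(j)}=d$ and $\phi(X_j)=\tfrac1d\sum_k\log(1/\lambda_k^{(j)})$. I would prove two complementary estimates: a global one, via AM--GM applied to the other $d-1$ eigenvalues, giving $\log(1/\lambda_{\min})\le d\phi+1$; and a near-orthogonality one, via the quadratic Taylor expansion of $-\log$, giving $(1-\lambda_{\min})^2\le\sum_k(\lambda_k-1)^2\le C d^2\phi$ and hence $\log(1/\lambda_{\min})\lesssim d\sqrt{\phi}$ whenever $\lambda_{\min}\ge\tfrac12$. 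Combining them yields a single bound $\log(1/\lambda_{\min}^{(j)})\le C_1 d\big(\sqrt{\phi(X_j)}+\phi(X_j)\big)$ valid for every BN output.

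Finally I would take expectations, apply Jensen's inequality $\E\sqrt{\phi(X_j)}\le\sqrt{\E\,\phi(X_j)}$, and invoke Theorem~\ref{thm:iso_gap_decay_in_depth} with $\E\,\phi(X_j)\le\phi(X_0)e^{-j/k}$ and $k=Cd^2(1+d\phi(X_0))$. The two resulting geometric series sum to $L$-independent quantities, $\sum_j\E\sqrt{\phi(X_j)}\lesssim\sqrt{\phi(X_0)}\,k$ and $\sum_j\E\,\phi(X_j)\lesssim\phi(X_0)\,k$. Multiplying by $d$ and substituting $k\sim d^2(1+d\phi(X_0))$ gives a bound of order $d^3(1+d\phi(X_0))(\sqrt{\phi(X_0)}+\phi(X_0))$, and a term-by-term comparison (splitting on $\phi(X_0)\lessgtr 1$) shows this is at most $C d^5(\phi(X_0)^3+1)$, as claimed.

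The main obstacle is precisely the passage in the third paragraph: one must extract a \emph{square-root} dependence on $\phi$ near orthogonality rather than a linear one. A crude bound such as $\log(1/\lambda_{\min})\le d\phi+(d-1)\log d$ carries a non-vanishing additive term, so summing it over the (arbitrarily many) deep layers would produce an $L$-dependent blow-up. The point is that $\phi$ vanishes \emph{quadratically} in $\|X_jX_j^\top-I\|$ while the Jacobian excess $\log(1/\lambda_{\min}^{(j)})$ vanishes only \emph{linearly}; capturing this mismatch is what makes the per-layer factors summable and turns the exponential decay of the isometry gap into a depth-uniform gradient bound.
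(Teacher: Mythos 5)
Your proposal is correct, and its skeleton coincides with the paper's: the chain-rule decomposition in which orthogonal weights contribute nothing to the log-norm, the block-diagonal BN Jacobian with per-row norm $1/r_i$ and the bound $r_i^2\ge\lambda_{\min}(X_jX_j^\top)$, and the two-regime relation between $-\log\lambda_{\min}$ and the isometry gap (a crude $d\phi+1$ bound plus a $\sqrt{\phi}$-type bound near orthogonality) are all exactly the paper's Lemmas on the BN Jacobian and the minimum eigenvalue. Where you genuinely diverge is in how the per-layer bounds are summed in expectation. The paper introduces a stopping time $S=\min\{\ell:\phi(X_\ell)\le\frac{1}{16d}\}$, bounds $\E[S]\lesssim d^4\phi(X_0)^2$ via a tail estimate, and then runs a separate Markov/contraction argument to show $\E\sqrt{\phi(X_{\ell+1})}\le(1-\frac{1}{32d^4})\E\sqrt{\phi(X_\ell)}$ for $\ell\ge S$. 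You instead merge the two regimes into a single bound $\log(1/\lambda_{\min}^{(j)})\le C_1d(\sqrt{\phi_j}+\phi_j)$ valid at every layer, and then sum directly using Jensen, $\E\sqrt{\phi_j}\le\sqrt{\E\phi_j}$, and the geometric decay of $\E\phi_j$ from Theorem~\ref{thm:iso_gap_decay_in_depth}. This shortcut is valid, but the one step you state without justification is the merging itself: the additive $+1$ in the crude bound can be absorbed into $C_1d\sqrt{\phi}$ only because $\lambda_{\min}<\frac12$ forces $\frac{(1-\lambda_{\min})^2}{2}\le d\phi$, hence $\phi>\frac{1}{8d}$ and $d\sqrt{\phi}\ge\sqrt{d/8}\gtrsim 1$; without this observation the unified bound would fail in the poorly-conditioned regime and the sum would pick up an $L$-dependence, which is precisely the failure mode you correctly identify in your last paragraph. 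Once that is made explicit, your route is arguably cleaner than the paper's --- it avoids the stopping-time machinery entirely and still lands within the stated $Cd^5(\phi(X_0)^3+1)$ envelope --- at the cost of slightly looser constants and powers of $d$ in intermediate steps.
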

\begin{remark}
    For degenerate inputs $\phi(X_0) = \infty$ holds, in that the bound becomes vacuous.  
\end{remark}
\begin{remark}
    The $\mathcal{O}(1)$-Lipschitz condition holds in many practical settings. For example, in a classification setting, MSE and cross entropy losses obey the $\mathcal{O}(1)$-Lipschitz condition (see Lemma~\ref{lem:lipschitz}).
\end{remark}

\begin{wrapfigure}{R}{0.6\textwidth}
  \centering
\vspace{-0.3cm}
\includegraphics[width=0.6\textwidth]{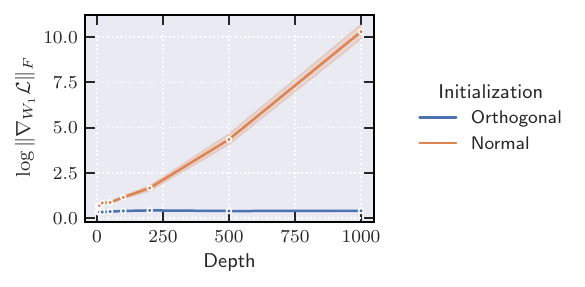}
\vspace{-.7cm}
  \caption{Logarithmic plot for the gradient norm of the first layer for networks with different number of layers evaluated on CIFAR10. For Gaussian weights (orange) the gradient-norm grows at an exponential rate, as predicted by \citet[Theorem 3.9]{yang2019mean}, while for orthogonal weights (blue) gradients remain bounded by a constant, validating Theorem~\ref{thm:no_explosion}. Traces are averaged over $10$ runs and shaded regions denote the $95\%$ confidence intervals.}
  \vspace{-0.3cm}
  \label{fig:linear_contrast_grad} 
\end{wrapfigure}

Note that the bound is stated for the expected value of log-norm of the gradients, which can be interpreted as bits of precision needed to store the gradient matrices. Thus, the fact that depth does not appear in any form in the upper bound of Theorem~\ref{thm:no_explosion} points out that training arbitrarily deep MLPs with orthogonal weights will not face numerical issues that arise with Gaussian weights~\cite{yang2019mean} as long as the inputs are non-degenerate. Such guarantees are necessary to ensure backpropagation will not face numerical issues.  

 Theorem~\ref{thm:no_explosion} states that as long as the input samples are not linearly dependent, the gradients remain bounded for any arbitrary depth $L$. As discussed in the previous section and evidenced in Figure~\ref{fig:degenerate_input}, this is necessary to avoid gradient explosion. Therefore, the upper bound provided in Theorem~\ref{thm:no_explosion} is tight in terms of inputs constraints. Furthermore, as mentioned before, random batches sampled from commonly used benchmarks, such as CIFAR10, CIFAR100, MNIST, and FashionMNIST, are non-degenerate in most practical cases (see Section~\ref{sec:datasets_independence} for more details). Thus, the assumptions and thereby assertions of the theorem are valid for all practical purposes.  

To the best of our knowledge, Theorem~\ref{thm:no_explosion} is the first non-asymptotic gradient analysis that holds for networks with batch normalization and finite width. Previous results heavily rely on mean field analyses in asymptotic regimes, where the network width tends to infinity~\citep{yang2019mean}. While mean-field analyses have brought many insights about the rate of gradient explosion, they are often specific to Gaussian weights. Here, we show that non-Gaussian weights can avoid gradient explosion, which has previously been considered ``unavoidable''~\citep{yang2019mean}. Figure~\ref{fig:linear_contrast_grad} illustrates this pronounced discrepancy.  

\textit{Proof idea of Theorem~\ref{thm:no_explosion}.}
    The first important observation is that, due to the chain rule, we can bound the log-norm of the gradient of a composition of functions, by bounding the summation of the log-norms of the input-output Jacobian of each layer, plus two additional terms corresponding to the loss term and the gradient of the first layer in the chain. If we discount the effect of the first and last terms, the bulk of the analysis is dedicated to bounding the total sum of log-norms of per layer input-output Jacobian, i.e., the fully connected and batch normalization layers. The second observation is that because the weights are only rotations, their Jacobian has eigenvalues equal to $1.$ Thus, the log-norm of gradients corresponding to fully connected layers vanish. What remains is to show that for any arbitrary depth $\ell,$ the log-norm of gradients of batch normalization layers also remains bounded. The main technical novelty for proving this step is showing that the log-norm of the gradient of $\bn$ layers is upper bounded by the isometry gap of pre-normalization matrices. Thus, we can invoke the exponential decay in isometry gap stated in Theorem~\ref{thm:iso_gap_decay_in_depth} to establish a bound on the log-norm of the gradient of these layers. Finally, since the decay in isometry gap is exponentially fast, the bound on the total sum of log-norm of the gradients amounts to a geometric sum that remains bounded for any arbitrary depth $\ell.$

\section{Implications on training}
\label{sec:experiments}
In this section, we experimentally validate the benefits of avoiding gradient explosion and rank collapse for training.
Thus far, we have proved that our constructed neural network with BN does not suffer from gradient explosion in Theorem~\ref{thm:no_explosion}, and does not have the rank collapse issue in depth via the orthogonalization property established in Theorem~\ref{thm:iso_gap_decay_in_depth}. We find that the constructed MLP is therefore less prone to numerical issues that arise when training deep networks. %

\begin{figure}[ht]
    \centering
 \includegraphics[width=1\linewidth]{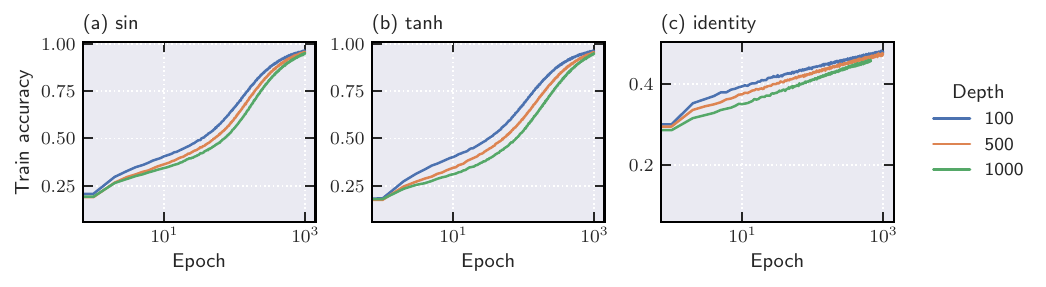}
 \vspace{-.6cm}
    \caption{Contrasting the training accuracy of MLPs with BN and shaped sin, shaped tanh and identity activations, on the CIFAR10 dataset. The identity activation performs much worse than the nonlinearities, confirming that the $\sin$ and $\tanh$ networks are not operating in the linear regime. The networks are trained with vanilla SGD and the hyperparameters are width $100$, batch size $100$, learning rate $0.001$.}
    \vspace{-0.3cm}
    \label{fig:train_cifar10}
\end{figure}

By avoiding gradient explosion and rank collapse in depth, we observe that the optimization with vanilla minibatch Stochastic Gradient Descent (SGD) exhibits an almost depth-independent convergence rate for linear MLPs. In other words, the number of iterations to get to a certain accuracy does not vary widely between networks with different depths. Figure~\ref{fig:train_cifar10} (c) shows the convergence of SGD for CIFAR10, with learning rate $0.001$, for MLPs with width $d=100$ and batch size $n=100.$ While the SGD trajectory strongly diverges from the initial conditions that we analyze theoretically, Figure~\ref{fig:train_cifar10} shows that the gradients remain stable during training, as well as the fact that different depths exhibit largely similar accuracy curves.

While the empirical evidence for our MLP with linear activation is encouraging, non-linear activations are essential parts of feature learning~\cite{nair2010rectified,klambauer2017self,hendrycks2016gaussian,maas2013rectifier}. However, introducing non-linearity violates one of the key parts of our theory, in that it prevents representations from reaching perfect isometry (see Figure~\ref{fig:nonlinear_contrast} in the Appendix for details on the connection between non-linearities and gradient explosion in depth). Intuitively, this is due to the fact that non-linear layers, as opposed to rotations and batch normalization, perturb the isometry of representations and prevent them from reaching zero isometry gap in depth. This problem turns out to be not just a theoretical nuisance, but to play a direct role in the gradient explosion behavior.
While the situation may seem futile at first, it turns out that activation shaping~\citep{li2022neural,zhang2022deep,martens2021rapid,he2023deep,noci2023shaped} can alleviate this problem, which is discussed next. For the remainder of this section, we focus on the training of MLPs with non-linear activations, as well as standard batch normalization and fully connected layers.

\section{Activation shaping based on the theoretical analysis} 
In recent years, several works have attempted to overcome the challenges of training very deep networks by parameterizing activation functions. 
In a seminal work, ~\citet{martens2021rapid} propose \textit{deep kernel shaping}, which is aimed at facilitating the training of deep networks without relying on skip connections or normalization layers, and was later extended to LeakyReLU in \textit{tailored activation transformations}~\citep{zhang2022deep}. In a similar direction, ~\citet{li2022neural} propose \textit{activation shaping} in order to avoid a degenerate output covariance. \citet{li2022neural} propose shaping the negative slope of LeakyReLU towards identity to ensure that the output covariance matrix remains non-degenerate when the networks becomes very deep. 

Since kernel and activation shaping aim to replace normalization, they have not been used in conjunction with normalization layers. 
In fact, in networks with batch normalization, even linear activations have non-degenerate outputs~\citep{daneshmand2021batch,yang2019mean} and exploding gradients~\citep{yang2019mean}. Thus, shaping activations towards identity in the presence of normalization layers may seem fruitless. Remarkably, we empirically demonstrate that we can leverage activation shaping to avoid gradient explosion in depth by using a pre-activation gain at each layer. 

\begin{figure}[ht] 
  \centering
  \includegraphics[width=0.8\linewidth]{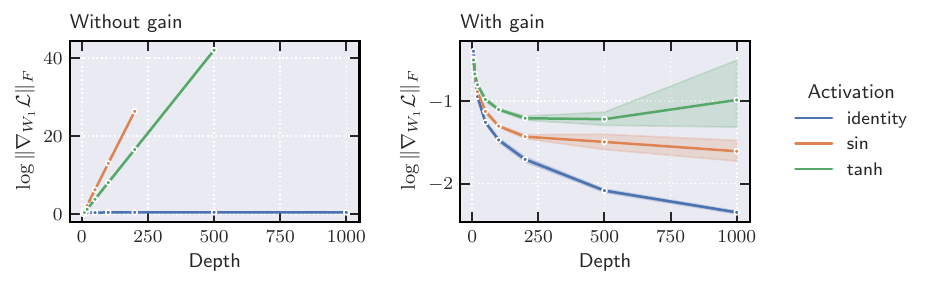}
  \vspace{-.2cm}
  \caption{Logarithmic plot contrasting the effect of gain on the gradient at initialization of the first layer, for networks with different number of layers initialized with orthogonal weights, BN and different activations, evaluated on CIFAR10. The networks have hyperparameters width $100$, batch size $100$. Traces are averaged over $10$ independent runs, with the shades showing the $95\%$ confidence interval.}
  \label{fig:gain_effect}
\end{figure}
Inspired by our theory, we develop a novel activation shaping scheme for networks with BN. The main strategy consists of shaping the activation function towards a linear function across the layers. Our activation shaping consists of tuning the gain of the activation, i.e., tuning $\alpha$ for $\sigma(\alpha x).$ We consider non-linear activations $\sigma \in \{\tanh, \sin\}.$

The special property that both $\tanh$ and $\sin$ activations have in common is that they are centered, $\act(0)=0,$ are differentiable around the origin $\act'(0)=1,$ and have bounded gradients $\act'(x) \leq 1, \forall x$. Therefore, by tuning the per-layer pre-activation gain $\alpha_\ell$ towards $0$, the non-linearities behave akin to the identity function. This observation inspires us to study the relationship between the rate of gradient explosion for each layer as a function of the gain parameter $\alpha_\ell$. Formally, we consider an MLP with shaped activations using gain $\alpha_\ell$ for the $\ell$th layer, that has the update rule
\begin{align}
    X_{\ell + 1} = \act(\alpha_\ell \bn(W_\ell X_\ell))~.
    \label{eqn:mlp_nonlinear}
\end{align}

Since the gradient norm has an exponential growth in depth, as shown in Figure~\ref{fig:gain_effect}, we can compute the slope of the linear growth rate of log-norm of gradients in depth. We define the rate of explosion for a model of depth $L$ and gain $\alpha_\ell$ at layer $\ell$ as the slope of the log norm of the gradients $R(\ell, \alpha_\ell)$. We show in Figure~\ref{fig:gain_effect} that by tuning the gain properly, we are able to reduce the exponential rate of the log-norm of the gradients by diminishing the slope of the rate curve and achieve networks trainable at arbitrary depths, while still maintaining the benefits of the non-linear activation. The main idea for our activation shaping strategy is to have a bounded total sum of rates across layers, by ensuring faster decay than a harmonic series (see App.~\ref{sec:shaping} for more details on activation shaping). Figure~\ref{fig:gain_effect} illustrates that this activation shaping strategy effectively avoids gradient explosion while maintaining the signal propagation and orthogonality of the outputs in depth. Furthermore, Figure~\ref{fig:train_cifar10} shows that the training accuracy remains largely depth-independent. For further experiments using activation shaping, see Appendix~\ref{sec:other_experiments}.

\section{Discussion}
\paragraph{Implicit bias of SGD towards orthogonality in optimization.} 
Optimization over orthogonal matrices has been an effective approach for training deep neural networks. Enforcing orthogonality during training ensures that the spectrum of the weight matrices remains bounded, which prevents gradient vanishing and explosion in depth. \citet{vorontsov2017orthogonality} study how different orthogonality constraints affect training performance in RNNs. For example \citet{lezcano2019cheap} leverage the exponential map on the orthogonal group, \citet{jose2018kronecker} decompose RNN transition matrices in Kronecker factors and impose soft constraints on each factor and \citet{mhammedi2017efficient} introduce a constraint based on Householder matrices.

\begin{figure}[ht]
    \centering
    \includegraphics[width=0.9\linewidth]{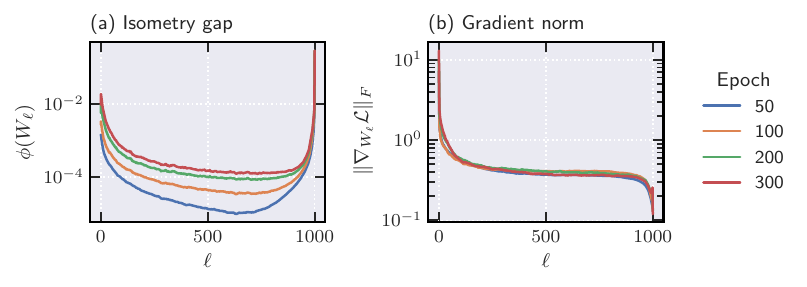}
    \vspace{-.5cm}
    \caption{\textbf{Implicit orthogonality bias of SGD.} Training an MLP with width $d=100,$ batch size $n=100,$ and depth $L=1000,$ activation $\tanh$, using SGD with $\text{lr}=0.001$ (a) Isometry gap (y-axis; log-scale) of weight matrices across all layers throughout training. (b) Gradient norms at each layer during training. }
    \label{fig:ortho_during_training_1000}    
\end{figure}

While these studies \emph{enforce} orthogonality constraints, one of our most striking empirical observations is that when our MLP grows very deep, the middle layers remain almost orthogonal even after many steps of SGD. As shown in Figure~\ref{fig:ortho_during_training_1000}, for $1000$ layer networks, the middle layers remain orthogonal during training. One could hypothesize that this is due to small gradients in these layers. In Figure~\ref{fig:ortho_during_training_1000}, we observe that the gradients of these middle layers are not negligible. Thus, in our MLP construction, both with linear activation and with activation shaping, the gradient dynamics have an \emph{implicit bias} to optimize over the space of orthogonal matrices. The mechanisms underlying this implicit orthogonality bias will be an ample direction for future research.

\subsubsection*{Author Contributions}
Alexandru Meterez: proofs of Section~\ref{sec:conditional_orthonality}, driving experiments, and writing the paper. Amir Joudaki: proofs of Section~\ref{sec:proof_isometry_depth} and Section~\ref{sec:proof_grad_norm}, designing the activation shaping scheme, and writing the paper. Francesco Orabona: proposing the idea of using orthogonal weights to achieve prefect isometry, reading the proofs, help with writing. Alexander Immer: reading the proofs, help with experiments and paper writing. Gunnar R\"atsch: help with experimental designs for activation shaping and paper writing. Hadi Daneshmand: proposed using orthogonal weights to avoid gradients explosion, leading the proofs  help with paper writing.  

\subsubsection*{Acknowledgments}
Amir Joudaki is funded through Swiss National Science Foundation Project Grant \#200550 to Andre Kahles. We acknowledge support from the NSF TRIPODS program (award DMS-2022448). Alexander Immer acknowledges support from the Max Planck ETH Center for Learning Systems. Amir Joudaki and Alexander Immer were partially funded by ETH Core funding (to G.R.).

\bibliography{refs.bib}
\bibliographystyle{unsrtnat}

\appendix
\renewcommand{\thesubsection}{\Alph{subsection}}
\counterwithin{figure}{section}
\counterwithin{table}{section}
\counterwithin{Theorem}{section}
\renewcommand\thefigure{\thesection\arabic{figure}}
\renewcommand\thetable{\thesection\arabic{table}}
\renewcommand\thelemma{\thesection\arabic{lemma}}
\renewcommand\thetheorem{\thesection\arabic{theorem}}
\renewcommand*{\thesubsection}{\arabic{subsection}} %

\newpage
\section*{Table of contents}
The appendix is structured in the following sections:
\begin{itemize}
    \item Appendix~\ref{sec:conditional_orthonality}: contains the proof based on Weingarten calculus for the conditional isometry gap decay rate, conditioned on the previous layer in the network.
    \item Appendix~\ref{sec:proof_isometry_depth}: contains the proof for the isometry gap exponential rate decay in depth, based on the conditional rate established in Appendix~\ref{sec:conditional_orthonality}.
    \item Appendix~\ref{sec:proof_grad_norm}: contains the proof for the log-norm gradient bound at arbitrary depths, based on the orthogonality result established in Appendix~\ref{sec:proof_isometry_depth}.
    \item Appendix~\ref{sec:datasets_independence}: contains the numerical verification of the average rank of the input batch in common datasets.
    \item Appendix~\ref{sec:shaping}: contains the full derivation and additional explanations of the activation shaping heuristic.
    \item Appendix~\ref{sec:implicity_orthogonality}: contains further results on CIFAR10, MNIST, FashionMNIST and CIFAR100, showcasing the implicit orthogonality bias of SGD during training.
    \item Appendix~\ref{sec:other_experiments}: contains supplemental experiments, including train and test results on MNIST, FashionMNIST, CIFAR100, as well as the effect of mean reduction and non-linearities without shaping on the gradient log-norms.
\end{itemize}

\section{Conditional orthogonalization}
\paragraph{Isometry after rotation.}
Our analysis is based on ~\citep[Corollary 3]{joudaki2023impact}, which we restate in the following Lemma:
\begin{lemma}
    \label{lem:isobn}
    For all non-degenerate matrices $X \in \R^{d \times d}$, we have:
    \begin{align}
        \I(\bn(X))\geq \left(1+ \frac{\text{variance}\{\|X_{j\cdot}\|\}_{j=1}^d}{(\text{mean} \{\|X_{j\cdot}\|\}_{j=1}^d)^2}\right) \I(X)~.
    \end{align}
\end{lemma}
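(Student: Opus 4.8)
The plan is to reduce the claimed multiplicative inequality to a single application of the arithmetic--geometric mean inequality applied to the row norms of $X$; no probabilistic (Weingarten) machinery is needed, since this lemma is purely deterministic and merely sets up the later averaging over $W\sim\mathbb{O}_d$. Throughout I write $r_j = \norm{X_{j\cdot}}$ for the $j$-th row norm and $D = \diag(r_1^2,\dots,r_d^2) = \diag(XX^\top)$, so that $\bn(X) = D^{-1/2}X =: Y$. Since $X$ is non-degenerate, every $r_j>0$ (a zero row would force $\operatorname{rank}X<d$), hence $D^{-1/2}$ is well defined and $Y$ has the same full rank as $X$; in particular $\I(Y)$ is well defined. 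The argument hinges on evaluating $\I(Y)/\I(X)$ in closed form, using $\I(Z) = \det(Z^\top Z)^{1/d}/(\tfrac1d\tr(Z^\top Z))$.

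First I would compute the two ingredients of $\I(Y)$. The rows of $Y$ are unit vectors by construction, so the diagonal of $YY^\top$ consists of ones and $\tr(Y^\top Y) = \tr(YY^\top) = d$. For the determinant, since $Y = D^{-1/2}X$ with square factors, $\det(Y^\top Y) = \det(Y)^2 = \det(D)^{-1}\det(X)^2 = \det(X^\top X)/\prod_j r_j^2$. Substituting yields $\I(Y) = \bigl(\det(X^\top X)/\prod_j r_j^2\bigr)^{1/d}$, whereas directly $\I(X) = \det(X^\top X)^{1/d}/\bigl(\tfrac1d\sum_j r_j^2\bigr)$ because $\tr(X^\top X) = \sum_j r_j^2$. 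Dividing, the common $\det(X^\top X)^{1/d}$ cancels and I obtain the key identity
\begin{align}
  \frac{\I(Y)}{\I(X)} = \frac{\tfrac1d\sum_j r_j^2}{\bigl(\prod_j r_j^2\bigr)^{1/d}},
\end{align}
i.e. the ratio is exactly the arithmetic mean of $\{r_j^2\}$ divided by their geometric mean.

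Next I would match this against the claimed factor. Writing $\bar r = \tfrac1d\sum_j r_j$, the variance identity $\mathrm{variance}\{r_j\} = \tfrac1d\sum_j r_j^2 - \bar r^2$ shows the claimed factor equals $1 + \mathrm{variance}\{r_j\}/\bar r^2 = (\tfrac1d\sum_j r_j^2)/\bar r^2$. Comparing with the identity above and cancelling the common positive numerator $\tfrac1d\sum_j r_j^2$, the desired inequality reduces to $\bar r^2 \ge (\prod_j r_j^2)^{1/d} = (\prod_j r_j)^{2/d}$, that is, $\bar r \ge (\prod_j r_j)^{1/d}$. This is precisely AM--GM for the nonnegative numbers $r_1,\dots,r_d$, with equality iff all row norms coincide.

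I expect the only real obstacle to be bookkeeping rather than insight: correctly tracking how $\bn$ acts row-wise (so that $\tr(Y^\top Y)=d$ while the determinant absorbs the factor $\prod_j r_j^2$), and then recognizing that the gap between the computed ratio (mean-of-squares over geometric-mean-of-squares) and the target factor (mean-of-squares over square-of-mean) collapses to an AM--GM statement on the row norms themselves. This matches the cited \citep[Corollary 3]{joudaki2023impact}, but the self-contained derivation above makes transparent why batch normalization never decreases isometry and why the slack is governed by the dispersion of the pre-normalization row norms.
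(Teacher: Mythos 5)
Your argument is correct. The computation of the exact ratio $\I(\bn(X))/\I(X)$ is right: with $r_j=\norm{X_{j\cdot}}$, the row-normalization forces $\tr(\bn(X)\bn(X)^\top)=d$ while $\det(\bn(X)^\top\bn(X))=\det(X^\top X)/\prod_j r_j^2$, so the ratio equals the arithmetic mean of $\{r_j^2\}$ over their geometric mean; the claimed factor equals the arithmetic mean of $\{r_j^2\}$ over $(\frac1d\sum_j r_j)^2$; and the inequality between the two denominators is exactly AM--GM on $r_1,\dots,r_d$. The non-degeneracy hypothesis is used correctly to guarantee $r_j>0$.

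Worth noting: the paper does not prove this lemma at all --- it imports it verbatim as Corollary~3 of \citet{joudaki2023impact} and builds on it. So there is no in-paper proof to compare against, and your derivation is a genuine, self-contained alternative to the citation. It also yields slightly more than the stated lemma: you obtain the \emph{exact} identity $\I(\bn(X))/\I(X) = \bigl(\frac1d\sum_j r_j^2\bigr)\big/\bigl(\prod_j r_j^2\bigr)^{1/d}$, of which the paper's multiplicative factor is only a lower bound (the two coincide exactly when all row norms are equal, which is also the equality case of the lemma). That sharper form would in principle let one tighten the downstream Weingarten computation in Theorem~\ref{thm:iso_bound}, where the paper bounds the expected value of the weaker variance-over-mean-squared factor rather than of the AM/GM ratio.
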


Lemma~\ref{lem:isobn} proves isometry bias of $\bn$. The next lemma proves that isometry does not change under rotation  

\begin{lemma}[Isometry after rotation]
Let $X \in \R^{d\times d}$ and $W \sim \mathbb{O}_d$ be a random orthogonal matrix and $X'= W X$. Then,  
\begin{align}
   \I(\bn(X'))\geq \left(1+ \frac{\text{variance}\{\|X'_{j\cdot}\|\}_{j=1}^d}{(\text{mean} \{\|X'_{j\cdot}\|\}_{j=1}^d)^2}\right) \I(X)~.
\end{align}
\end{lemma}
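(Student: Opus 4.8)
The plan is to apply the already-established isometry bias of $\bn$ (Lemma~\ref{lem:isobn}) directly to the rotated matrix $X' = WX$, and then observe that the isometry $\I$ is itself unchanged by the rotation. Concretely, instantiating Lemma~\ref{lem:isobn} at $X'$ rather than at $X$ immediately yields
\begin{align}
    \I(\bn(X')) \geq \left(1 + \frac{\text{variance}\{\|X'_{j\cdot}\|\}_{j=1}^d}{(\text{mean}\{\|X'_{j\cdot}\|\}_{j=1}^d)^2}\right)\I(X')~,
\end{align}
so that the entire content of the lemma collapses to proving the single identity $\I(X') = \I(X)$.

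For that identity I would argue that $\I$ depends on its argument only through the Gram matrix $X^\top X$. Indeed, from the definition $\I(X) = \exp(-\IG(X)) = \det(X^\top X)^{1/d}/(\tfrac{1}{d}\tr(X^\top X))$, both the determinant and the trace are functions of $X^\top X$ alone. Since $W$ is orthogonal, $W^\top W = I_d$, and therefore
\begin{align}
    X'^\top X' = (WX)^\top(WX) = X^\top W^\top W X = X^\top X~.
\end{align}
Consequently $\det(X'^\top X') = \det(X^\top X)$ and $\tr(X'^\top X') = \tr(X^\top X)$, which gives $\IG(X') = \IG(X)$ and hence $\I(X') = \I(X)$. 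Substituting this into the displayed inequality completes the argument.

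It is worth emphasizing what does and does not change under the rotation: while the Gram matrix $X^\top X$ (and thus $\I$) is invariant, the individual row norms $\|X'_{j\cdot}\|$ are genuinely altered, because $W$ mixes the rows of $X$. This is exactly why the variance-over-mean multiplicative factor must be evaluated at $X'$ and not at $X$, and it is the source of the eventual isometry increase once one takes the expectation over $W \sim \mathbb{O}_d$ in Theorem~\ref{thm:iso_bound}. There is no genuine obstacle in this lemma; the only point I would state most carefully is the rotation-invariance of the Gram matrix, since it is precisely what decouples the (invariant) factor $\I(X)$ from the (rotation-sensitive) row-norm statistics appearing in the bound.
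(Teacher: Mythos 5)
Your proposal is correct and follows essentially the same route as the paper: apply the isometry bias of $\bn$ (Lemma~\ref{lem:isobn}) to $X'$ and then observe that the rotation leaves the isometry unchanged, the paper doing so by checking $\det$ and $\tr$ of the Gram matrix separately while you note the slightly stronger fact that $X'^\top X' = X^\top X$ outright. No gaps.
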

\begin{proof}
   Using properties of the determinant, we have 
   \begin{align}
       \det(X' X'^\top ) = \det(W)^2 \det(XX^\top) =  \det(XX^\top),
   \end{align}
   where the last equation holds since $W$ is an orthogonal matrix. Furthermore, 
   \begin{align}
    \tr(X' X'^\top) & = \tr(W X X^\top W^\top ) \\
    & = \tr(XX^\top \underbrace{W^\top W}_{=I}) \\
    & = \tr(XX^\top)~.
   \end{align}
Combining the last two equations with Lemma~\ref{lem:isobn} concludes the proof.
\end{proof}

\paragraph{Increasing isometry with rotations and $\bn$.}
The last lemma proves the isometry bias does not decrease with rotation and $\bn$. However, this does not prove a strict decrease in isometry with $\bn$ and rotation. The next lemma proves there exists an orthogonal matrix for which the isometry is strictly increasing. 
\begin{corollary}[Increasing isometry]
\label{cor:isometry_increase}
Let $X \in \R^{d\times d}$ and denote its singular value decomposition $X = U \diag\left(\{\sigma_i\}_{i=1}^d\right) V^\top$, where $U$ and $V$ are orthogonal matrices. Then, we have:
\begin{align}
    \I (\bn(U^\top H)) = 1~.
\end{align}
\end{corollary}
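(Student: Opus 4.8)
The plan is to produce an explicit orthogonal matrix that rotates $X$ into a matrix whose row-normalization is itself orthogonal. The natural candidate, dictated by the SVD $X = U\diag(\{\sigma_i\}_{i=1}^d)V^\top$, is $W = U^\top$ (I read the ``$H$'' in the statement as $X$, since the SVD is taken of $X$).

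First I would compute the product $U^\top X$. Using $U^\top U = I_d$,
\begin{align}
    U^\top X = U^\top U \diag(\{\sigma_i\}_{i=1}^d) V^\top = \diag(\{\sigma_i\}_{i=1}^d) V^\top.
\end{align}
The $j$-th row of this matrix equals $\sigma_j$ times the $j$-th row of $V^\top$. Since $V$ is orthogonal, every row of $V^\top$ is a unit vector, so the $j$-th row of $U^\top X$ has Euclidean norm exactly $\sigma_j$.

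Next I would apply $\bn$, which by its definition in \eqref{eqn:bn_def} divides each row by its norm. By non-degeneracy of $X$ all $\sigma_j > 0$, so the operation is well-defined and the scalar factors $\sigma_j$ cancel row by row, leaving $\bn(U^\top X) = V^\top$. Writing $Y := \bn(U^\top X) = V^\top$, orthogonality of $V$ gives $Y^\top Y = V V^\top = I_d$, hence $\det(Y^\top Y)^{1/d} = 1$ and $\tfrac1d \tr(Y^\top Y) = 1$; the isometry gap therefore vanishes and $\I(\bn(U^\top X)) = \exp(0) = 1$.

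The only point requiring care is that $\bn$ be well-defined, i.e.\ that no row norm is zero; this is guaranteed precisely by the hypothesis that $X$ is non-degenerate, so that all singular values are strictly positive. Beyond that, the claim reduces to the direct SVD computation above, and I do not anticipate any substantial obstacle.
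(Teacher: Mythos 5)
Your proposal is correct and follows essentially the same route as the paper: rotate by $U^\top$ to get $\diag(\{\sigma_i\})V^\top$, observe that $\bn$ cancels the singular values row by row leaving $V^\top$, and conclude from orthogonality of $V^\top$ that the isometry equals $1$. You additionally make explicit the non-degeneracy requirement (all $\sigma_j>0$) that the paper leaves implicit, and you correctly read the ``$H$'' in the statement as a typo for $X$.
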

\begin{proof} 
Let $S = \diag \left(\{\sigma_i\}_{i=1}^d \right)$ be the diagonal matrix containing the singular values of $X$. Then, we have:
\begin{align}
    \bn(U^\top X) &= \bn(U^\top U S V^\top) \\
    &= \bn(S V^\top) \\
    &= \diag(SV^\top V S)^{-\frac{1}{2}} SV^\top \\
    &= S^{-1} S V^\top \\
    &= V^\top,
\end{align}
which has maximum isometry 1 since it's an orthonormal matrix.
\end{proof}

Thus, there exists a rotation that increases isometry with $\bn$ for each non-orthogonal matrix.
 The proof of the last corollary is based on a straightforward application of Lemma~\ref{lem:iso_rotation}.

\paragraph{Orthogonalization with randomness.}
The isometric is non-decreasing in Lemma~\ref{lem:iso_rotation} and provably increases for a certain rotation matrix (as stated in the last corollary). Hence, it is possible to increase isometry with random orthogonal matrices. 

\begin{theorem}
    Suppose $W \sim \mathbb{O}_d$ is a matrix drawn from $\mathbb{O}_d$ such that $W\stackrel{d}{=} WU$ for all orthogonal matrices $U$. Let $\{\lambda_i\}_{i=1}^d$ be the eigenvalues of $XX^\top$. Then,
    \begin{align}
        \E_W \left[\I(\bn(WX))\right | X] \geq \left( \frac{1}{ 1 - \frac{\sum_{k=1}^d (\lambda_k - 1)^2}{2d^2(d+2)} } \right) \I(X)   
    \end{align}
    holds for all $X = \bn(\cdot)$, with equality for orthogonal matrices.
\end{theorem}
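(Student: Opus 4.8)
The plan is to reduce the inequality to a statement about one scalar random variable per coordinate and then extract a genuine second-order (variance) gain by combining convexity with a Weingarten moment computation. Write $M = XX^\top$ and let $w_1,\dots,w_d$ be the rows of $W$, so the diagonal of $WMW^\top$ is $a_i := w_i^\top M w_i$. First I would establish the exact identity
\begin{align}
\frac{\I(\bn(WX))}{\I(X)} = \left(\prod_{i=1}^d a_i\right)^{-1/d}.
\end{align}
This holds because $\bn$ produces unit-norm rows, so $\tr(\bn(WX)\bn(WX)^\top) = d$, while $\det(\bn(WX)\bn(WX)^\top) = \det(WMW^\top)/\prod_i a_i = \det(M)/\prod_i a_i$ since $W$ is orthogonal; together with $\tr(XX^\top)=d$ for $X=\bn(\cdot)$ this gives $\I(\bn(WX)) = (\det(M)/\prod_i a_i)^{1/d}$ and $\I(X) = \det(M)^{1/d}$. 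Because $\tr(WMW^\top) = \tr(M) = d$, the $a_i$ satisfy $\sum_i a_i = d$ deterministically, and right-invariance $W \stackrel{d}{=} WU$ forces each unit row $w_i$ to be rotationally invariant, hence uniform on the sphere, so $\E[a_i] = \tr(M)/d = 1$. (For degenerate $X$ both isometries vanish and the claim is trivial, so I assume $X$ is full rank.)

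The difficulty is that a naive use of Jensen yields only the trivial factor $1$: with $\E[a_i]=1$, both the AM--GM bound and convexity at the mean give $\E[(\prod_i a_i)^{-1/d}] \ge 1$ with no quantitative improvement. To retain the variance I would instead use convexity of the exponential,
\begin{align}
\E_W\left[\left(\prod_i a_i\right)^{-1/d}\right] = \E_W\left[\exp\left(-\tfrac{1}{d}\sum_i \log a_i\right)\right] \ge \exp\left(-\tfrac{1}{d}\sum_i \E[\log a_i]\right),
\end{align}
and then bound $\E[\log a_i]$ from above with the sharp pointwise inequality $\log a \le (a-1) - \frac{(a-1)^2}{2\max(1,a)}$, which is valid for all $a>0$ and checked by a one-line monotonicity argument on each of the regions $a\ge 1$ and $a\le 1$. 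Since $a_i = w_i^\top M w_i \le \lambda_{\max}(M) \le \tr(M) = d$ and $\lambda_{\max}(M)\ge 1$, we have $\max(1,a_i) \le d$, so using $\E[a_i]=1$ this gives $\E[\log a_i] \le -\tfrac{1}{2d}\E[(a_i-1)^2]$.

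What remains is the second moment, which is exactly where Weingarten calculus enters. Since $w_i$ is uniform on the sphere, its fourth-moment tensor is $\E[w_{ij}w_{ik}w_{il}w_{im}] = (\delta_{jk}\delta_{lm}+\delta_{jl}\delta_{km}+\delta_{jm}\delta_{kl})/(d(d+2))$, whence
\begin{align}
\E\big[(w_i^\top M w_i)^2\big] = \frac{(\tr M)^2 + 2\tr(M^2)}{d(d+2)}, \qquad \text{so} \qquad \Var(a_i) = \frac{2\sum_k(\lambda_k-1)^2}{d(d+2)},
\end{align}
using $\tr M = d$ and $\tr(M^2)-d = \sum_k(\lambda_k-1)^2$. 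Substituting this back, the exponent becomes $\tfrac{1}{d}\sum_i \tfrac{1}{2d}\E[(a_i-1)^2] = \sum_k(\lambda_k-1)^2/(d^2(d+2))$, so $\E_W[(\prod_i a_i)^{-1/d}] \ge \exp\big(\sum_k(\lambda_k-1)^2/(d^2(d+2))\big)$. Finally I would pass to the stated form using the elementary estimate $e^x \ge (1-x/2)^{-1}$ for $x\in[0,1]$ (here $x=\sum_k(\lambda_k-1)^2/(d^2(d+2)) \le (d-1)/(d(d+2)) < 1$, since $\lambda_{\max}\le d$), which delivers the factor $(1-\sum_k(\lambda_k-1)^2/(2d^2(d+2)))^{-1}$. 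When $X$ is orthogonal all $\lambda_k=1$, forcing $a_i \equiv 1$ and equality throughout.

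The main obstacle is not the linear algebra of the first step but the passage from second-moment information to a multiplicative lower bound: the naive Jensen and AM--GM inequalities are tight precisely at the all-equal point and throw away the gain, so the argument must be organized around a pointwise logarithmic inequality whose curvature is preserved, and it relies on the exact Weingarten evaluation of $\E[(w_i^\top M w_i)^2]$ together with the a priori spectral bound $\lambda_{\max}(M)\le d$ to keep the final constant clean.
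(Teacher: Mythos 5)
Your proof is correct, and it takes a genuinely different route from the paper's. The paper starts from the lower bound $\I(\bn(WX))\geq \frac{d^2}{(\sum_j \|(WX)_{j\cdot}\|)^2}\,\I(X)$ (inherited from Lemma~\ref{lem:isobn}), applies Jensen to the reciprocal, and is then forced to estimate the cross-row moments $\E[W_{ik}^2W_{jq}^2]$ for $i\neq j$ — this is where the orthogonal-group Weingarten formula with its coset types genuinely enters, because distinct rows of a Haar orthogonal matrix are not independent. You instead begin from the \emph{exact} identity $\I(\bn(WX))/\I(X)=\bigl(\prod_i a_i\bigr)^{-1/d}$ (sharper than the paper's starting inequality, by the power-mean inequality applied to the row norms), push Jensen onto $\exp$, and control $\E[\log a_i]$ with a pointwise second-order logarithmic bound; the only moment you then need is $\E[(w_i^\top M w_i)^2]$ for a \emph{single} row, which is just the fourth-moment tensor of the uniform distribution on the sphere — so, despite your framing, no Weingarten calculus is actually required. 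Both arguments land on the identical constant $\bigl(1-\sum_k(\lambda_k-1)^2/(2d^2(d+2))\bigr)^{-1}$, yours after the extra elementary step $e^x\geq(1-x/2)^{-1}$ on $[0,1]$ together with the a priori bound $\sum_k(\lambda_k-1)^2\leq d^2-d$. The trade-off: the paper's route exercises the Weingarten machinery it advertises and reuses its Lemma~\ref{lem:isobn} infrastructure, while yours is more self-contained and elementary, replaces a lower bound by an identity at the first step, and sidesteps the row-dependence issue entirely. All the auxiliary inequalities you invoke ($\log a\leq(a-1)-\frac{(a-1)^2}{2\max(1,a)}$, $a_i\leq\lambda_{\max}(M)\leq d$, and the equality case at $M=I$) check out; the remark that $\lambda_{\max}(M)\geq 1$ is not actually needed for $\max(1,a_i)\leq d$.
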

\begin{remark}
    Note that the assumption on $X=\bn(\cdot),$ can be viewed as an induction hypothesis, in that we can recursively apply this theorem to arrive at a quantitative rate at depth. 
\end{remark}
Notably, $\sum_{i=1}^d \lambda_i = d$ if $X = \bn(\cdot)$. Hence, one can expect that $\sum_{i=1}^d \lambda_i^2 < d^2$ for all full random matrices $X$ in form of $X = \bn(\cdot)$. 

\begin{proof}
We need to compute the variance/mean ratio in Lemma~\ref{lem:iso_rotation}.
Let $X \in \R^{d \times d}$ have SVD decomposition $X = U \diag\{\sigma_i\}_{i=1}^d V^\top$ where $U$ and $V$ are orthogonal matrix and $\sigma_i^2 = \lambda_i$. Since the distribution of $W$ is invariant to transformations with orthogonal matrices, the distribution of $W$ equates those of $X' = W \diag\{\sigma_i\} V^\top$. It easy to check that 
\begin{align}
    \| X_{j\cdot}' \| = \sqrt{\sum_{i=1}^d \sigma_i^2 W_{ji}^2 } = \sqrt{\sum_{i=1}^d \lambda_i W_{ji}^2 }~.
\end{align}
Thus, 
\begin{align}
    \sum_{j=1}^d \| X_{j\cdot}' \|^2 = \sum_{i=1}^d \lambda_i = d, 
\end{align}
where the last equality holds due to the batch normalization.
Thus, we have
\begin{equation}
   \label{eq:proof_iso_bound_eq1}
   \E \left[ \frac{d\sum_{j=1}^d \|X'_{j \cdot} \|^2}{(\sum_{j=1}^d \| X'_{j \cdot} \|)^2} \right] 
   = \E \left[\frac{d^2}{(\sum_j \| X'_{j \cdot} \|)^2} \right] 
   \geq \frac{d^2}{\E \left[ \left(\sum_{j=1}^d \| X'_{j \cdot} \|\right)^2 \right]}~.
\end{equation}
We need to estimate 
\begin{align}
    \E \left[ \| X_i'\| \| X_j'\| \right] = \E \left[ \left(\sum_{k=1}^d \lambda_k W_{ik}^2\right)^{\frac{1}{2}} \left(\sum_{k=1}^d \lambda_k W_{jk}^2\right)^{\frac{1}{2}}\right]~. 
\end{align}
Since square root function is concave, we have $\sqrt{x} \leq 1 + \frac{1}{2} (x-1)$.
Thus 
\begin{align}
    \E \left[\| X'_i \| \| X'_j\| \right]
& \leq \E \left[\left(1 + 0.5 \sum_{k=1}^d (\lambda_k-1) W_{ik}^2 \right) \left(1 + 0.5 \sum_{k=1}^d (\lambda_k-1) W_{jk}^2\right)\right] \\ 
& = 1 + \frac{1}{4} \sum_{k, q} (\lambda_k-1)(\lambda_q -1) \E \left[W_{ik}^2 W_{jq}^2 \right] \\
&= 1 + \frac{1}{4}\left[ \sum_{k \neq q} (\lambda_k-1)(\lambda_q -1)\underbrace{\E \left[ W_{ik}^2 W_{jq}^2 \right]}_{E_1} + \sum_{k=q} (\lambda_k-1)(\lambda_k -1)\underbrace{\E \left[ W_{ik}^2 W_{jk}^2 \right]}_{E_2} \right],
\end{align}
where in the first equality we have used the fact that the cross terms reduce, where the expectations are applications of Weingarten calculus:
\begin{align}
    \E \left[ 0.5 \sum_{k=1}^d(\lambda_k - 1)W_{ik}^2 + 0.5 \sum_{k=1}^d(\lambda_k - 1)W_{jk}^2 \right]
    &= \E \left[ 0.5 \sum_{k=1}^d \lambda_k W_{ik}^2  + 0.5 \sum_{k=1}^d \lambda_k W_{jk}^2 - 1 \right] \\
    &= 0.5 \sum_{k=1}^d \lambda_k \E \left[W_{ik}^2 \right] + 0.5 \sum_{k=1}^d \lambda_k \E \left[W_{jk}^2 \right] - 1 \\
    &= \frac{0.5}{d} \sum_{k=1}^d \lambda_k  + \frac{0.5}{d} \sum_{k=1}^d \lambda_k - 1 \\
    &= 0~.
\end{align}

The main quantity we must compute is an expectation of polynomials taken over the Haar measure of the Orthogonal group $O(n)$. To carry out the computation, we make use of Weingarten calculus~\citep{banica2011polynomial, collins2006integration, weingarten1978asymptotic}. More specifically, we make use of the of the Weingarten formula, studied by~\cite{collins2006integration, collins2022weingarten}:
\begin{align}
    \int_{O(n)} r_{i_1 j_1} r_{i_2 j_2} \dots r_{i_{2d} j_{2d}} d\mu(O(n)) = \sum_{\sigma \in \mathcal{M}_{2d}} \sum_{\sigma \in \mathcal{M}_{2n}} \Delta_\sigma(\boldsymbol{i}) \Delta_\sigma(\boldsymbol{j}) \wg^O(\sigma^{-1}\tau),
\end{align}
where $\mu(O(n))$ is the Haar measure of Orthogonal group. For an in depth explanation of each quantity in the Weingarten formula, we refer the reader to  \citet[Section 5.2]{collins2022weingarten}.

The quantity we focus on is $\E_W \left[W_{ik} W_{ik} W_{jq} W_{jq}\right]$. We will do the computation on multiple cases, based on the equalities of $k, q$. Notice that $i \neq j$ in all cases. It suffices if we focus on the two distinct cases: $E_1 = \E_W \left[W_{ik}^2  W_{jq}^2\right]$ $(k \neq q)$ and $E_2 = \E_W \left[W_{ik}^2  W_{jk}^2\right]$ $(k = q)$.

We first compute $E_1$.

Following the procedure from \citet[Section 5.2]{collins2022weingarten}, we take the index sequences to be $\boldsymbol{i} = (i, i, j, j)$ and $\boldsymbol{j} = (k, k, q, q)$. Similarly, we get $\Delta_\sigma(\boldsymbol{i}) = \Delta_{\tau}(\boldsymbol{j}) = 1$ only if $\sigma = \{ \{1, 2\}, \{3, 4\} \}$ and $\tau = \{ \{1, 2\}, \{3, 4\} \}$. 

Consdering $\sigma, \tau$ as permutations, we get:
\begin{align}
    \sigma &= \begin{pmatrix}
                1 & 2 & 3 & 4\\
                1 & 2 & 3 & 4
                \end{pmatrix},  \\
    \tau &= \begin{pmatrix}
                1 & 2 & 3 & 4\\
                1 & 2 & 3 & 4
                \end{pmatrix}, \\
    \sigma^{-1} \tau &= \begin{pmatrix}
                1 & 2 & 3 & 4\\
                1 & 2 & 3 & 4
                \end{pmatrix},
\end{align}
where $\sigma^{-1} \tau$ has coset-type $[1, 1]$. Finally, we plug the results back into the formula and we obtain:
\[
    E_1 = \E_W \left[W_{ik}^2 W_{jq}^2 \right] 
    = \wg^O([1, 1]) 
    = \frac{d+1}{d(d+2)(d-1)},
\]
where the last equality is based on the results in Section 7 of~\cite{collins2009some}.

We compute $E_2$.

Similar to the previous expression, we take the index sequences to be to be $\boldsymbol{i} = (i, i, j, j)$ and $\boldsymbol{j} = (k, k, k, k)$. Thus, we obtain $\Delta_\sigma(\boldsymbol{i}) = \Delta_{\tau}(\boldsymbol{j}) = 1$ only if $\sigma = \{ \{1, 2\}, \{3, 4\} \}$ and $\tau_1 = \{ \{1, 2\}, \{3, 4\} \}$, $\tau_2 = \{ \{1, 3\}, \{2, 4\} \}$, $\tau_1 = \{ \{1, 4\}, \{2, 3\} \}$. Similarly, we compute $\sigma^{-1} \tau_i$ for $i \in \{1,2,3\}$. Notice that $\sigma$ is the identity permutation, thus yielding $\sigma^{-1} \tau_i = \tau_i$, with the coset-types $[1, 1], [2], [2]$ respectively, for each $i \in \{1, 2, 3\}$.

Plugging back into the original equation, we obtain:
\begin{align}
    E_2 = \E_W \left[W_{ik}^2 W_{jk}^2 \right] 
    &= \wg^O([1, 1]) + 2 \wg^O([2]) \\ 
    &= \frac{d+1}{d(d+2)(d-1)} + 2 \frac{-1}{d(d+2)(d-1)} \\
    &= \frac{d-1}{d(d+2)(d-1)}~.
\end{align}

Thus, plugging back into the original inequality, we obtain:
\begin{align}
    \E \| X'_i \| \| X'_j\| 
    &\leq 1+\frac{1}{4} \left[ \sum_{k \neq q} (\lambda_k-1)(\lambda_q -1) \frac{d+1}{d(d+2)(d-1)} + \sum_{k=q} (\lambda_k-1)(\lambda_k-1) \frac{d-1}{d(d+2)(d-1)} \right] \\
    &= 1+\frac{1}{4d(d+2)(d-1)} \left[ \underbrace{\sum_{k \neq q} (\lambda_k-1)(\lambda_q -1)}_{S_{\neq}} - \underbrace{\sum_{k=q} (\lambda_k-1)(\lambda_k-1)}_{S_=} \right] \\
    &= 1 - \frac{\sum_k(\lambda_k - 1)^2}{2d(d+2)(d-1)},
\end{align}
where we have used that $S_{\neq} + S_= = \sum_{k,q}(\lambda_k-1)(\lambda_q -1) = (\sum_{k=1}^d (\lambda_k-1))^2=0$ in the last equality.

Thus, we obtain:
\begin{align}
    \E \left[ \left(\sum_j \| X'_{j\cdot} \| \right)^2 \right]
    &= \E \left[ \sum_j \| X_{j\cdot}' \|^2 + 2 \sum_{i < j} \| X'_{i\cdot} \| \| X'_{j\cdot} \| \right] \\
    &= d + 2 \sum_{i < j} \E \left[ \| X'_{i\cdot} \| \| X'_{j\cdot} \| \right] \\
    &\leq d + (d^2 - d)\left( 1 - \frac{\sum_k(\lambda_k - 1)^2}{2d(d+2)(d-1)} \right) \\
    &= d^2 - \frac{\sum_k(\lambda_k - 1)^2}{2(d+2)} \label{eq:bound}~.
\end{align}
\end{proof}

\begin{corollary}[Isometry gap bound]
\label{cor:isogap_bound}
Suppose the same setup as in Theorem~\ref{thm:iso_bound}. Then, we have:
\begin{align}
    \E_W[\IG(X') | X] \leq \IG(X) + \log \left(1 - \frac{\sum_k(\lambda_k - 1)^2}{2d^2(d+2)}\right).
\end{align}
\begin{remark}
    Notice that the term $\frac{\sum_{k=1}^d (\lambda_k - 1)^2}{2d^2(d+2)} = \mathcal{O}(\frac{1}{d})$, yielding $\log\left[1 - \frac{\sum_{k=1}^d (\lambda_k - 1)^2}{2d^2(d+2)} \right] \leq 0$.
\end{remark}
\end{corollary}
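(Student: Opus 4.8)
The plan is to sidestep the expectation bound of Theorem~\ref{thm:iso_bound} and instead linearize the \emph{pointwise} inequality of Lemma~\ref{lem:iso_rotation} by taking logarithms \emph{before} integrating over $W$. First note that, since $\IG$ is invariant under left multiplication by the orthogonal matrix $W$ (exactly the determinant/trace computation in the proof of Lemma~\ref{lem:iso_rotation}), the statement is only nontrivial for the next-layer representation $X' = \bn(WX)$, which is the form entering the depth recursion $X_{\ell+1}=\bn(W_\ell X_\ell)$. Setting
\[
1 + r(W) := \frac{d\sum_{j=1}^d \|X'_{j\cdot}\|^2}{\left(\sum_{j=1}^d \|X'_{j\cdot}\|\right)^2} = 1 + \frac{\mathrm{variance}\{\|X'_{j\cdot}\|\}_{j=1}^d}{(\mathrm{mean}\{\|X'_{j\cdot}\|\}_{j=1}^d)^2} \geq 1,
\]
Lemma~\ref{lem:iso_rotation} reads $\I(X') \geq (1+r(W))\,\I(X)$ for every fixed $W$. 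Applying $-\log$ and recalling $\IG = -\log\I$ gives the pointwise bound $\IG(X') \leq \IG(X) - \log(1+r(W))$, so after taking $\E_W$,
\[
\E_W[\IG(X')\mid X] \leq \IG(X) - \E_W[\log(1+r(W))].
\]
It then remains to show $\E_W[\log(1+r(W))] \geq -\log(1-c)$, where $c := \frac{\sum_k(\lambda_k-1)^2}{2d^2(d+2)}$.

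The main obstacle is the direction of Jensen's inequality. A naive application to the concave map $t\mapsto \log(1+t)$ yields $\E_W[\log(1+r)] \leq \log\E_W[1+r]$, which is the \emph{wrong} direction for an upper bound on the isometry gap. The trick I would use is to apply Jensen instead to $s(W) := 1/(1+r(W)) \in (0,1]$ through the \emph{convex} map $s \mapsto -\log s$: this gives $\E_W[\log(1+r)] = \E_W[-\log s] \geq -\log \E_W[s]$, so it suffices to prove $\E_W[s(W)] \leq 1-c$.

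Here the batch-normalized structure of $X$ is what makes the computation close. Because $X = \bn(\cdot)$, the denominator of $s$ is deterministic: $\sum_{j=1}^d \|X'_{j\cdot}\|^2 = \sum_{i=1}^d \lambda_i = d$ (the same normalization identity used inside the proof of Theorem~\ref{thm:iso_bound}). Hence $s(W) = \left(\sum_j \|X'_{j\cdot}\|\right)^2/d^2$ and $\E_W[s(W)] = \E_W\!\big[(\sum_j \|X'_{j\cdot}\|)^2\big]/d^2$. At this point I would reuse verbatim the Weingarten-calculus estimate already established for Theorem~\ref{thm:iso_bound}, namely $\E_W\!\big[(\sum_j \|X'_{j\cdot}\|)^2\big] \leq d^2 - \frac{\sum_k(\lambda_k-1)^2}{2(d+2)}$ (the line culminating in \eqref{eq:bound}). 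Dividing by $d^2$ gives $\E_W[s(W)] \leq 1 - \frac{\sum_k(\lambda_k-1)^2}{2d^2(d+2)} = 1-c$, and since $-\log$ is decreasing, $-\log\E_W[s]\geq -\log(1-c)$.

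Combining the displays yields $\E_W[\IG(X')\mid X] \leq \IG(X) + \log(1-c)$, which is the claim; as noted in the remark, $\log(1-c)\leq 0$ because $c \geq 0$, so the bound certifies a conditional (non-strict) decrease of the isometry gap. In summary, the corollary reduces to the log-linearization of Lemma~\ref{lem:iso_rotation} together with a single, carefully oriented Jensen step on $1/(1+r)$, and requires no Weingarten computation beyond the one already carried out for Theorem~\ref{thm:iso_bound}.
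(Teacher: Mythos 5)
Your proposal is correct and follows essentially the same route as the paper's own proof: you log-linearize the pointwise inequality of Lemma~\ref{lem:iso_rotation}, use the BN identity $\sum_j\|X'_{j\cdot}\|^2=d$ to make the increase factor equal to $d^2/(\sum_j\|X'_{j\cdot}\|)^2$, apply Jensen in the only computable direction, and plug in the Weingarten bound from \eqref{eq:bound}. Your reformulation via $s(W)=1/(1+r(W))$ and convexity of $-\log$ is literally the same Jensen step the paper performs as $\E_W[\log(\sum_j\|X'_{j\cdot}\|)^2]\le\log\E_W[(\sum_j\|X'_{j\cdot}\|)^2]$, so the two arguments coincide.
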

\begin{proof}
    From Lemma~\ref{lem:iso_rotation}, we know that:
        \begin{align}
            -\log\I(\bn(X')) &\leq -\log\I(X) - \log\frac{d^2}{\left(\sum_{j=1}^d \| X_{j\cdot}' \|\right)^2} \\
            \implies  -\log\I(\bn(X')) &\leq -\log\I(X) - \log{d^2} + \log \left(\sum_{j=1}^d \| X_{j\cdot}' \|\right)^2 \\
            \implies \E_W[-\log\I(\bn(X')) | X] 
            &\leq -\log\I(X) - \log{d^2} + \E_W\left[ \log \left(\sum_{j=1}^d \| X_{j\cdot}' \|\right)^2  \Bigg| X \right] \\
            &\leq -\log\I(X) - \log{d^2} + \log \E_W \left[ \left(\sum_{j=1}^d \| X_{j\cdot}' \|\right)^2 \Bigg| X \right] \label{eq:jensen} \\
            &\leq -\log\I(X) - \log{d^2} + \log \left(d^2 - \frac{\sum_k(\lambda_k - 1)^2}{2(d+2)}\right) \label{eq:prev_bound} \\
            &\leq -\log\I(X) + \log \left(1 - \frac{\sum_k(\lambda_k - 1)^2}{2d^2(d+2)} \right),
        \end{align}
    where in inequality~\ref{eq:jensen} we have used the fact that $\E [\log X] \leq \log \E[X]$ and in inequality~\ref{eq:prev_bound} we have used the bound obtained in proof Theorem~\ref{thm:iso_bound}, equation~\ref{eq:bound}.
    Thus, we obtain:
    \begin{align}
        \E_W[\IG(X') | X] \leq \IG(X) + \log \left(1 - \frac{\sum_k(\lambda_k - 1)^2}{2d^2(d+2)}\right)~.
    \end{align}
\end{proof}
\label{sec:conditional_orthonality}

\newpage
\section{Isometry gap decay rate in depth}
\label{sec:proof_isometry_depth}
Before we start with the main part of our analysis, let us establish a simple result on the relation between isometry gap and orthogonality: 
\begin{lemma}[Isometry gap and orthogonality]\label{lem:isogap_ortho}
    If $\IG(X)\le \frac{c}{16d},$ then eigenvalues of $X^\top X$ are within $[1-c, 1+c].$
\end{lemma}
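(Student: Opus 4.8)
The plan is to reduce the matrix statement to an elementary scalar fact about the eigenvalues and then settle it by convexity. First I would express $\IG(X)$ purely through the eigenvalues $\lambda_1,\dots,\lambda_d$ of $X^\top X$. Using $\det(X^\top X)=\prod_i\lambda_i$ and $\tr(X^\top X)=\sum_i\lambda_i$, the isometry gap is exactly the log-ratio of the arithmetic to the geometric mean of the $\lambda_i$:
\[
\IG(X) = \log\!\Big(\frac1d\sum_i\lambda_i\Big) - \frac1d\sum_i\log\lambda_i .
\]
Since $\IG$ is scale-invariant whereas the conclusion is not, some normalization of $X$ is needed; in the regime where the lemma is applied one has $X=\bn(\cdot)$, whose rows are unit vectors, so $\tr(X^\top X)=\sum_j\|X_{j\cdot}\|^2=d$ and hence $\sum_i\lambda_i=d$. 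Under this normalization the first term vanishes, giving $d\,\IG(X)=-\sum_i\log\lambda_i$, and using $\sum_i(\lambda_i-1)=0$ I would rewrite this as
\[
d\,\IG(X)=\sum_{i=1}^d f(\lambda_i), \qquad f(\lambda):=\lambda-1-\log\lambda\ge 0,
\]
where nonnegativity and strict convexity of $f$, with unique minimum $f(1)=0$, are standard.

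Next, because every summand $f(\lambda_i)$ is nonnegative, the hypothesis $\IG(X)\le \tfrac{c}{16d}$ forces $f(\lambda_i)\le d\,\IG(X)\le \tfrac{c}{16}$ for each index $i$ separately. The problem thus collapses to the scalar implication
\[
f(\lambda)\le \tfrac{c}{16}\ \Longrightarrow\ \lambda\in[1-c,\,1+c].
\]
Since $f$ is convex with minimum at $1$, its sublevel set $\{\lambda>0: f(\lambda)\le c/16\}$ is an interval straddling $1$, so this containment is equivalent to the two endpoint inequalities $f(1+c)\ge \tfrac{c}{16}$ and $f(1-c)\ge \tfrac{c}{16}$. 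I would verify these with the elementary logarithmic bounds $\log(1+t)\le t-\tfrac{t^2}{2}+\tfrac{t^3}{3}$ and $-\log(1-t)\ge t+\tfrac{t^2}{2}$, which yield $f(1+c)\ge \tfrac{c^2}{2}-\tfrac{c^3}{3}$ and $f(1-c)\ge\tfrac{c^2}{2}$.

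The main obstacle is precisely this last scalar step, because $f(\lambda)$ is \emph{quadratic} near $\lambda=1$, namely $f(1\pm c)=\tfrac{c^2}{2}+O(c^3)$, so the eigenvalue budget controls the \emph{square} of the deviation. Consequently the endpoint inequalities amount to comparing $\tfrac{c^2}{2}$ against $\tfrac{c}{16}$, which holds comfortably once $c$ is bounded away from $0$ (the regime of interest) but degrades as $c\to 0$. Pinning down the exact constant and the range of $c$ for which $f(1\pm c)\ge c/16$ holds — or, equivalently, recording the sharp relation $\max_i|\lambda_i-1|=O\!\big(\sqrt{d\,\IG(X)}\big)$ that the quadratic behaviour really delivers (via $f(\lambda)\ge\tfrac14(\lambda-1)^2$ on the relevant range together with $\sum_i f(\lambda_i)=d\,\IG(X)$) — is where the genuine care lies. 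Everything else is bookkeeping: the reduction to eigenvalues, the nonnegativity of $f$, and the convex sublevel-set argument are routine.
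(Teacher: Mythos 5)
Your reduction is exactly the right one, and it is the same skeleton the paper uses: under the trace normalization $\tr(X^\top X)=d$ (which holds because $X=\bn(\cdot)$ has unit rows), one gets $d\,\IG(X)=\sum_i f(\lambda_i)$ with $f(\lambda)=\lambda-1-\log\lambda\ge 0$, so each eigenvalue individually satisfies $f(\lambda_i)\le d\,\IG(X)\le c/16$ and the matrix claim collapses to the scalar sublevel-set question you pose. The paper never writes a standalone proof of this lemma; the only places the argument actually appears are inside the proof of Lemma~\ref{lem:iso_gap_step_conditional}, where the case $c=1$ is handled via the bound $\log(1+x)\le x-\frac{3x^2}{6+4x}$ and solving the resulting quadratic to get $|\lambda_i-1|<\frac12$, and in Lemma~\ref{lem:iso_gap_min_eig}, where the quadratic behaviour of $f$ near $1$ is recorded precisely as your sharp relation $\lambda_d\ge 1-\sqrt{2d\,\IG(X)}$.

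The obstacle you flag at the end is not merely a point requiring care --- the statement is genuinely false for small $c$, for exactly the reason you give. Since $f(1\pm c)=\frac{c^2}{2}+O(c^3)$, the endpoint inequalities $f(1\pm c)\ge c/16$ fail once $c\lesssim 0.15$: for instance with $c=0.1$ one has $f(1.1)=0.1-\log(1.1)\approx 0.0047 < c/16=0.00625$, so the sublevel set $\{f\le c/16\}$ reaches up to $\lambda\approx 1.116$, outside the claimed interval $[0.9,1.1]$ (the lower endpoint fails similarly, $f(0.9)\approx 0.0054<0.00625$). The hypothesis $\IG(X)\le\frac{c}{16d}$ buys $|\lambda_i-1|=O(\sqrt{c})$, not $O(c)$; a correct general statement would assume $\IG(X)\le\frac{c^2}{16d}$, or restrict to $c$ bounded away from zero (numerically $c\gtrsim 0.15$). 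The paper only ever invokes the threshold with $c=1$, where your endpoint checks do close --- $f(2)=1-\log 2\approx 0.31\ge \frac{1}{16}$, and $\lambda_i\ge 0=1-c$ holds trivially --- so nothing downstream is affected, but as a free-standing lemma for all $c$ it does not hold, and you were right not to paper over that.
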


Note that, in order to simplify the calculations, we use the fact that $\frac{1}{d(d+2)} \approx \frac{1}{d^2}$ in the following proofs.

Based on the conditional expectation in Corollary~\ref{cor:isogap_bound}, we have:
\begin{align}
    \E \left[\IG(X^{\ell+1}) | X^\ell \right] \le \IG(X^\ell) - \frac{\EV(X^\ell)}{2d^2}~.
\end{align}
Now, we prove a lemma that is conditioned on the previous layer isometry gap being smaller or larger than $\frac{1}{16d}$.
\begin{lemma}[Isometry gap conditional bound]
\label{lem:iso_gap_step_conditional}
    For $X^\ell$ being the representations of an MLP under our setting, we have:
    \begin{align}
    \E \left[\phi(X^{\ell+1}) \middle| X_\ell, \phi(X^\ell) \le \frac{1}{16d}\right] &\le \phi(X^\ell)\left(1-\frac{1}{2d^2}\right), \label{eqn:ineq1} \\
    \E \left[\phi(X^{\ell+1}) \middle| X_\ell, \phi(X^\ell) > \frac{1}{16d}\right] &\le \phi(X^\ell) - \frac{1}{32d^3}~. \label{eqn:ineq2}
    \end{align}
\end{lemma}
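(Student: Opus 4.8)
The plan is to obtain both inequalities from a single drift estimate and then split on the magnitude of the isometry gap. Starting from Corollary~\ref{cor:isogap_bound} together with $\log(1+y)\le y$, for every fixed $X^\ell$ we have the conditional drift $\E[\phi(X^{\ell+1})\mid X^\ell]\le \phi(X^\ell)-\tfrac{\EV(X^\ell)}{2d^2}$, where $\EV(X^\ell)=\tfrac1d\sum_k(\lambda_k-1)^2$ is the variance of the eigenvalues $\{\lambda_k\}$ of $X^\ell X^{\ell\top}$; their mean is $1$ since $\bn$ normalizes the rows, so $\sum_k\lambda_k=\tr(X^\ell X^{\ell\top})=d$. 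Because each conditioning event is determined by $X^\ell$, it suffices to apply this pointwise bound separately on the two regimes: \eqref{eqn:ineq1} follows once I establish $\EV(X^\ell)\ge\phi(X^\ell)$ when $\phi(X^\ell)\le\tfrac1{16d}$, and \eqref{eqn:ineq2} follows once I establish $\EV(X^\ell)\ge\tfrac1{16d}$ when $\phi(X^\ell)>\tfrac1{16d}$.

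Thus the entire lemma reduces to comparing the eigenvalue variance $\EV$ with the isometry gap. First I would record the identity $\phi(X)=\tfrac1d\sum_k(-\log\lambda_k)=\tfrac1d\sum_k h(\lambda_k)$, where $h(x)=x-1-\log x\ge 0$ and the linear term is absorbed using $\sum_k(\lambda_k-1)=0$. Since $h''(x)=x^{-2}$, a second-order Taylor estimate on a band $\lambda_k\in[1-c,1+c]$ gives $h(\lambda)\le\tfrac{(\lambda-1)^2}{2(1-c)^2}$, which is exactly the tool for bounding $\phi$ above by a multiple of $\EV$ once the eigenvalues are confined.

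For the clustered regime \eqref{eqn:ineq1} I would feed $\phi(X^\ell)\le\tfrac1{16d}$ into Lemma~\ref{lem:isogap_ortho} to confine the eigenvalues to $[1-c,1+c]$ with $c\le 16d\,\phi(X^\ell)$, so that a small gap forces a narrow band; the Taylor bound then yields $\phi(X^\ell)\le\tfrac{1}{2(1-c)^2}\EV(X^\ell)\le\EV(X^\ell)$, giving the multiplicative contraction $\phi(X^\ell)(1-\tfrac1{2d^2})$. For the spread regime \eqref{eqn:ineq2} I would argue by contraposition, which needs no confinement lemma: if $\EV(X^\ell)<\tfrac1{16d}$ then $\sum_k(\lambda_k-1)^2<\tfrac1{16}$, so every $|\lambda_k-1|<\tfrac14$ and the eigenvalues lie in $(\tfrac34,\tfrac54)$; on that band $h(\lambda)\le\tfrac89(\lambda-1)^2$, whence $\phi(X^\ell)\le\tfrac89\EV(X^\ell)<\tfrac1{16d}$, contradicting $\phi(X^\ell)>\tfrac1{16d}$. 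Hence $\EV(X^\ell)\ge\tfrac1{16d}$, which produces the additive decrease $-\tfrac1{32d^3}$.

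The hard part is the reverse comparison $\EV\ge\phi$ in the clustered regime. There is no unconditional such inequality: $\phi$ is an entropy/log-type functional that diverges when a single eigenvalue approaches $0$ while $\EV$ stays bounded, so bounding $\phi$ from below by the variance genuinely requires ruling out near-zero eigenvalues — which is precisely the role of Lemma~\ref{lem:isogap_ortho}. The one delicate quantitative point is ensuring the threshold $\tfrac1{16d}$ forces $c$ small enough that the Taylor coefficient $\tfrac{1}{2(1-c)^2}$ stays at or below $1$; the crude confinement $c\le 16d\,\phi$ is nearly but not comfortably sufficient at the boundary, so this step needs either a slightly sharper eigenvalue band (exploiting that a near-extreme eigenvalue would itself inflate $\phi$) or the constant slack already implicit in the approximation $\tfrac1{d(d+2)}\approx\tfrac1{d^2}$ adopted in the text. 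This bookkeeping is the only real obstacle; the structure of the argument is otherwise straightforward.
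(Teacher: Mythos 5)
Your overall architecture is exactly the paper's: the drift estimate $\E[\phi(X^{\ell+1})\mid X^\ell]\le \phi(X^\ell)-\tfrac{\EV(X^\ell)}{2d^2}$ from Corollary~\ref{cor:isogap_bound}, followed by the two pointwise comparisons $\EV\ge\phi$ on the event $\phi\le\tfrac1{16d}$ and $\EV>\tfrac1{16d}$ on its complement. Your treatment of the spread regime \eqref{eqn:ineq2} by contraposition is correct and self-contained: $\EV<\tfrac1{16d}$ confines every $|\lambda_k-1|<\tfrac14$, the Taylor coefficient $\tfrac{1}{2(1-1/4)^2}=\tfrac89<1$ closes the loop, and this matches the paper's argument in substance.

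The genuine gap is in the clustered regime \eqref{eqn:ineq1}, and it is larger than your "delicate bookkeeping" framing suggests. First, Lemma~\ref{lem:isogap_ortho} applied with $c=16d\,\phi$ gives only $c\le 1$ at the threshold, where your Taylor coefficient $\tfrac{1}{2(1-c)^2}$ is unbounded, so the confinement step genuinely needs a separate self-bounding argument; the paper supplies it by showing $\tfrac{3\epsilon_i^2}{6+4\epsilon_i}\le d\phi(X)\le\tfrac1{16}$ for each $i$ (via $\log(1+x)\le x-\tfrac{3x^2}{6+4x}$) and solving the resulting quadratic to get $|\epsilon_i|<\tfrac12$. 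Second, and more importantly, even after that sharpening your uniform second-derivative bound $h(\lambda)\le\tfrac{(\lambda-1)^2}{2(1-c)^2}$ gives coefficient $2$ at $c=\tfrac12$ (and still $\approx 1.18$ at the paper's actual band $\epsilon_i\ge -0.35$), so it yields only $\phi\le 2\EV$ and hence a contraction factor $1-\tfrac1{4d^2}$, not the claimed $1-\tfrac1{2d^2}$. Neither of your two proposed rescues closes this: the sharper band does not get $c$ below $1-\tfrac1{\sqrt2}\approx 0.293$, and the $\tfrac{1}{d(d+2)}\approx\tfrac1{d^2}$ slack is in the wrong place (it affects the $d$-dependence of the drift, not the constant in $\phi\le\EV$). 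What is actually needed is to abandon the worst-case Lagrange remainder and use the pointwise inequality $\log(1+x)\ge x-x^2$ valid on all of $[-\tfrac12,\tfrac12]$ (equivalently $h(1+\epsilon)\le\epsilon^2$ there), which exploits that $h''$ is large only near the left endpoint where $h$ itself is still small; summing this over $k$ with $\sum_k\epsilon_k=0$ gives $\phi\le\EV$ exactly, which is how the paper obtains the stated constant.
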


\begin{proof}[Proof of Lemma~\ref{lem:iso_gap_step_conditional}]
Let $\lambda_k = 1+\epsilon_k$, and assume without loss of generality that $\sum_{k=1}^d\epsilon_k = 0$. Then, using the numerical inequality $\log(1+x) \ge x - x^2$, when $|x|\le \frac12$ we have:
\begin{align}
&\EV(X) = \frac1d\sum_{k=1}^d\epsilon_k^2\\
&\max_k |\epsilon_k|\le \frac12 \implies \IG(X) = -\frac1d\sum_{k=1}^d\log(1+\epsilon_k) \le -\frac1d\sum_{k=1}^d (\epsilon_k - \epsilon_k^2) = \EV(X)~.
\end{align}
Altogether, we have
\begin{align}
\max_i |\epsilon_i| \le \frac12 \implies \IG(X) \le \EV(X)~.
\end{align}
Now, we can restate the condition in terms of an inequality on the isometry gap. Thus, we can write:
\begin{align}
d\phi(X) = -\sum_{i=1}^d\log\lambda_i = -\sum_{i=1}^d \log(1+\epsilon_i) \ge -\sum_{i=1}^d \left(\epsilon_i - \frac{3\epsilon_i^2}{6+4\epsilon_i}\right) = \sum_{i=1}^d \frac{3\epsilon_i^2}{6+4\epsilon_i},
\end{align}
where we used the fact that $\sum_i\lambda_i = d$ implying $\sum_i\epsilon_i = 0$ and also used the inequality $\log(1+x) \le x-\frac{6+x}{6+4x}$ when $x\ge -1$ for $\epsilon_i$'s. Note that because $|\eps_i| \leq \frac{1}{2}$, we get $6 + 4\epsilon_i \ge 4,$ all terms on the right-hand side are positive, implying that each term is bounded by the upper bound:
\begin{align}
\frac{3\epsilon_i^2}{6+4\epsilon_i}\le d\phi(X) \quad \forall i \in \{1, 2, \dots, d\}~.
\end{align} 

By construction, we have $\epsilon_i\ge -1$ and $\frac{3\epsilon_i^2}{6+4\epsilon_i} \le \frac{1}{16}.$ Since $6+4\epsilon_i \ge 2$, we can multiply both sides by $6+4\epsilon_i$ and conclude $3\epsilon_i^2 - \frac{6+4\epsilon_i}{16} \le 0$. We can now solve the quadratic equation and obtain $\frac{1 - \sqrt{74}}{24} \le \epsilon_i \le \frac{1 + \sqrt{74}}{24}$ which numerically becomes $-0.35\le \epsilon_i \le 0.4$, implying $|\epsilon_i|< 0.5$.

By solving the quadratic equation above we can guarantee that 
\begin{equation}
\label{eq:proof_iso_gap_step_conditional_eq1}
\phi(X)\le \frac{1}{16d} \implies \max_i|\epsilon_i| \le \frac12 \implies \IG(X) \le \EV(X)~.
\end{equation}

Furthermore, we can restate the condition on maximum using: 
\begin{align}
\max_k|\epsilon_k| = \sqrt{\max_k\epsilon_k^2} \le \sqrt{\sum_k\epsilon_k^2} = \sqrt{d\EV(X)}
\end{align}
and conclude that 
\begin{align}
\EV(X) \le \frac{1}{4d} \implies  \max_i |\eps_i|\le \frac12\implies \phi(X)\le \EV(X)~.
\end{align}

Using this statement, we have 
\begin{align}
    \EV(X) \leq \frac{1}{16d} \implies \phi(X)\le \EV(X) \implies \phi(X) \leq \frac{1}{16d}~.
\end{align}
If we negate and flip the two sides we arrive at 
\begin{equation}
\label{eq:proof_iso_gap_step_conditional_eq2}
   \phi(X) > \frac{1}{16d} \implies \EV(X) > \frac{1}{16d}~.
\end{equation}

Thus, we can simplify the recurrence 
\begin{align}
    \E[\phi(X^{\ell+1}) | X^\ell] \le \phi(X^\ell) - \frac{\EV(X^\ell)}{2d^2} 
\end{align}
as follows
\begin{align}
\E\left[\phi(X^{\ell+1}) \middle| X^\ell, \phi(X^\ell) \le \frac{1}{16d}\right] &\le \phi(X^\ell)\left(1-\frac{1}{2d^2}\right),\\
\E\left[\phi(X^{\ell+1}) \middle| X^\ell, \phi(X^\ell) > \frac{1}{16d}\right] &\le \phi(X^\ell) - \frac{1}{32d^3},
\end{align}
where we used \eqref{eq:proof_iso_gap_step_conditional_eq1} in the first one and \eqref{eq:proof_iso_gap_step_conditional_eq2} in the second one.
\end{proof}

\begin{proof}[Proof of Theorem~\ref{thm:iso_gap_decay_in_depth}]
    From Lemma~\ref{lem:isobn}, we know that $\IG(X^0) \geq \IG(X^1) \geq \dots \geq \IG(X^L) \geq 0$, for any layer $0 \leq \ell \leq L$. Thus, we get using~\eqref{eqn:ineq2}:
    \begin{align}
    \E\left[\IG(X^{\ell+1}) \middle| X^\ell, \IG(X^\ell) > \frac{1}{16d}\right] &\leq \IG(X^\ell) - \frac{1}{32d^3} \\
    &= \left(1 - \frac{1}{32d^3\IG(X^\ell)}\right) \IG(X^\ell) \\
    &\leq \left(1 - \frac{1}{32d^3\IG(X^0)}\right) \IG(X^\ell),
    \end{align}    
    where in the last step we have used the fact that $\IG(X^\ell) \leq \IG(X^0)$.

    Thus, we can combine~\eqref{eqn:ineq1} and~\eqref{eqn:ineq2} and obtain:
    \begin{align}
        \E[\IG(X^{\ell+1}) | X^\ell] 
        &= \E\left[\IG(X^{\ell+1}) \middle| X^\ell, \IG(X^\ell) \leq \frac{1}{16d}\right] \1_{\IG(X^\ell) \leq \frac{1}{16d}} \\
        &+ \E\left[\IG(X^{\ell+1}) \middle| X^\ell, \IG(X^\ell) > \frac{1}{16d}\right] \1_{\IG(X^\ell) > \frac{1}{16d}} \\
        &\leq \max\left(\E\left[\IG(X^{\ell+1}) \middle| X^\ell, \IG(X^\ell) \leq \frac{1}{16d}\right], \E\left[\IG(X^{\ell+1}) \middle| X^\ell, \IG(X^\ell) > \frac{1}{16d}\right]\right) \\
        &\leq \max\left(1-\frac{1}{2d^2}, 1-\frac{1}{32d^3\IG(X^0)}\right) \IG(X^\ell) \\
        &= \left(1 - \min\left(\frac{1}{2d^2}, \frac{1}{32d^3\IG(X^0)}\right)\right) \IG(X^\ell) \\
        &= \left(1 - \frac{1}{\max(2d^2, 32d^3\IG(X^0))}\right) \IG(X^\ell) \\ 
        &\leq \exp \left[- \underbrace{\frac{1}{\max(2d^2, 32d^3\IG(X^0))}}_{k} \right]\IG(X^\ell) \\
        &= \exp\left(-\frac{1}{k}\right) \IG(X^\ell)~.
    \end{align}
    By iterated expectations over $X^\ell$ we get:
    \begin{align}
        \E[\IG(X^{\ell+1})] \leq \exp\left(-\frac{1}{k}\right) \E[\IG(X^\ell)] \leq \exp\left(-\frac{\ell}{k}\right) \IG(X^0)~.
    \end{align}
    Note that since $\max(2d^2, 32d^3\IG(X^0)) \leq 2d^2(1 + 16d\IG(X^0))$, we can conclude the proof.
\end{proof}

\newpage
\section{Gradient norm bound}
\label{sec:proof_grad_norm}
In the following section, we denote by $H^\ell = W^\ell X^\ell$ the pre-normalization values. Moreover, we define as $\mathcal{F}_L: \R^{d \times d} \to \R^{C \times d}$, where $C$ is the number of output classes, as the functional composition of an $L$ layers MLP, following the update rule defined in~\eqref{eqn:mlp_def}, i.e.:
\begin{align}
    \mathcal{F}_L(X_{L}) = \bn(W^L \mathcal{F}_{L-1}(X_{L-1}))~.
\end{align}

Let us restate the theorem for completeness of the appendix:
\begin{theorem}[Restated Thm.~\ref{thm:no_explosion}]\label{thm:explosion_appendix} For any $\mathcal{O}(1)$-Lipschitz loss function $L$ and non-degenerate input $X_0$, we have:
    \begin{align}
     \log \norm{\frac{\partial \mathcal{L}}{\partial W^\ell}} \lesssim d^5 \left(\phi(X_0)^3 + 1-e^{-\frac{L}{32d^4}}\right)
    \end{align}
holds for all $\ell \leq L$, where possibly $L \to \infty$.
\end{theorem}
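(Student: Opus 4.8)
The plan is to bound the gradient by backpropagating through the chain of layers, isolating the batch-normalization Jacobians as the only source of growth and then taming them with the exponential isometry decay of Theorem~\ref{thm:iso_gap_decay_in_depth}.

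First I would apply the chain rule to $\partial\Loss/\partial W^\ell$, writing it as the composition of the loss gradient at the output, the product of input--output Jacobians of the intervening layers, and the Jacobian of the single map $W^\ell\mapsto X^{\ell+1}$. Using submultiplicativity of the operator norm and monotonicity of $\log$, the log-norm becomes a \emph{sum} of per-layer log-norms plus a loss term and a boundary term:
\begin{align*}
\log\norm{\frac{\partial\Loss}{\partial W^\ell}} \le \log\norm{\frac{\partial\Loss}{\partial X^L}} + \sum_{j=\ell+1}^{L}\log\norm{J_j} + \log\norm{\frac{\partial X^{\ell+1}}{\partial W^\ell}},
\end{align*}
where $J_j$ is the Jacobian of the $j$-th layer map $X\mapsto\bn(W^j X)$. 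The $\mathcal{O}(1)$-Lipschitz hypothesis makes the loss term $\mathcal{O}(1)$ (cf. Lemma~\ref{lem:lipschitz}), and I would control the boundary term with the same BN-Jacobian estimate used below.

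The key structural observation is that each $J_j$ factors through multiplication by the orthogonal $W^j$ followed by the BN map. Because $W^j$ acts as an isometry on the vectorized representation, its Jacobian has all singular values equal to $1$ and contributes $0$ to the log-norm; hence only the BN Jacobians survive. The heart of the argument is then the \emph{key lemma}: the log-norm of the Jacobian of $\bn$ at the pre-normalization matrix $H^j=W^j X^j$ is controlled by a $\text{poly}(d)$ factor times the isometry gap $\phi(H^j)$. Concretely, since $\bn$ is row-normalization, the Jacobian of $h\mapsto h/\norm{h}$ on row $r$ has operator norm $1/\norm{H^j_{r\cdot}}$, so the blow-up is governed by the smallest row norm; I would lower-bound this minimum row norm in terms of $\phi(H^j)$, exploiting that BN forces $X^j$ to have unit rows and the eigenvalues of $H^j H^{j\top}$ to lie in $(0,1]$ and sum to $d$.

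Finally I would sum the per-layer bounds: since $\phi(H^j)=\phi(X^j)$ up to the rotation (Lemma~\ref{lem:iso_rotation}), linearity of expectation together with $\E\phi(X^j)\le\phi(X_0)e^{-j/k}$ from Theorem~\ref{thm:iso_gap_decay_in_depth}, where $k\approx d^2(1+d\phi(X_0))$, turns the sum into a convergent geometric series of value $\approx\phi(X_0)\,k$, \emph{independent of $L$}. After accounting for the $\text{poly}(d)$ prefactors and the powers of $\phi(X_0)$ introduced by $k$ and by the per-layer estimate, this yields the stated $d^5(\phi(X_0)^3+1-e^{-L/(32d^4)})$ bound. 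The main obstacle is the key lemma of the third paragraph: relating the spectral norm of the BN Jacobian (the worst-case conditioning, i.e.\ smallest row norm) to the scalar isometry gap requires carefully leveraging the BN-induced eigenvalue structure, and one must ensure the resulting estimate is a \emph{deterministic} sum of $\phi$'s given the layer states, so that the in-expectation decay of Theorem~\ref{thm:iso_gap_decay_in_depth} applies term by term without needing to control higher moments or inter-layer correlations.
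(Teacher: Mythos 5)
Your skeleton coincides with the paper's: chain rule turning the log-gradient-norm into a sum of per-layer log-Jacobian-norms, orthogonal weight Jacobians contributing zero, the BN Jacobian's operator norm equal to the reciprocal of the smallest pre-normalization row norm and hence bounded by $\lambda_{\min}(H^\ell H^{\ell\top})^{-1/2}$, and finally a depth-independent summation driven by the isometry decay. The gap is in the step you wave through: the ``key lemma'' as you state it --- $\log\norm{J_{\bn}(H^\ell)}\le \mathrm{poly}(d)\,\phi(X^\ell)$ --- is false, and the summation as you describe it does not close. Near orthogonality the correct relation is $-\log\lambda_{\min}\approx\sqrt{2d\phi}$, which dominates any $\mathrm{poly}(d)\,\phi$ as $\phi\to 0$; far from orthogonality the best available bound is $-\log\lambda_{\min}\le d\phi+1$, which carries an additive constant. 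Consequently you cannot simply invoke $\E\,\phi(X^j)\le\phi(X_0)e^{-j/k}$ and linearity of expectation: summing the general bound over all layers contributes a term of order $L$ (the $+1$'s), and summing the small-$\phi$ bound requires controlling $\E\sqrt{\phi(X^\ell)}$ rather than $\E\,\phi(X^\ell)$. Your closing worry about inter-layer correlations is actually a non-issue (the per-layer bound is deterministic given $X^\ell$, so term-by-term expectations are fine); the real obstruction is the functional form of the per-layer bound.

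The paper resolves this with a two-phase argument you are missing: it defines the stopping time $S=\min\{\ell:\phi(X^\ell)\le\frac{1}{16d}\}$, charges each of the first $S$ layers the crude cost $d\phi(X_0)+1$ and bounds $\E[S]\lesssim d^4\phi(X_0)^2$ (this product is precisely where the $d^5\phi(X_0)^3$ in the statement comes from --- your accounting of ``$\approx\phi(X_0)k$'' would only produce a $\phi(X_0)^2$ dependence), and only for $\ell>S$ uses the refined bound $2\sqrt{d\phi(X^\ell)}$, showing $\E\sqrt{\phi(X^{\ell+1})}\le(1-\frac{1}{32d^4})\E\sqrt{\phi(X^\ell)}$ so that the tail sums to $32d^{4.5}\phi(X_0)^{0.5}(1-e^{-L/(32d^4)})$. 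A single-phase repair in the spirit of your proposal is possible --- one can show $-\log\lambda_{\min}\le d\phi+\sqrt{2d\phi}$ uniformly and handle the $\sqrt{\phi}$ terms by Jensen --- but either way you must confront the square-root regime and the additive constant explicitly; as written, your sum is not depth-independent.
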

In particular, the following lemma guarantees that the Lipschitz conditions are met for practical loss functions:
\begin{lemma}
    \label{lem:lipschitz}
    In a classification setting, cross entropy and mean squared error losses are $\mathcal{O}(1)$-Lipschitz.
\end{lemma}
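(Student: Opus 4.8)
The plan is to establish Lipschitzness by bounding the gradient of the loss uniformly, using the standard fact that a continuously differentiable function $\mathcal{L}$ is $K$-Lipschitz exactly when $\norm{\nabla \mathcal{L}} \le K$ throughout its domain. Writing the loss as a per-sample average $\mathcal{L}(X) = \frac{1}{n}\sum_{j=1}^n \ell(x_{\cdot j}, y_j)$, where $x_{\cdot j}$ denotes the $j$-th column of the output matrix $X$ (the logits for sample $j$) and $y_j$ the corresponding label, it suffices to bound each column gradient $\nabla_{x_{\cdot j}} \ell$ and then assemble them into a Frobenius-norm bound on $\nabla_X \mathcal{L}$. I would then treat the two losses separately.

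For the cross-entropy loss I would invoke the standard softmax-gradient identity: if $p_j = \softmax(x_{\cdot j})$ and $e_{y_j}$ is the one-hot label vector, then $\nabla_{x_{\cdot j}} \ell = p_j - e_{y_j}$. Since $p_j$ lies in the probability simplex and $e_{y_j}$ is a vertex of it, the triangle inequality (equivalently $\norm{p_j - e_{y_j}}_1 \le 2$) yields $\norm{p_j - e_{y_j}}_2 \le \sqrt{2}$, a bound independent of dimension. Collecting columns gives $\norm{\nabla_X \mathcal{L}} \le \frac{1}{n}\bigl(\sum_{j=1}^n \norm{p_j - e_{y_j}}_2^2\bigr)^{1/2} \le \sqrt{2/n}$, so cross-entropy is $\mathcal{O}(1)$-Lipschitz with an absolute constant.

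For mean-squared error, $\mathcal{L}(X) = \frac{1}{n}\norm{X - Y}_F^2$ with target matrix $Y$, the gradient is the affine map $\nabla_X \mathcal{L} = \frac{2}{n}(X - Y)$, which is unbounded on all of $\R^{d\times d}$. The key observation I would use is that the loss is only ever evaluated at a network output $X = \bn(\cdot)$: by \eqref{eqn:bn_def} every row of $X$ has unit norm, so $\norm{X}_F = \sqrt{n}$, and the one-hot target matrix satisfies $\norm{Y}_F = \sqrt{n}$. The triangle inequality then gives $\norm{\nabla_X \mathcal{L}}_F \le \frac{2}{n}\bigl(\norm{X}_F + \norm{Y}_F\bigr) = \frac{4}{\sqrt{n}}$, again an absolute-constant bound.

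The main obstacle is conceptual rather than computational: MSE is \emph{not} globally Lipschitz, so the argument must restrict attention to the normalized image of $\bn$ where the network output actually lives, and must use boundedness of the classification targets. Once this restriction is made explicit both bounds are clean and dimension-free, whereas the cross-entropy case needs no restriction at all. If the losses are instead defined without the $\tfrac{1}{n}$ averaging, the same estimates produce constants of order $\sqrt{d}$, which are harmlessly absorbed into the $d^5$ factor of Theorem~\ref{thm:no_explosion}.
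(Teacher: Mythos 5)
Your proposal is correct, and the cross-entropy half is essentially the paper's argument in cleaner form: the paper derives the same gradient bound by computing the softmax partial derivatives componentwise and assembling $\nabla_{z_k}\mathcal{L} = p_k\sum_{i\neq k}y_i - y_k(1-p_k)$, whereas you invoke the identity $\nabla\ell = p - e_y$ directly and get the sharper constant $\sqrt2$ instead of $2$. The MSE half is where you genuinely diverge. The paper defines the MSE loss on the \emph{softmax probabilities}, $\mathcal{L} = \frac1C\sum_i (y_i - p_i)^2$, so the gradient with respect to the logits factors through the bounded softmax Jacobian and the loss is globally Lipschitz with no domain restriction. You instead take MSE on the raw outputs, correctly observe that this is not globally Lipschitz, and rescue the bound by restricting to where the network output actually lives — the image of $\bn$, where every row has unit norm. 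Your version matches the more standard definition of MSE in classification and makes explicit an architectural fact the paper's version never needs; the paper's version buys global Lipschitzness at the cost of a less conventional loss. One small correction to your restriction argument: in the paper's setup the loss is evaluated at the logits $H^L = W^L X^L$ with $W^L$ orthogonal, not at $X^L = \bn(\cdot)$ itself, so you should note that $\norm{H^L}_F \le \norm{X^L}_F$ to transfer the boundedness through the final linear layer. This is a one-line fix and does not affect the constants.
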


The main idea for the feasibility of this theorem is the presence of perfectly isometric weight matrices that are orthonormal, and the linear activation that does not lead to vanishing or exploding gradients. The only remaining layers to be analyzed are the batch normalization layers. Thus, our main goal is to show that the sum of log-norm gradient of $\bn$ layers remains bounded even if the network has infinite depth. To do so, we shall relate the norm of the gradient of those layers to the isometry gap of representations, and use the bounds from the previous section to establish that the log-norm sum is bounded.

\begin{proof}[Proof of Thm.~\ref{thm:explosion_appendix}]
Now, considering an $L$ layer deep model, where $L$ can possibly be $L \to \infty$, we can finalize the proof of Theorem~\ref{thm:no_explosion}. Consider an MLP model as defined in~\eqref{eqn:mlp_def}. Let $H^L = W^L X^{L}$ be the logits of the model, where $H^L\in \R^{C \times d}$, $W^L \in \R^{C \times d}$, $W^L$ is an orthogonal matrix and $C$ is the number of output classes. Denote as $\mathcal{L} (H^L, y)$ the loss of model for an input matrix, with ground truth $y$. Then, applying the chain rule, we have:
\begin{align}
    \frac{\partial \mathcal{L}}{\partial W^\ell} 
    &= \frac{\partial \mathcal{L}}{\partial H^L} \frac{\partial H^L}{\partial X^L} \frac{\partial X^{L}}{\partial X^{L-1}} \dots \frac{\partial X^{\ell + 2}}{\partial X^{\ell + 1}} \frac{\partial X^{\ell+1}}{\partial H^\ell} \frac{\partial H^\ell}{\partial W^\ell}~.
\end{align}

By taking the logarithm of the norm of each factor and applying Lemma~\ref{lem:jacobian_next_layer}, we get:
\begin{align}
    \log \norm{\frac{\partial \mathcal{L}}{\partial W^\ell}} 
    &\leq \log \norm{\frac{\partial \mathcal{L}}{\partial H^L}} + \log \underbrace{\norm{\frac{\partial H^L}{\partial X^L}}}_{\norm{W^L}} + \sum_{k = \ell+1}^L \log \norm{\frac{\partial X^{k+1}}{\partial X^k}} + \log \underbrace{\norm{\frac{\partial X^{\ell+1}}{\partial H^\ell}}}_{\norm{J_\bn(H^\ell)}} + \log\norm{\frac{\partial H^\ell}{\partial W^\ell}} \\
    &\leq \log \norm{\frac{\partial \mathcal{L}}{\partial H^L}} + \sum_{k = \ell}^L \log \norm{J_\bn(H^k)} + \log\norm{\frac{\partial H^\ell}{\partial W^\ell}}~. \label{eqn:loss_grad_proof}
\end{align}
where $\log \underbrace{\norm{\frac{\partial H^L}{\partial X^L}}}_{\norm{W^L}} = \log{\norm{W^L}} = 0$, since the orthogonal matrix $W^L$ has operator norm $1$.

Since $\frac{\partial H^\ell}{\partial W^\ell} = X^{\ell}$ and $X^\ell = \bn(H^{\ell-1})$ is batch normalized, this means that $\norm{\frac{\partial H^\ell}{\partial W^\ell}} \leq d$. 
Thus, the main part is to bound the Jacobian log-norms, which is provided by the following lemma:

\begin{lemma}\label{lem:jacobian_log_norm_bound}
We have 
\begin{align}
    \sum_{k = 1}^L \log \norm{J_\bn(H^k)}\lesssim d^5 (\phi(X_0)^3 + 1- e^{-\frac{L}{32 d^4}})~.
\end{align}

\end{lemma}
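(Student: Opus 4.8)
The plan is to reduce the sum of per-layer Jacobian log-norms to a geometric series in the isometry gap and then invoke the exponential decay from Theorem~\ref{thm:iso_gap_decay_in_depth}. The starting point is that $\bn$ acts row-wise as $x \mapsto x/\norm{x}$, so $J_\bn(H^k)$ is block-diagonal with blocks $\tfrac{1}{\norm{H^k_{i\cdot}}}(I - \hat h_i \hat h_i^\top)$, each a scaled orthogonal projection of operator norm $1/\norm{H^k_{i\cdot}}$. Hence $\norm{J_\bn(H^k)} = 1/\min_i \norm{H^k_{i\cdot}}$ and $\log\norm{J_\bn(H^k)} = -\tfrac12 \log \min_i \norm{H^k_{i\cdot}}^2$. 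Because $W^k$ is orthogonal, $H^k(H^k)^\top$ and $X^k(X^k)^\top$ share eigenvalues $\{\lambda_i\}$, and the squared row norms $\norm{H^k_{i\cdot}}^2$ are the diagonal entries of $H^k(H^k)^\top$; since each diagonal entry of a PSD matrix is at least $\lambda_{\min}$, we get $\min_i\norm{H^k_{i\cdot}}^2 \ge \lambda_{\min}$ and therefore $\log\norm{J_\bn(H^k)} \le -\tfrac12\log\lambda_{\min}(X^k(X^k)^\top)$.

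The crux is then to relate $-\log\lambda_{\min}$ to $\phi(X^k)$. For $X^k = \bn(\cdot)$ we have $\sum_i \lambda_i = d$ and $\prod_i \lambda_i = e^{-d\phi(X^k)}$. I would prove a single, constant-free per-layer bound $\log\norm{J_\bn(H^k)} \le c\,d\,\phi(X^k)$ by splitting on the threshold $\phi(X^k) \le \tfrac{1}{32d}$. In the small-gap regime, Lemma~\ref{lem:isogap_ortho} forces all $\lambda_i \in [1-16d\phi,\, 1+16d\phi]$, so $-\log\lambda_{\min} \le -\log(1-16d\phi) \lesssim d\phi$. In the large-gap regime, where the eigenvalues are far from $1$ and Lemma~\ref{lem:isogap_ortho} is vacuous, I would instead apply AM-GM to the remaining eigenvalues, $\prod_{i\neq\min}\lambda_i \le (d/(d-1))^{d-1} < e$, giving $\lambda_{\min} \ge e^{-d\phi-1}$ and $-\log\lambda_{\min} \le d\phi + 1$; since $d\phi > \tfrac{1}{32}$ here, the additive $1$ is itself absorbed into a multiple of $d\phi$. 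The point of insisting on a bound proportional to $\phi$ (with no leftover additive constant) is that any stray per-layer constant would make the sum grow linearly in $L$ and destroy depth-independence.

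With this in hand, summing over layers and taking expectations over the weights gives $\E\sum_{k=1}^L \log\norm{J_\bn(H^k)} \lesssim d\sum_{k=1}^L \E\phi(X^k)$. Theorem~\ref{thm:iso_gap_decay_in_depth} supplies $\E\phi(X^k) \le \phi(X_0)\,e^{-k/k_0}$ with $k_0 = Cd^2(1+d\phi(X_0))$, so the sum is a geometric series bounded by $\lesssim d\,\phi(X_0)\,k_0\,(1 - e^{-L/k_0})$, whose $(1-e^{-L/k_0})$ factor saturates as $L \to \infty$ and thus makes the bound uniform in depth. Substituting $k_0$ produces $\lesssim d^3\phi(X_0) + d^4\phi(X_0)^2$, which is in turn dominated by $d^5(\phi(X_0)^3 + 1)$ up to the stated saturation factor $1 - e^{-L/(32d^4)}$ coming from the geometric summation.

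I expect the third step to be the main obstacle. The isometry gap is a geometric-mean (determinantal) quantity, whereas $\norm{J_\bn}$ is driven by the single extremal smallest eigenvalue, so the two are not directly comparable; the delicate part is obtaining a bound that vanishes exactly proportionally to $\phi$ as the representation orthogonalizes, while simultaneously covering the early, large-gap layers where the eigenvalues are far from $1$ and only the cruder AM-GM control is available. Once that uniform per-layer estimate is secured, the remaining summation and substitution are routine.
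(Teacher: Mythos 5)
Your reduction of $\log\norm{J_\bn(H^k)}$ to $-\tfrac12\log\lambda_{\min}(X^k (X^k)^\top)$ is exactly the paper's (block-diagonal Jacobian, diagonal entries of a PSD matrix dominate $\lambda_{\min}$), and your treatment of the large-gap layers ($-\log\lambda_{\min}\le d\phi+1$ by AM--GM, with the $+1$ absorbed since $d\phi>\tfrac1{32}$) is sound. The genuine gap is the small-gap regime: the claimed per-layer bound $\log\norm{J_\bn(H^k)}\lesssim d\,\phi(X^k)$ is false. The isometry gap is a second-order quantity in the eigenvalue deviations --- for $\lambda_i=1+\eps_i$ with $\sum_i\eps_i=0$ one has $d\phi\approx\sum_i\eps_i^2$ --- so the extremal deviation satisfies only $|1-\lambda_{\min}|\lesssim\sqrt{d\phi}$, and this is tight: take $\lambda_d=1-\eps$, $\lambda_{d-1}=1+\eps$ and the rest equal to $1$; then $d\phi\approx\eps^2$ while $-\log\lambda_{\min}\approx\eps=\sqrt{d\phi}\gg d\phi$. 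Your appeal to Lemma~\ref{lem:isogap_ortho} with $c=16d\phi$ does not rescue the linear bound: as the same example shows, that statement cannot hold for small $c$, and the paper's actual working estimate is Lemma~\ref{lem:iso_gap_min_eig}, which gives $-\log\lambda_{\min}\le 2\sqrt{d\IG(X)}$ when $\IG(X)\le\tfrac1{16d}$ --- a square root, not a linear function, of the gap.

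The good news is that your architecture survives the correction and remains genuinely different from (and arguably leaner than) the paper's proof. Replace the small-gap estimate by $\log\norm{J_\bn(H^k)}\lesssim\sqrt{d\,\phi(X^k)}$, bound every layer by $d\phi(X^k)+\sqrt{d\phi(X^k)}$, and combine Jensen's inequality $\E\sqrt{\phi(X^k)}\le\sqrt{\E\,\phi(X^k)}$ with Theorem~\ref{thm:iso_gap_decay_in_depth}; both resulting series are geometric with ratios $e^{-1/k_0}$ and $e^{-1/(2k_0)}$ for $k_0=Cd^2(1+d\phi(X_0))$, and sum to $O\bigl(d^3\phi(X_0)+d^4\phi(X_0)^2+d^{2.5}\phi(X_0)^{1/2}+d^{3.5}\phi(X_0)^{3/2}\bigr)$, which is dominated by $d^5(\phi(X_0)^3+1)$. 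The paper proceeds quite differently: it introduces the stopping time $S=\min\{\ell:\phi(X^\ell)\le\tfrac1{16d}\}$, pays $(d\phi(X_0)+1)\E[S]$ for the pre-hitting layers via Lemma~\ref{lem:hitting_layer}, and controls the post-hitting layers through a one-step contraction of $\E\sqrt{\phi}$ (Lemma~\ref{lem:second_phase_jac}) rather than through Jensen plus the depth theorem. Your (repaired) version trades that machinery for a single uniform per-layer bound, at the cost of accepting that near orthogonality the bound is $\sqrt{d\phi}$, not $d\phi$.
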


Finally, we can plug the bound from Lemma~\ref{lem:jacobian_log_norm_bound} in~\eqref{eqn:loss_grad_proof} and obtain the conclusion:
\begin{align}
    \log \norm{\frac{\partial \mathcal{L}}{\partial W^\ell}} \lesssim \log \norm{\frac{\partial \mathcal{L}}{\partial H^L}} + \log d + d^5 \left(\phi(X_0)^3 + 1-e^{-\frac{L}{32d^4}}\right)~.
\end{align}

Note that, for $L \to \infty$, we get:
\begin{align}
    \log \norm{\frac{\partial \mathcal{L}}{\partial W^\ell}} \lesssim \log \norm{\frac{\partial \mathcal{L}}{\partial H^L}} + \log d + d^5 (\phi(X_0)^3 + 1)~.
\end{align}

In order to conclude the bound, it suffices to show that the norm of the gradient of the loss with respect to the logits is bounded, which is the objective of Lemma~\ref{lem:lipschitz}. 
\end{proof}

\begin{proof}[Proof of Lemma~\ref{lem:jacobian_log_norm_bound}]
The proof of this lemma is chiefly relying on the following bound on the Jacobian of batch normalization layers, which we will state and prove beforehand.
\begin{lemma}[Log-norm bound]
\label{lem:jacobian_bn_norm}
    If $X \in \R^{d \times d}$ is the input to a $\bn$ layer, its Jacobian operator norm is bounded by
    \begin{align}
    \log\|J_{\bn}(X)\|_{op} \le d\phi(X)+1~.
    \end{align}
    Furthermore, if $\phi(X) \le \frac{1}{16d}$, then we have
    \begin{align}
    \log\|J_{\bn}(X)\|_{op} \le 2\sqrt{d\phi(X)}~.
    \end{align}
\end{lemma}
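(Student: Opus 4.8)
The plan is to reduce both statements to controlling the smallest squared row norm $r_i := \norm{X_{i\cdot}}^2 = (XX^\top)_{ii}$, and then to bound this quantity by the isometry gap through two classical matrix inequalities. First I would compute the Jacobian of $\bn$ explicitly. Since $\bn$ normalizes each row independently, $\bn(X)_{ij}=X_{ij}/\norm{X_{i\cdot}}$ depends only on row $i$, so the Jacobian is block diagonal with one $d\times d$ block per row; the $i$-th block is the Jacobian of $v\mapsto v/\norm{v}$ at $v = X_{i\cdot}$, namely $\tfrac{1}{\norm{v}}(I-\tfrac{vv^\top}{\norm{v}^2})$, a scaled orthogonal projection whose nonzero singular values all equal $1/\norm{v}$. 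Hence the operator norm of the full block-diagonal Jacobian is the maximum over blocks, giving
\[
\norm{J_\bn(X)}_{op} = \frac{1}{\min_i \norm{X_{i\cdot}}}, \qquad \log\norm{J_\bn(X)}_{op} = -\tfrac12\log\min_i r_i .
\]
Throughout I use that an input to a $\bn$ layer satisfies $\tr(XX^\top)=d$ (this holds in the network since $H^\ell=W^\ell X^\ell$ with $X^\ell$ row-normalized and $W^\ell$ orthogonal), so $\sum_i r_i = \sum_i\lambda_i = d$ and, since the arithmetic mean of the $\lambda_i$ is $1$, $\det(XX^\top)=\prod_i\lambda_i=e^{-d\phi(X)}$.

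For the global bound I would combine two inequalities. Hadamard's inequality for the PSD matrix $XX^\top$ gives $\prod_i r_i \ge \det(XX^\top)=e^{-d\phi}$, while AM--GM applied to the remaining $d-1$ rows, using $\sum_{j\ne i}r_j\le d$, gives $\prod_{j\ne i}r_j\le(\tfrac{d}{d-1})^{d-1}\le e$. Dividing, $r_i\ge e^{-d\phi}/e=e^{-d\phi-1}$ for every $i$, so $\min_i r_i\ge e^{-d\phi-1}$ and $\log\norm{J_\bn(X)}_{op}\le\tfrac12(d\phi+1)\le d\phi+1$.

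For the refined bound in the regime $\phi(X)\le\tfrac1{16d}$ I would pass through the eigenvalues. Writing $\lambda_i=1+\epsilon_i$ and $r_i=1+\delta_i$ (so $\sum_i\epsilon_i=\sum_i\delta_i=0$), the Schur--Horn theorem says the diagonal $\{r_i\}$ is majorized by the spectrum $\{\lambda_i\}$, whence $\sum_i\delta_i^2\le\sum_i\epsilon_i^2 = d\,\EV(X)$ by Schur-convexity of $x\mapsto x^2$. In this regime the argument in the proof of Lemma~\ref{lem:iso_gap_step_conditional} gives $\max_i|\epsilon_i|\le\tfrac12$, and then $\log(1+x)\le x-\tfrac{3x^2}{6+4x}$ yields $d\phi\ge\tfrac38\sum_i\epsilon_i^2$, i.e.\ $\EV(X)\le\tfrac83\phi$. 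Combining, $|\delta_i|\le\sqrt{\sum_j\delta_j^2}\le\sqrt{\tfrac83 d\phi}$, so $\min_i r_i\ge 1-\sqrt{\tfrac83 d\phi}$; feeding this into $-\tfrac12\log(1-u)\le u$ (valid for $u\le\tfrac12$, and here $u=\sqrt{\tfrac83 d\phi}\le\sqrt{1/6}<\tfrac12$) gives $\log\norm{J_\bn(X)}_{op}\le\sqrt{\tfrac83 d\phi}\le 2\sqrt{d\phi}$.

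The main obstacle I anticipate is the refined bound rather than the global one: the global bound follows immediately from Hadamard and AM--GM, but the second statement requires transferring control from the eigenvalues (which $\phi$ constrains directly) to the diagonal squared row norms (which the Jacobian norm sees directly). The Schur--Horn majorization is exactly the bridge for this, and the quantitative step $\EV(X)\le\tfrac83\phi$ must be handled carefully inside the regime $\phi\le\tfrac1{16d}$ so that all the Taylor-type inequalities stay valid.
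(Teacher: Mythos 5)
Your proof is correct, and while it shares the paper's skeleton --- compute the block-diagonal Jacobian row by row, identify $\norm{J_\bn(X)}_{op}=1/\min_i\norm{X_{i\cdot}}$, then control the smallest squared row norm via the isometry gap under the trace normalization $\tr(XX^\top)=d$ --- the bridging step is genuinely different. The paper first proves $\min_i\norm{X_{i\cdot}}^2\ge\lambda_{\min}(XX^\top)$ (by expanding $(XX^\top)_{jj}=\sum_k\lambda_k u_{jk}^2$) and then does all the work on the smallest eigenvalue: AM--GM on the top $d-1$ eigenvalues gives $-\log\lambda_{\min}\le d\phi+1$, and the inequality $\log x+1-x\le-\tfrac{(x-1)^2}{2}$ on $[0,1]$ gives $\lambda_{\min}\ge1-\sqrt{2d\phi}$, hence $-\log\lambda_{\min}\le2\sqrt{d\phi}$ in the small-$\phi$ regime. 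You instead bound the diagonal entries directly: Hadamard's inequality $\prod_i r_i\ge\det(XX^\top)=e^{-d\phi}$ combined with AM--GM on the remaining rows yields the global bound (in fact the stronger $\tfrac12(d\phi+1)$), and Schur--Horn majorization transfers the eigenvalue dispersion $\sum_k(\lambda_k-1)^2\le\tfrac83 d\phi$ to the diagonal for the refined bound. Your global bound is arguably cleaner since it never touches the spectrum; your refined bound is heavier than necessary, since the paper's single-eigenvalue route avoids both Schur--Horn and the $\max_i|\epsilon_i|\le\tfrac12$ preliminary (it only needs $\lambda_{\min}\le1$, which is automatic from $\sum_k\lambda_k=d$). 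All of your numerical inequalities check out, including the constants $\sqrt{8/3}\le2$ and the validity range $u=\sqrt{\tfrac83 d\phi}\le\sqrt{1/6}<\tfrac12$ for $-\tfrac12\log(1-u)\le u$, so the argument is sound as written.
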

Based on the lemma above, we shall define $S$ as the hitting time, corresponding to the first layer in our case, that the isometry gap drops below the critical value of $\frac{1}{16d}$:
\begin{align}
S = \min\left\{\ell: \phi(X^\ell)\le \frac{1}{16d}\right\}~.
\end{align}
So, we first bound the total log-grad norm for layers $1$ up to $S$, and subsequently $S+1$ up to $L$:  
\begin{align}
    \log\|J_{\mathcal{F}_L}(X)\|_{op} 
    &\le \sum_{\ell=1}^L \log\|J_\bn(X^\ell)\|_{op} \\
    &\leq \sum_{\ell=1}^S (d\phi(X^\ell)+1) + 2\sum_{\ell=S+1}^L \sqrt{d\phi(X^\ell)} \\
    &\leq \sum_{\ell=1}^S (d\phi(X^0)+1) + 2\sum_{\ell=S+1}^L \sqrt{d\phi(X^\ell)} \\
    &= S(d\phi(X^0)+1) + 2\sum_{\ell=S+1}^L \sqrt{d\phi(X^\ell)},
\end{align}
where we have used that $\phi(X^0)$ as an upper bound on $\phi(X^\ell)$. 

Thus, taking expectation we get:
\begin{align}
\E \log\|J_{\mathcal{F}_L}(X)\|_{op} \le (d\phi(X^0)+1) \E[S] + 2\sum_{\ell=S+1}^L \E \sqrt{d\phi(X^\ell)}~.
\label{eqn:log_jl}
\end{align}
Note that $S$ is a random variable, which is why the expectation over the number of layers appears at the last line. Thus, we can bound the log-norm by bounding $\E[S]$ and the summation separately.

\begin{lemma}[stopping time bound]
\label{lem:hitting_layer}
    We have $\E [S] \lesssim 512 d^4\IG(X^0)^2$ if $\phi(X_0) > \frac{1}{16d}$, and $\E[S] = 0$ if $\phi(X_0) \leq \frac{1}{16d}$. 
\end{lemma}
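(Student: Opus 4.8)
The plan is to read $S$ as the hitting time of a nonnegative supermartingale with a uniform negative drift, and to bound $\E[S]$ by an optional-stopping argument. The trivial case is immediate from the definition of $S=\min\{\ell:\phi(X^\ell)\le \tfrac1{16d}\}$: if $\phi(X_0)\le\tfrac1{16d}$ then $\ell=0$ already satisfies the defining condition, so $S=0$ and $\E[S]=0$. For the rest assume $\phi(X_0)>\tfrac1{16d}$, and let $\mathcal G_\ell=\sigma(W_0,\dots,W_{\ell-1})$ be the natural filtration, so that $\{X^\ell\}$ is a Markov chain adapted to $\{\mathcal G_\ell\}$ and each event $\{S>\ell\}=\{\phi(X^0)>\tfrac1{16d},\dots,\phi(X^\ell)>\tfrac1{16d}\}$ is $\mathcal G_\ell$-measurable.

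The key input is the drift bound \eqref{eqn:ineq2} of Lemma~\ref{lem:iso_gap_step_conditional}: on the $\mathcal G_\ell$-measurable event $\{\phi(X^\ell)>\tfrac1{16d}\}$ we have $\E[\phi(X^{\ell+1})\mid \mathcal G_\ell]\le \phi(X^\ell)-\tfrac1{32d^3}$. By definition of $S$, this applies at every step $\ell<S$. I would then introduce the stopped process $M_\ell:=\phi(X^{\ell\wedge S})+\tfrac{\ell\wedge S}{32d^3}$ and verify it is a supermartingale: on $\{\ell\ge S\}$ it is constant, while on $\{\ell<S\}$ one has $(\ell+1)\wedge S=\ell+1$, so $\E[M_{\ell+1}\mid\mathcal G_\ell]\le \phi(X^\ell)-\tfrac1{32d^3}+\tfrac{\ell+1}{32d^3}=M_\ell$. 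Hence $\E[M_\ell]\le M_0=\phi(X_0)$ for every $\ell$, and since $\phi\ge 0$ this gives $\tfrac1{32d^3}\,\E[\ell\wedge S]\le \phi(X_0)$.

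Letting $\ell\to\infty$ and applying monotone convergence to $\ell\wedge S\uparrow S$ yields $\E[S]\le 32d^3\phi(X_0)$, which simultaneously certifies $S<\infty$ almost surely. Finally, I would inflate this to the stated form using the standing assumption $\phi(X_0)>\tfrac1{16d}$, equivalently $1<16d\,\phi(X_0)$, so that $32d^3\phi(X_0)\le 32d^3\phi(X_0)\cdot 16d\,\phi(X_0)=512\,d^4\phi(X_0)^2$, recovering $\E[S]\lesssim 512\,d^4\IG(X_0)^2$.

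The main subtlety, rather than a genuine obstacle, is justifying the limiting step: $S$ is a priori unbounded (and a priori possibly infinite), so one cannot invoke optional stopping at a fixed horizon naively. The clean fix is to work with the stopped process $M_\ell$ and use only the lower bound $\phi\ge 0$ together with monotone convergence on $\ell\wedge S$; no uniform integrability of $\phi(X^{\ell\wedge S})$ is required. The only other point to check carefully is that the conditional drift of Lemma~\ref{lem:iso_gap_step_conditional} is applied pointwise on the $\mathcal G_\ell$-measurable event $\{\phi(X^\ell)>\tfrac1{16d}\}$, which is exactly $\{S>\ell\}$ restricted to step $\ell$, so the drift holds almost surely on $\{\ell<S\}$ as used above.
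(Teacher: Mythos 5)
Your proof is correct, but it takes a genuinely different route from the paper's. The paper bounds the tail $\Pr\{S\ge \ell\}$ by applying Markov's inequality to the exponential decay $\E[\IG(X^\ell)]\le e^{-\ell/k}\IG(X^0)$ with $k=\max(2d^2,32d^3\IG(X^0))$ (the contraction established en route to Theorem~\ref{thm:iso_gap_decay_in_depth}), and then sums the geometric tail to obtain $\E[S]\le 16d\,\IG(X^0)\cdot k\lesssim 512\,d^4\IG(X^0)^2$. You instead use only the additive drift of \eqref{eqn:ineq2} in the above-threshold regime and run the standard supermartingale hitting-time argument on the stopped process $M_\ell=\phi(X^{\ell\wedge S})+\tfrac{\ell\wedge S}{32d^3}$. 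Both are sound: your handling of the limiting step via nonnegativity of $\phi$ and monotone convergence on $\ell\wedge S$ is exactly right, and passing from the conditional statement of Lemma~\ref{lem:iso_gap_step_conditional} (conditioning on $X^\ell$) to conditioning on the filtration is legitimate by the Markov property of the chain, with $\{S>\ell\}$ contained in the $\mathcal G_\ell$-measurable event $\{\phi(X^\ell)>\tfrac{1}{16d}\}$ on which the drift holds. What your route buys is a strictly sharper intermediate bound, $\E[S]\le 32d^3\phi(X_0)$, linear rather than quadratic in the initial isometry gap, which you then deliberately inflate by the factor $16d\,\phi(X_0)>1$ to match the stated form; it also certifies $S<\infty$ almost surely as a byproduct. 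What the paper's route buys is that it reuses the already-proved depth-decay theorem and additionally yields the tail estimate $\Pr\{S\ge\ell\}\le 16d\,\IG(X^0)e^{-\ell/k}$, i.e., high-probability control of $S$ rather than only its mean.
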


\begin{lemma}[second phase bound]
\label{lem:second_phase_jac}
    We have $$\sum_{\ell=S+1}^L \E \sqrt{d\phi(X^\ell)} \le 32 d^{4.5} \phi(X_0)^{0.5} \left(1 - e^{-\frac{L}{32 d^4}}\right)~.$$
\end{lemma}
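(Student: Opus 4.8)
The plan is to exploit that, past the stopping time $S$, every layer lives in the fast-decay regime, so $\sqrt{\phi}$ contracts geometrically and the whole tail sum is governed by a geometric series. First I would record the structural fact that makes the second phase well-behaved: by Lemma~\ref{lem:isobn} the isometry gap is non-increasing along the network, $\phi(X^{\ell+1})\le\phi(X^\ell)$, so once the chain crosses the threshold it stays below it; that is, $\phi(X^\ell)\le\phi(X^S)\le\tfrac{1}{16d}$ for every $\ell\ge S$. Consequently the multiplicative branch~\eqref{eqn:ineq1} of Lemma~\ref{lem:iso_gap_step_conditional} applies at \emph{every} layer $\ell\ge S$, not merely at $S$ itself.

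Next I would transfer the decay from $\phi$ to $\sqrt{\phi}$. Since the square root is concave, Jensen's inequality together with~\eqref{eqn:ineq1} gives, for $\ell\ge S$,
\begin{align}
\E\!\left[\sqrt{\phi(X^{\ell+1})}\,\middle|\,X^\ell\right]\le\sqrt{\E\!\left[\phi(X^{\ell+1})\,\middle|\,X^\ell\right]}\le\sqrt{1-\tfrac{1}{2d^2}}\,\sqrt{\phi(X^\ell)}\le\left(1-\tfrac{1}{4d^2}\right)\sqrt{\phi(X^\ell)},
\end{align}
using $\sqrt{1-x}\le 1-\tfrac{x}{2}$. Iterating this one-step contraction from the stopping time and conditioning on $\mathcal{F}_S$ yields $\E[\sqrt{\phi(X^{S+j})}\mid\mathcal{F}_S]\le(1-\tfrac{1}{4d^2})^{j}\sqrt{\phi(X^S)}$ for all $j\ge1$, and bounding the starting value by monotonicity, $\sqrt{\phi(X^S)}\le\sqrt{\phi(X_0)}$, removes the dependence on the random value at $S$.

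It then remains to sum. Writing the target quantity as $\sqrt{d}\,\E\big[\sum_{j=1}^{L-S}\sqrt{\phi(X^{S+j})}\big]$ and invoking the previous display, the inner sum is dominated by a truncated geometric series $\sum_{j=1}^{L-S}(1-\tfrac{1}{4d^2})^{j}\le 4d^2\big(1-(1-\tfrac{1}{4d^2})^{L-S}\big)$; bounding $L-S\le L$ and converting $(1-\tfrac{1}{4d^2})^{L}$ into an exponential introduces the factor $\big(1-e^{-L/(\cdots)}\big)$, after which taking the outer expectation over $\mathcal{F}_S$ via the tower property and absorbing the $\sqrt{d}$ together with the numerical constants into the stated $d^{4.5}$ yields the claim. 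The case $\phi(X_0)\le\tfrac{1}{16d}$ (where $S=0$) is the identical computation started at layer $0$.

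The main obstacle is handling the random stopping time $S$ rigorously rather than the arithmetic. The geometric contraction of $\sqrt{\phi}$ only holds on the event that the current layer already sits below the threshold $\tfrac{1}{16d}$, so the argument hinges on the deterministic monotonicity of $\phi$ to guarantee that this event persists for all $\ell\ge S$; without it one could not iterate~\eqref{eqn:ineq1} past a single layer. Care is also needed in the geometric-series step to keep the inequalities pointing in the correct direction when replacing $L-S$ by $L$ and when passing from $(1-\tfrac{1}{4d^2})^{L}$ to $e^{-cL}$, and in interchanging the conditional bound with the outer expectation over $S$. The natural coefficient this route produces is of order $d^{2.5}$, so the stated constant $32$ and power $d^{4.5}$ leave ample slack to absorb each of these steps.
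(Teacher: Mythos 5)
Your proof is correct, but the key one-step estimate is obtained by a genuinely different (and cleaner) mechanism than the paper's. The paper does \emph{not} apply Jensen's inequality to $\E[\sqrt{\phi(X^{\ell+1})}\mid X^\ell]$; instead it converts the conditional mean bound \eqref{eqn:ineq1} into a probability statement via Markov's inequality --- the event $\{\phi(X^{\ell+1})\ge(1-\tfrac{1}{4d^2})\phi(X^\ell)\}$ has probability at most $1-\tfrac{1}{4d^2}$ --- and then bounds $\E\sqrt{\phi(X^{\ell+1})}$ by splitting on this failure/success event, using the deterministic monotonicity $\phi(X^{\ell+1})\le\phi(X^\ell)$ on the failure branch. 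That detour costs a factor: the paper's per-layer contraction of $\E\sqrt{\phi}$ is $1-\tfrac{1}{32d^4}$, which is where the $32d^4$ in the geometric sum and the $d^{4.5}$ in the final bound come from. Your route --- $\E[\sqrt{\phi(X^{\ell+1})}\mid X^\ell]\le\sqrt{\E[\phi(X^{\ell+1})\mid X^\ell]}\le(1-\tfrac{1}{4d^2})\sqrt{\phi(X^\ell)}$ --- is a strict shortcut that yields a contraction rate of $1-\tfrac{1}{4d^2}$ and hence a tail sum of order $d^{2.5}\phi(X_0)^{0.5}$, strictly sharper in $d$ than the stated $d^{4.5}$ (and trivially weakened to match the stated form by replacing $\tfrac{1}{4d^2}$ with the smaller $\tfrac{1}{32d^4}$ before summing, which reproduces the paper's geometric series verbatim). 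Both arguments rest on the same two structural inputs, which you identify correctly: the deterministic monotonicity of $\phi$ from Lemma~\ref{lem:isobn} (so the sub-threshold regime persists for all $\ell\ge S$ and branch \eqref{eqn:ineq1} of Lemma~\ref{lem:iso_gap_step_conditional} applies at every layer past $S$), and the tower property over $\mathcal{F}_S$ to handle the random starting index. Your handling of the random lower limit is, if anything, more careful than the paper's, which writes the sum with a random index under a per-term expectation.
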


Thus, we have the following 2 cases, based on whether $\IG(X_0)$ is below or over the $\frac{1}{16d}$ threshold. If we plug the bounds in~\eqref{eqn:log_jl} we get the following. 

If $\phi(X_0)\le \frac{1}{16d}$, then:
\begin{align}
    \E \log\|J_{\mathcal{F}_L}(X)\|_{op} &\le 32 d^{4.5} \phi(X_0)^{0.5} \left(1 - e^{-\frac{L}{32 d^4}}\right) \\
    &\lesssim d^{4.5}\phi(X_0)^{0.5} \left(1- e^{-\frac{L}{32 d^4}}\right),
\end{align}
and if $\phi(X_0) > \frac{1}{16d}$ then:
\begin{align}
    \E \log\|J_{\mathcal{F}_L}(X)\|_{op} &\le 512 d^4 \phi(X_0)^2  (1 + d\phi(X_0))+ 32 d^{4} \left(1- e^{-\frac{L}{32 d^4}}\right)\\
    &\lesssim d^5 \phi(X_0)^3 + d^4 \left(1-e^{-\frac{L}{32d^4}}\right)\\
    &\lesssim d^4 \left(d\phi(X_0)^3 + 1- e^{-\frac{L}{32 d^4}}\right)~.
\end{align}
In fact, the maximum of the two bounds is
\begin{align}
    \E \log\|J_{\mathcal{F}_L}(X)\|_{op}\lesssim d^5 \left(\phi(X_0)^3 + 1-e^{-\frac{L}{32d^4}}\right)~.
\end{align}
\end{proof}

\begin{proof}[Proof of Lemma~\ref{lem:second_phase_jac}]
    By the bound in Lemma~\ref{lem:iso_gap_step_conditional}, we have
    \begin{align}
    \E\left[\phi(X^{\ell + 1}) \middle| \phi(X^\ell)\le \frac{1}{16d}\right]
    \le \phi(X^\ell)\left(1-\frac{1}{2d^2}\right)~.
    \end{align}
    
    Since we assumed $\ell\ge S$, the conditional inequality always holds and thus we have the Markov bound
    \begin{align}
    \ell\ge S\implies q:=\Pr\left\{\phi(X^{\ell + 1}) \ge \left(1-\frac{1}{4d^2}\right)\phi(X^\ell)\right\}
    \le \frac{1-\frac{1}{2d^2}}{1-\frac{1}{4d^2}}\le 1 - \frac{1}{4d^2}~.
    \end{align}
    We define as failure the event $\bar{A} = \{\phi(X^{\ell + 1}) \ge (1-\frac{1}{4d^2})\phi(X^\ell)\}$ with probability $q$, and conversely as success the event $A$ with probability $1-q$. 
    In other words, the probability that $\phi(X^{\ell + 1})$ does not decrease by at least a factor of $1-\frac{1}{4d^2}$ is bounded by the failure probability $1 - \frac{1}{4d^2}.$ 
    
    Since $\phi(X^{\ell + 1}) \leq \phi(X^\ell)$, then under the assumption that $\ell \ge S$ we can upper bound $\sqrt{\phi(X^{\ell + 1})}$ with $\sqrt{\phi(X^\ell)}$ in case of failure with probability $q$, and with $\sqrt{(1-\frac{1}{4d^2})\phi(X^\ell)}$ in case of success with probability $1-q$: 
    \begin{align}
    \E &\left[\sqrt{\phi(X^{\ell + 1})}|\phi(X^\ell)\right] \\
    &= \E \left[\sqrt{\phi(X^{\ell + 1})}|\phi(X^\ell), \bar{A}\right] q + \E \left[\sqrt{\phi(X^{\ell + 1})}|\phi(X^\ell), A\right] (1-q) \\
    &\le \sqrt{\phi(X^\ell)} q + \sqrt{\phi(X^\ell)\left(1-\frac{1}{4d^2}\right)} (1-q)\\
    &= \sqrt{\phi(X^\ell)} \left(q + \sqrt{1-\frac{1}{4d^2}} (1-q)\right)\\
    &= \sqrt{\phi(X^\ell)} \left(\sqrt{1-\frac{1}{4d^2}} +\left(1-\sqrt{1-\frac{1}{4d^2}}\right)q\right) && \text{monotonic in $q$}\\
    &\le \sqrt{\IG(X^\ell)}\left(\sqrt{1-\frac{1}{4d^2}} + \left(1-\sqrt{1-\frac{1}{4d^2}}\right)\left(1-\frac{1}{4d^2}\right) \right) && \text{plug $q\le 1-\frac{1}{4d^2}$}\\ 
    &= \sqrt{\phi(X^\ell)} \left(1 - \frac{1}{4d^2} + \sqrt{1-\frac{1}{4d^2}}\frac{1}{4d^2}\right) && \text{rearranging terms} \\
    &\le \sqrt{\phi(X^\ell)} \left(1 - \frac{1}{4d^2} + \left(1-\frac{1}{8d^2}\right)\frac{1}{4d^2}\right) && \text{$\sqrt{1-x}\le 1-\frac{x}{2}$ for $x\ge 0$} \\
    &= \sqrt{\phi(X^\ell)} \left(1 - \frac{1}{32d^4}\right)~.
    \end{align}

    Thus, for $\ell \ge S$, we have
    \begin{align} 
        \E \sqrt{\phi(X^{\ell + 1})} 
        = \E_{X^\ell} \E \left[\sqrt{\phi(X^{\ell + 1})}|\phi(X^\ell) \right] 
        \le \E \sqrt{\phi(X^\ell)}\left(1-\frac{1}{32d^4}\right)~.
    \end{align}

    The summation starts from below $\sqrt{d \phi(X_S)},$ and will decay by rate $1 - \frac{1}{32d^4},$ which is upper bounded by the geometric series:
    \begin{align}
        \sqrt{d \phi(X_S)}\sum_{k=0}^L \left(1- \frac{1}{32d^4}\right)^k &\le \sqrt{d \phi(X_0)} 32 d^4 \left(1 - \left(1 - \frac{1}{32 d^4}\right)^{L+1} \right) \\
        &\le 32 d^{4.5} \phi(X_0)^{0.5} \left(1 - e^{-\frac{L}{32 d^4}}\right)~.
    \end{align} 
\end{proof}

\begin{proof}[Proof of Lemma~\ref{lem:hitting_layer}]
By Lemma~\ref{lem:iso_gap_step_conditional} we have
\begin{align}
    \Pr\left\{\IG(X^\ell) \geq \frac{1}{16d}\right\}
    \leq \exp \left(-\frac{\ell}{\max(2d^2, 32d^3\IG(X^0))}\right) 16d \IG(X^0)~.
\end{align}

Thus, we have 
\begin{align}
\Pr\{S \ge \ell\} 
\leq \exp \left(-\frac{\ell}{\max(2d^2, 32d^3\IG(X^0))}\right) 16d \IG(X^0)~.
\end{align}

Since $S$ is a non-negative integer valued random variable, we can thus bound $\E[S]$ as:
\begin{align}
    \E[S] &= \sum_{\ell=1}^\infty P\{S \geq \ell\} \\
    &\leq 16d\IG(X^0) \sum_{\ell=1}^\infty \exp\left(\frac{-\ell}{k}\right) \\
    &= 16d\IG(X^0) \frac{1}{\exp(\frac{1}{k}) - 1} \\
    &\leq 16d\IG(X^0) k \\
    &= 16d\IG(X^0) \max(2d^2, 32d^3\IG(X^0)) \\
    &\lesssim 512d^4 \IG(X^0)^2~.
\end{align}

\end{proof}

\subsection{Proof of Lemma~\ref{lem:jacobian_bn_norm}: Bounding BN grad-norm with isometry gap}

The proof of the Lemma relies on two main observations that are crystallized in the following lemmas that first establish a bound on Jacobian operator norm based on the inverse of smallest eigenvalue, and then establish a lower bound for the smallest eigenvalue using the isometry gap. 

\begin{lemma}
\label{lem:jacobian_batchnorm_op_norm_bound}
    Let $X \in \R^{d \times d}$ and let $\{\lambda_i\}_{i=1}^d$ be the eigenvalues of $XX^\top$. Then, we have that:
    \begin{align}
        \| J_\bn(X) \|_{op}^2 \leq \frac{1}{ \lambda_d},
    \end{align}
    where $J_\bn$ is the Jacobian of the $\bn(\cdot)$ operator.
\end{lemma}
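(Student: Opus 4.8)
The plan is to exploit the row-wise structure of $\bn$. Since $\bn$ normalizes each row of $X$ independently --- row $i$ is sent to $X_{i\cdot}/\|X_{i\cdot}\|$ and depends on no other row --- the Jacobian $J_\bn(X)$, viewed as a linear map on $\R^{d\times d}$ equipped with the Frobenius inner product, is block diagonal with one $d\times d$ block $J_i$ per row. Because a perturbation of input row $i$ only affects output row $i$, any perturbation $V$ satisfies $\|J_\bn(X)(V)\|_F^2 = \sum_i \|J_i V_{i\cdot}\|^2$ while $\|V\|_F^2 = \sum_i \|V_{i\cdot}\|^2$; maximizing by concentrating $V$ on the worst row then gives the reduction $\|J_\bn(X)\|_{op} = \max_i \|J_i\|_{op}$. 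I would record this first.

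Next I would compute the single-row block. For the normalization map $v\mapsto v/\|v\|$ on $\R^d$, direct differentiation gives
\[
J_i = \frac{1}{\|X_{i\cdot}\|}\left(I - \frac{X_{i\cdot}X_{i\cdot}^\top}{\|X_{i\cdot}\|^2}\right).
\]
The bracketed factor is the orthogonal projection onto the complement of $X_{i\cdot}$, so its eigenvalues lie in $\{0,1\}$ and its operator norm is exactly $1$. Hence $\|J_i\|_{op} = 1/\|X_{i\cdot}\|$, and combining with the reduction above,
\[
\|J_\bn(X)\|_{op}^2 = \max_i \frac{1}{\|X_{i\cdot}\|^2} = \frac{1}{\min_i \|X_{i\cdot}\|^2}.
\]

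The final step, which is really the only substantive point, is to bound $\min_i \|X_{i\cdot}\|^2$ from below by $\lambda_d$, the smallest eigenvalue of $M := XX^\top$ (I take $\lambda_1\ge\dots\ge\lambda_d$). Observe that $\|X_{i\cdot}\|^2 = (XX^\top)_{ii} = e_i^\top M e_i$ is a diagonal entry of the symmetric positive semidefinite matrix $M$. By the Rayleigh-quotient characterization of eigenvalues, $e_i^\top M e_i \ge \lambda_{\min}(M)\,\|e_i\|^2 = \lambda_d$ for every $i$, so $\min_i \|X_{i\cdot}\|^2 \ge \lambda_d$. Substituting into the displayed identity yields $\|J_\bn(X)\|_{op}^2 \le 1/\lambda_d$, as claimed.

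I expect no serious obstacle, as every step is elementary. The only two places requiring a moment's care are the first one --- verifying that the Frobenius operator norm of the block-diagonal Jacobian is the \emph{maximum} of the per-block operator norms rather than, say, their sum --- and the Rayleigh-quotient bound on diagonal entries, which is precisely what links the geometric quantity $\min_i\|X_{i\cdot}\|$ to the spectral quantity $\lambda_d$ that the downstream isometry-gap bounds (via Lemma~\ref{lem:isogap_ortho}) are phrased in.
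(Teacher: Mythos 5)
Your proposal is correct and follows essentially the same route as the paper: block-diagonal reduction over rows, the scaled-projection form of the per-row Jacobian giving $\|J_i\|_{op}=1/\|X_{i\cdot}\|$, and the bound $\min_i\|X_{i\cdot}\|^2=\min_i (XX^\top)_{ii}\ge\lambda_d$ (the paper expands $(XX^\top)_{ii}=\sum_k\lambda_k u_{ik}^2$ via the eigendecomposition, which is the same Rayleigh-quotient fact you invoke). Your explicit justification that the operator norm of the block-diagonal map is the maximum of the per-block norms is a point the paper glosses over by appealing to the union of the blocks' spectra, but the arguments are otherwise identical.
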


Using the above lemma we have $\log \|J_{\bn}(X)\|_{op}\le -\log \lambda_d$. The following lemma upper bounds this quantity using isometry gap: 
\begin{lemma}
\label{lem:iso_gap_min_eig}
    The minimum eigenvalue of a Gram matrix that is trace-normalized is lower-bounded by the isometry gap as $-\log\lambda_d \le d\IG(X) + 1.$ Furthermore, if $\IG(X)\le \frac{1}{16d},$ then $-\log\lambda_d\le 2\sqrt{d\IG(X)}$.
\end{lemma}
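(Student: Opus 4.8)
The plan is to work directly with the eigenvalues $\{\lambda_i\}$ of $XX^\top$, exploiting two consequences of the batch-normalization structure $X=\bn(\cdot)$: the rows of $X$ are unit vectors, so $\tr(XX^\top)=\sum_i\lambda_i=d$, and for square $X$ we have $\det(X^\top X)=\prod_i\lambda_i$ with $\tr(X^\top X)=\tr(XX^\top)=d$. Substituting into the definition of the isometry gap collapses it to the clean identity $\IG(X)=-\frac1d\sum_{i=1}^d\log\lambda_i$, equivalently $d\,\IG(X)=\sum_i(-\log\lambda_i)$. Since the eigenvalues average to $1$, the smallest satisfies $\lambda_d\le1$, so $-\log\lambda_d\ge0$ and the quantity we must control is a single ``worst-case'' term of this nonnegative-on-average sum.

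For the first (coarse) bound I would isolate the $i=d$ term, writing $-\log\lambda_d=d\,\IG(X)+\sum_{i\ne d}\log\lambda_i$, so that it remains only to show $\sum_{i\ne d}\log\lambda_i\le1$. This follows from concavity of $\log$ (Jensen/AM--GM): $\sum_{i\ne d}\log\lambda_i\le(d-1)\log\!\big(\tfrac{1}{d-1}\sum_{i\ne d}\lambda_i\big)\le(d-1)\log\tfrac{d}{d-1}$, using $\sum_{i\ne d}\lambda_i=d-\lambda_d\le d$; then $\log(1+x)\le x$ with $x=\tfrac1{d-1}$ collapses the right-hand side to exactly $1$, yielding $-\log\lambda_d\le d\,\IG(X)+1$.

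For the refined bound under $\IG(X)\le\frac1{16d}$, I would write $\lambda_i=1+\epsilon_i$ with $\sum_i\epsilon_i=0$ and reuse the elementary inequality $\log(1+x)\le x-\frac{3x^2}{6+4x}$ already invoked in the proof of Lemma~\ref{lem:iso_gap_step_conditional}. Summing over $i$ and cancelling the linear term gives $d\,\IG(X)\ge\sum_i\frac{3\epsilon_i^2}{6+4\epsilon_i}$, and keeping only the $i=d$ term together with $\epsilon_d\le0$ (so $6+4\epsilon_d\le6$) produces the key one-sided estimate $d\,\IG(X)\ge\frac{\epsilon_d^2}{2}$. Under the hypothesis this forces $|\epsilon_d|\le\sqrt{2d\,\IG(X)}\le\tfrac1{2\sqrt2}$, pinning $\lambda_d$ in a neighbourhood of $1$. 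Finally I convert back: setting $u:=-\log\lambda_d=-\log(1-|\epsilon_d|)$, the target $u\le2\sqrt{d\,\IG(X)}$ reduces, via $d\,\IG(X)\ge\frac{\epsilon_d^2}{2}=\frac{(1-e^{-u})^2}{2}$, to the scalar inequality $1-e^{-u}\ge u/\sqrt2$, which I would verify holds throughout the range of $u$ permitted by the hypothesis.

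The main obstacle I anticipate is the second part: both the sharp log-inequality needed to extract the clean $\frac{\epsilon_d^2}{2}$ lower bound and the final scalar comparison $1-e^{-u}\ge u/\sqrt2$ must be controlled precisely enough to land the stated constant $2$, and this in turn requires first certifying from $\IG(X)\le\frac1{16d}$ that $\lambda_d$ lies close enough to $1$ for every one-sided expansion to be valid. By contrast, the coarse bound is essentially immediate once the trace normalization $\sum_i\lambda_i=d$ is combined with concavity of $\log$.
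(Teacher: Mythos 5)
Your proposal is correct and essentially reproduces the paper's proof: part 1 is the same Jensen/AM--GM computation, and part 2 arrives at the identical key estimate $(1-\lambda_d)^2\le 2d\,\IG(X)$ (you by summing per-eigenvalue second-order bounds of $\log(1+\epsilon_i)$ and dropping all but the $i=d$ term, the paper by applying $\log x + 1 - x\le -\tfrac{(x-1)^2}{2}$ on $[0,1]$), after which your scalar inequality $1-e^{-u}\ge u/\sqrt 2$ is exactly the paper's $-\log(1-y)\le\sqrt2\,y$ under the substitution $y=1-e^{-u}$. The step you deferred does go through: the hypothesis forces $1-e^{-u}\le\tfrac{1}{2\sqrt2}<\tfrac12$, and $-\log(1-y)\le\sqrt2\,y$ holds for all $y\in[0,\tfrac12]$, which yields the stated constant $2$.
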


Plugging these two values we have the bounds
\begin{align}
    \log\|J_{\bn}(X)\|_{op} &\le d\IG(X)+1 \label{eqn:fix}, \\
    \IG(X) \le \frac{1}{16d} \implies \log\|J_{\bn}(X)\|_{op} &\le 2\sqrt{d\IG(X)}~.
\end{align}

Now we can turn our attention to the proof of the Lemmas used in the proof. The proof of relationship between minimum eigenvalue and isometry gap is obtained by merely a few numerical inequalities:  
\begin{proof}[Proof of Lemma~\ref{lem:iso_gap_min_eig}]
    
Let $\{\lambda_i\}_{i=1}^d$ be the eigenvalues of $X^\top X$. Since the matrix is trace-normalized, we have $\sum_{k=1}^d \lambda_k = d$.

The arithmetic mean of the top $d-1$ values can be written as
\begin{align}
    \frac{1}{d-1}\sum_{k=1}^{d-1}\lambda_k 
    = 1 + \frac{1-\lambda_d}{d-1}~.
\end{align}
Thus, we have that their geometric mean is bounded by the same value. 
Therefore, we have the following bound:
\begin{align}
    \prod_{k=1}^d\lambda_k 
    &\leq \lambda_d \left(1+\frac{1-\lambda_d}{d-1}\right)^{d-1} \\
    \implies d\log\I(X) &\le \log(\lambda_d) + (d-1)\log\left(1+\frac{1-\lambda_d}{d-1}\right),
\end{align}
where in the second inequality we have taken logarithm of both sides. Now, we can apply the numerical inequality $\log(1+x)\le x$ to conclude:
\begin{align}
    -d\IG(X)\le \log\lambda_d + 1 - \lambda_d~.
\end{align}
Since $\lambda_d$ is non-negative, this clearly implies the first inequality: $-\log\lambda_d\le d\IG(X)+1$. 
    
For the second inequality first we use the numerical inequality $\log(x)+1-x \le -\frac{(x-1)^2}{2}, \forall x\in[0,1]$ to conclude 
\begin{align}
    \frac{(1-\lambda_d)^2}{2} &\le d\IG(X) \\
    \implies \lambda_d &\ge 1-\sqrt{2d\IG(X)} \\
    \implies -\log\lambda_d &\le -\log(1-\sqrt{2d\IG(X)})~.
\end{align}

We can now use the inequality $-\log(1-x) \leq \sqrt{2}x$ for $0\leq x\le \frac{1}{2}$ to conclude that
\begin{align}
    -\log\lambda_d\le 2\sqrt{d\IG(X)}
\end{align}
when $2\sqrt{d\IG(X)}\le \frac12$, which is equivalent to $\IG(X)\le \frac{1}{16d}$.

\end{proof}

For proving Lemma~\ref{lem:jacobian_bn_norm}, we first analyze $\bn$ operator on a row, and then invoke this bound and the special structure $J_\bn$ to derive the main proof.

\begin{lemma}
\label{lem:jacobian_normalization}
    Let $f: \R^d \to \R^d$ defined as $f(x) = \frac{x}{\|x\|}$ be the elementwise normalization of the $x$. Then:
    \begin{align}
        J_f(x) = \frac{1}{\|x\|} I_{d^2} - \frac{1}{\|  x\|^3} x \otimes x,
    \end{align}
    where $\otimes$ is the outer product.
\end{lemma}
\begin{proof}
    To begin, notice that for $x \in \R^d$ we have $\frac{\partial \| x \|}{\partial x} = \frac{x}{\| x \|}$.
    Denote by $y_i := \left[f(x) \right]_i = \frac{x_i}{\| x \|}$. Then the Jacobian entries become:

    \begin{align}
    \frac{\partial y_i}{\partial x_i} &= \frac{\| x \| - \frac{x_i}{\|x\|} x_i}{\|x\|^2} = \frac{1}{\|x\|} - \frac{1}{\|x\|^3}x_i x_i, \\
    \frac{\partial y_i}{\partial x_j} &= \frac{- \frac{x_j}{\|x\|} x_i}{\|x\|^2} = -\frac{1}{\|x\|^3}x_ix_j~.
    \end{align}
    Assembling the equations into matrix form, we obtain:
    \begin{align}
        J_f(x) = \frac{1}{\|x\|} I_{d^2} - \frac{1}{\|  x\|^3} x \otimes x~.
    \end{align}
\end{proof}
\begin{corollary}
    \label{cor:jac_spectrum}
    The $J_f(x)$ has the eigenvalue $\frac{1}{\|x\|}$ with multiplicity $d^2 - 1$ and $0$ with multiplicity $1$.
\end{corollary}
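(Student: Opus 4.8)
The plan is to read the spectrum of $J_f(x)$ directly off the closed form supplied by Lemma~\ref{lem:jacobian_normalization}, so that the corollary reduces to a one-line linear-algebra consequence. First I would factor out the scalar $\tfrac{1}{\norm{x}}$ and rewrite the Jacobian as
\begin{align}
    J_f(x) = \frac{1}{\norm{x}}\left(I - \frac{x \otimes x}{\norm{x}^2}\right),
\end{align}
which isolates the only nontrivial object, the matrix $P := \frac{x\otimes x}{\norm{x}^2}$.

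The key step is to recognize $P$ as the orthogonal projector onto $\mathrm{span}(x)$. I would verify this with the two defining checks: $P$ is symmetric, and it acts as the identity on $x$ while annihilating everything orthogonal to it, since $P x = \frac{x\,(x^\top x)}{\norm{x}^2} = x$ and $P v = 0$ for every $v \perp x$. Hence $P$ has eigenvalue $1$ with the single eigenvector $x$ and eigenvalue $0$ on the orthogonal complement of $x$. Consequently $I - P$ has eigenvalue $0$ with multiplicity one (spanned by $x$) and eigenvalue $1$ on that complement.

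Scaling by $\tfrac{1}{\norm{x}}$ preserves the eigenspaces and merely rescales the eigenvalues, so $J_f(x)$ has eigenvalue $0$ with multiplicity one (again spanned by $x$) and eigenvalue $\tfrac{1}{\norm{x}}$ on the orthogonal complement of $x$. Identifying the dimension of this complement as one less than the ambient dimension $d^2$ appearing as $I_{d^2}$ in Lemma~\ref{lem:jacobian_normalization} yields exactly the stated multiplicities $d^2 - 1$ and $1$.

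I do not anticipate a genuine obstacle here: the entire content is the observation that the rank-one term is, up to the scalar factor $\norm{x}^{-2}$, an orthogonal projector, after which the spectrum is forced. The only point requiring a line of care is confirming that $P$ is symmetric and idempotent, so that $I - P$ is itself an orthogonal projector with a one-dimensional kernel; everything else is bookkeeping of eigenvalue multiplicities.
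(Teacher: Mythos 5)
Your argument is correct and is exactly the reasoning the paper leaves implicit: the corollary is stated without proof, and the intended justification is precisely your observation that $\frac{x\otimes x}{\|x\|^2}$ is the orthogonal projector onto $\mathrm{span}(x)$, so $J_f(x)=\frac{1}{\|x\|}(I-P)$ has a one-dimensional kernel spanned by $x$ and eigenvalue $\frac{1}{\|x\|}$ on the orthogonal complement. The one caveat is the final bookkeeping step you inherit from the paper: since $f:\R^d\to\R^d$, the Jacobian is $d\times d$ (the proof of Lemma~\ref{lem:jacobian_normalization} indexes entries by $i,j\in\{1,\dots,d\}$, and the proof of Lemma~\ref{lem:jacobian_batchnorm_op_norm_bound} explicitly calls each block ``size $d\times d$''), so the correct multiplicities are $d-1$ and $1$; the $I_{d^2}$ and $d^2-1$ in the paper are typos carried over from the full $d^2\times d^2$ block-diagonal BN Jacobian, and your projector argument actually exposes this rather than confirming the stated $d^2-1$.
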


\begin{lemma}
    \label{lem:rownorm_var}
     Let $X X^\top = \sum_{i=1}^d \lambda_i u_i u_i^\top$, where $XX^\top = U\Lambda U^\top$ is the eigendecomposition of $XX^\top$. Then, we have that $\min_j \| X_{j\cdot} \|^2 \geq \min_k \lambda_k$.
\end{lemma}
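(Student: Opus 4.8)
The plan is to recognize that the quantity $\min_j \|X_{j\cdot}\|^2$ is simply the smallest diagonal entry of the Gram matrix $XX^\top$, and then to invoke the standard Rayleigh-quotient fact that every diagonal entry of a symmetric matrix is sandwiched between its extreme eigenvalues. Concretely, since $X_{j\cdot}$ denotes the $j$-th row of $X$, I first observe that $\|X_{j\cdot}\|^2 = \sum_{k} X_{jk}^2 = (XX^\top)_{jj} = e_j^\top (XX^\top) e_j$, where $e_j$ is the $j$-th standard basis vector. This rewrites each squared row-norm as a quadratic form of $XX^\top$ evaluated at a unit vector.

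Next I would plug in the supplied eigendecomposition $XX^\top = \sum_{i=1}^d \lambda_i u_i u_i^\top$ to expand the quadratic form as $e_j^\top (XX^\top) e_j = \sum_{i=1}^d \lambda_i (u_i^\top e_j)^2$. Writing $c_{ji} := (u_i^\top e_j)^2 \ge 0$, the orthonormality of the eigenvectors $\{u_i\}$ gives $\sum_{i=1}^d c_{ji} = \sum_{i=1}^d (u_i^\top e_j)^2 = \|e_j\|^2 = 1$, so for each fixed $j$ the row-norm is a convex combination of the eigenvalues. A convex combination is always at least the minimum, hence
\begin{align}
\|X_{j\cdot}\|^2 = \sum_{i=1}^d \lambda_i c_{ji} \ge \left(\min_k \lambda_k\right) \sum_{i=1}^d c_{ji} = \min_k \lambda_k~.
\end{align}

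Finally, since this lower bound holds uniformly in $j$, taking the minimum over $j$ yields $\min_j \|X_{j\cdot}\|^2 \ge \min_k \lambda_k$, which is the claim. There is no real obstacle here: the argument is an elementary application of the min-max (Rayleigh) characterization of the smallest eigenvalue. The only point demanding a little care is bookkeeping of indices — confirming that $X_{j\cdot}$ is a \emph{row} so that $\|X_{j\cdot}\|^2$ is a diagonal entry of $XX^\top$ (not $X^\top X$), and verifying via orthonormality that the coefficients $c_{ji}$ genuinely sum to one so that the convex-combination step is valid.
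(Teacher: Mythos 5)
Your proof is correct and is essentially identical to the paper's: both rewrite $\|X_{j\cdot}\|^2$ as the diagonal entry $(XX^\top)_{jj} = \sum_i \lambda_i u_{ji}^2$, use orthonormality of $U$ to see this is a convex combination of the eigenvalues, and lower-bound it by $\min_k \lambda_k$. The only difference is presentational — you phrase it via the Rayleigh-quotient $e_j^\top(XX^\top)e_j$, which is the same computation.
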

\begin{proof}
    \begin{align}
    \|X_{j\cdot}\|^2 = (X X^\top)_{jj} 
    = \sum_{i=1}^d \lambda_i u_{ji}^2
    \geq \min_k \lambda_k \sum_{i=1}^d u_{ji}^2 
    = \min_k \lambda_k,
    \end{align}
    where in the last equality we have used the fact that $U$ orthogonal.
    Since this is true for all rows $j$, we get $\min_j \|X_{j\cdot}\|^2 \geq \min_k \lambda_k$.
\end{proof}

Having established the above properties, we are equipped to prove that the Jacobian of a BN layer is bounded by the inverse of the minimum eigenvalue of its input Gram matrix. 

\begin{proof}[Proof of Lemma~\ref{lem:jacobian_batchnorm_op_norm_bound}]
\label{proof:jacobian_batchnorm_op_norm_bound}

To begin, notice that since $\bn: \R^{d^2} \to \R^{d^2}$, implies that $J_{\bn}(X) \in \R^{d^2 \times d^2}$. Denote $X' = \bn(X)$. Since the normalization happens on each row independently of the other rows, the only non-zero derivatives in the Jacobian correspond to changes in output row $i$ with regards to the same input row $i$, i.e.:
\begin{align}
    \frac{\partial X'_{ik}}{\partial X_{jl}} = 0, \quad \forall i \neq j, \forall k, l~.
\end{align}
This creates a block-diagonal structure in $J_{\bn}(X)$, with $d$ blocks on the main diagonal, where each block has size $d \times d$ and is equal to $J_f(X_{i\cdot})$, where $f$ is as defined in Lemma~\ref{lem:jacobian_normalization}. Therefore, due to the block-diagonal structure, we know that: 
\begin{align}
    \lambda \left[J_{\bn}(X)\right] = \bigcup_{i=1}^d \lambda \left[J_f(X_{i\cdot})\right],
\end{align}
where $\lambda[\cdot]$ denotes the eigenvalue spectrum. From Corollary~\ref{cor:jac_spectrum}, we know that
\begin{align}
    \lambda\left[J_{\bn}(X)\right] = \left\{ \frac{1}{\|X_{i\cdot}\|} \right\}_{i=1}^d \cup \{0\}
\end{align}
with their respective multiplicites. Finally, using Lemma~\ref{lem:rownorm_var}, this implies:
\begin{align}
    \| J_\bn(X) \|_2^2 = \left[\max_i \frac{1}{\|X_{i\cdot}\|}\right]^2 = \left[\frac{1}{\min_i {\|X_{i\cdot}\|}}\right]^2
    \leq \frac{1}{\min_k \lambda_k}~.
\end{align}
\end{proof}

\begin{proof}[Proof of Lemma~\ref{lem:lipschitz}]

Assuming the the logits are passed through a softmax layer, we analyze the case of Cross Entropy Loss for one sample $i$ in a $C$-classes classification problem. Denoting $z_i = H^L_{i\cdot}$, we have:
\begin{align}
    \mathcal{L}(z_i, y_i) = -\sum_{i=1}^C y_i \log p_i,
\end{align}
where $p_i = \frac{e^{z_i}}{\sum_{j=1}^C e^{z_j}}$ is the probability vector for sample $i$ after passing through the softmax function.

Computing the partial derivatives, we obtain:
\begin{align}
    \frac{\partial p_i}{\partial z_i} &= p_i (1 - p_k), \quad i=k \\
    \frac{\partial p_i}{\partial z_k} &= - p_i p_k, \quad i\neq k~.
\end{align}

Finally, we can compute the gradient of the loss with respect to the logits:
\begin{align}
    \nabla_{z_k}\mathcal{L} &= \sum_{i=1}^{C} \left( -y_i \frac{\partial \log(p_i)}{\partial z_k} \right) \\
   & = \sum_{i=1}^{C} \left( -y_i \frac{1}{p_i}\frac{\partial p_i}{\partial z_k} \right) \\
    &= \sum_{i\neq k} (y_i p_k) + (-y_k (1-p_k))\\
    &= p_k\sum_{i\neq k}y_i - y_k (1-p_k) \\
  \implies  \|\nabla_{z_k}\mathcal{L}\| &\le \left\|p_k \sum_{i\neq k} y_k\right\| + \|y_k (1-p_k)\| \le 2~.
\end{align}
Since the gradient of the loss with regard to each sample is bounded, we can conclude that the operator norm of the Jacobian of the loss with regards to the logits matrix $H^L$ is also bounded.

In a similar analysis, we now shift our attention towards the Mean Squared Error (MSE) loss:
\[ \mathcal{L}(z_i, y_i) = \frac{1}{C} \sum_{i=1}^{C} (y_i - p_i)^2~. \]

We want to compute the gradient of the loss with respect to each logit \( z_k \):
\begin{align}
    \| \nabla_{z_k}\mathcal{L} \| &= \frac{1}{C} \sum_{i=1}^{C} 2(y_i - p_i) \frac{\partial (-p_i)}{\partial z_k} \\
  \implies \| \nabla_{z_k}\mathcal{L} \|  &\le \frac2C \sum_{i=1}^C \left|(y_i - p_i)\frac{\partial p_i}{\partial z_k} \right|
    \le \frac2C \sum_{i=1}^C |y_i - p_i| \cdot \left|\frac{\partial p_i}{\partial z_k}\right| \leq 2~.
\end{align}
By substituting these derivatives into the gradient equation, we can derive the gradient for each logit with respect to the MSE loss.

\end{proof}

\begin{lemma}
    \label{lem:jacobian_next_layer}
    Let $X^\ell \in \R^{d \times d}$ be the hidden representations of layer $\ell > 0$ as defined in~\eqref{eqn:mlp_def}. Then, we have that:
    \begin{align}
        \log \norm{\frac{\partial X^{\ell+1}}{\partial X^\ell}} \leq \log \norm{J_{\bn}(H^\ell)}~.
    \end{align}
\end{lemma}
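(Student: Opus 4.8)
The plan is to exploit the chain rule together with the fact that the only operation separating $X^\ell$ from $X^{\ell+1}$, beyond $\bn$, is the orthogonal rotation $W^\ell$. Recall that $X^{\ell+1} = \bn(H^\ell)$ with $H^\ell = W^\ell X^\ell$. First I would write
\[
\frac{\partial X^{\ell+1}}{\partial X^\ell} = J_{\bn}(H^\ell)\,\frac{\partial H^\ell}{\partial X^\ell},
\]
where both factors are regarded as linear operators on $\R^{d^2}$ after vectorizing the matrices, and $\norm{\cdot}$ denotes the operator norm used throughout the appendix.

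Next I would identify the right-hand factor. Since $H^\ell = W^\ell X^\ell$ depends on $X^\ell$ through the \emph{linear} map $X \mapsto W^\ell X$, its Jacobian is constant in $X^\ell$ and, in column-stacked vectorized coordinates, equals the Kronecker product $I_d \otimes W^\ell$ (the exact placement of the identity factor depends on the vectorization convention, but is immaterial here). Because $W^\ell$ is orthogonal, $I_d \otimes W^\ell$ is orthogonal as well; its singular values are all equal to $1$, so $\norm{\partial H^\ell / \partial X^\ell} = 1$. This is the same mechanism already invoked in the proof of Theorem~\ref{thm:explosion_appendix}, where the fully connected layers contribute zero to the log-norm sum.

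Finally, applying submultiplicativity of the operator norm gives
\[
\norm{\frac{\partial X^{\ell+1}}{\partial X^\ell}} \le \norm{J_{\bn}(H^\ell)}\,\norm{\frac{\partial H^\ell}{\partial X^\ell}} = \norm{J_{\bn}(H^\ell)},
\]
and taking logarithms of both sides yields the stated inequality. The only point requiring care is the vectorization bookkeeping needed to confirm that the Jacobian of $X \mapsto W^\ell X$ is orthogonal with unit operator norm; once that is in place the conclusion is an immediate consequence of $\norm{AB}\le\norm{A}\,\norm{B}$, so I do not anticipate any genuine obstacle in this lemma.
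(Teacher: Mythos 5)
Your proposal is correct and follows essentially the same route as the paper's proof: chain rule to factor the Jacobian as $J_{\bn}(H^\ell)$ composed with the Jacobian of the linear map $X\mapsto W^\ell X$, observe that the latter has unit operator norm because $W^\ell$ is orthogonal, and conclude by submultiplicativity. Your explicit identification of that factor as the Kronecker product $I_d\otimes W^\ell$ in vectorized coordinates is slightly more careful than the paper, which simply writes it as $W^\ell$, but the argument is the same.
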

\begin{proof}[Proof of Lemma~\ref{lem:jacobian_next_layer}]
    By definition, we have that:
    \begin{align}
        H^\ell &= W^\ell X^\ell, \\
        X^{\ell + 1} &= \bn(H^\ell)~.
    \end{align}
    Therefore, applying the chain rule, we get:
    \begin{align}
        \frac{\partial X^{\ell+1}}{\partial X^\ell} = \frac{\partial X^{\ell+1}}{\partial H^\ell} \frac{\partial H^\ell}{\partial X^\ell} 
        = J_{\bn}(H^\ell) W^\ell~.
    \end{align}

    Taking the logarithm of the norm of this quantity, we reach the conclusion:
    \begin{align}
        \log \norm{\frac{\partial X^{\ell+1}}{\partial X^\ell}}
        \leq \log \norm{J_{\bn}(H^\ell)} + \log \norm{W^\ell} 
        = \log \norm{J_{\bn}(H^\ell)},
    \end{align}
    where we have used the fact that the spectrum of the orthogonal matrix $W^\ell$ contains only the singular value $1$ with multiplicity $d$.
\end{proof}

\newpage
\section{Linear independence in common datasets}
\label{sec:datasets_independence}
In this section, we empirically verify the assumption that popular datasets do not suffer from rank collapse in most practical settings.

We provide empirical evidence for CIFAR10, MNIST, FashionMNIST and CIFAR100. We test this assumption by randomly sampling $100$ input batches of sizes $n=16, 32, 64, 128, 256, 512$ from each of these datasets and then measuring the rank of the Gram matrix of these randomly sampled batches using the \texttt{matrix\_rank()} function provided in PyTorch. We stop at size $512$ since we approach the dimensionality of some datasets, i.e. FashionMNIST, MNIST. We show in Table~\ref{tab:batch_degen} the average rank with the standard deviation for each $n$, over $100$ randomly sampled batches.

\begin{table}[htp!]
\centering
\caption{Average rank of Gram matrix of input batches of size $n$ from different datasets. Mean and standard deviation are computed over $100$ randomly selected input batches, where the samples are chosen without replacement.}
\label{tab:batch_degen}
\scriptsize
\begin{tabular}[c]{|l|l|l|l|l|l|l|}
\hline
Dataset      & $n=16$          & $n=32$          & $n=64$           & $n=128$             & $n=256$             & $n=512$ \\ \hline
CIFAR10      & $16.0 \pm 0.0$  & $32.0 \pm 0.0$  & $64.0 \pm 0.0$   & $127.99 \pm 0.09$   & $221.06 \pm 2.89$   & $203.70 	\pm 3.58$    \\ \hline
MNIST        & $16.0 \pm 0.0$  & $32.0 \pm 0.0$  & $64.0 \pm 0.0$   & $128.00 \pm 0.00$   & $250.48 \pm 1.53$   & $318.04   \pm 2.82$    \\ \hline
FashionMNIST & $16.0 \pm 0.0$  & $32.0 \pm 0.0$  & $64.0 \pm 0.0$   & $128.00 \pm 0.00$   & $238.19 \pm 3.27$   & $275.37   \pm 4.20$    \\ \hline
CIFAR100     & $16.0 \pm 0.0$  & $32.0 \pm 0.0$  & $64.0 \pm 0.0$   & $127.92 \pm 0.27$   & $218.11 \pm 3.69$   & $201.51 	\pm 3.95$   \\ \hline
\end{tabular}
\end{table}

We would like to remark that these datasets are fairly simple in terms of dimensionality and semantics, which can lead to correlated samples. Furthermore, the rank degeneracy can be alleviated even in the larger batch sizes through various data augmentations techniques. Note that these datasets have a high degree of correlation between samples. Most notably, the average cosine similarity between samples in a $512$ size batch is $0.81, 0.40, 0.58, 0.81$ for CIFAR10, MNIST, FashionMNIST and CIFAR100 respectively.

\newpage
\section{Activation shaping}
\label{sec:shaping}
In this section, we explain the full procedure for shaping the activation, as well as expand on the heuristic we use to choose the pre-activation gain. Under the functional structure of the MLP in ~\eqref{eqn:mlp_nonlinear}, let $\alpha_\ell$ be the pre-activation gain. 

More formally, since the gradient norm has an exponential growth in depth, as shown in Figure~\ref{fig:nonlinear_contrast}, we can compute the linear growth rate of log-norm of gradients in depth. We define the rate of explosion for a model of depth $L$ and gain $\alpha$ at layer $\ell$ as the slope of the log norm of the gradients:

\begin{align}
    R(\ell, \alpha_\ell) = \frac{\log \| \nabla_{W_\ell} \mathcal{L}\| - \log \| \nabla_{W_{\ell - 10}} \mathcal{L}\|}{10}~.
\end{align}

Since the rate function is not perfectly linear and has noisy peaks, we measure the slope with a $10$ layer gap in order to capture the true behaviour instead of the noise. 

Our goal is to choose $\alpha$ such that the sum of the rates across the layers in depth is bounded by a constant that does not depend on the depth of the model, i.e. $R(\ell, \alpha_\ell) \leq \beta$, where $\beta$ is independent of $L$. One choice to achieve this is to pick a gain such that the sum of the rates behaves like a decaying harmonic sum in depth.

To this end, we measure the rate of explosion at multiple layers in a $1000$ layer deep model, for various gains $\alpha$ which are constant across the layers in Figure~\ref{fig:rate_vs_gain} and notice that it behaves as $R(\ell) = c_1 \alpha^c_2$. In order to have the sum of rates across layers behave like a bounded harmonic series in depth, we must choose the gain such that it decays roughly as $\alpha^{c_2} = \ell^{-k}$ where $k > 1$ results in convergence. Therefore, we can obtain a heuristic for picking a gain such that the gradients remain bounded in depth as $\alpha_\ell = \ell^{-k / c_2}$, where we refer to $k / c_2$ as the gain exponent.

\begin{figure}[ht]
    \centering
    \includegraphics[width=0.8\linewidth]{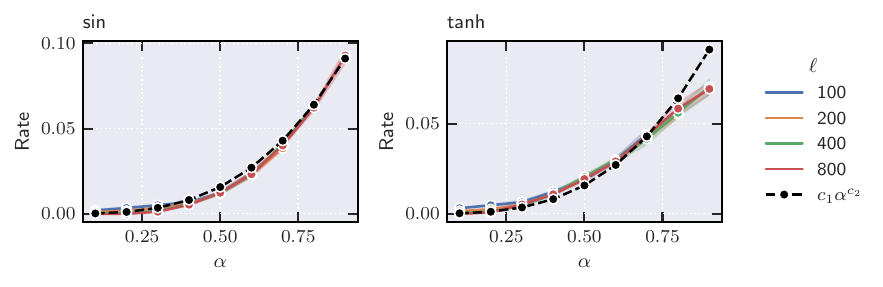}
    \caption{Explosion rate of the log norm of the gradients at initialization for an MLP model with
orthogonal weights and batch normalization, for sin and tanh nonlinearities measured for a $1000$
layer deep model at layers $\ell$ as a function of gain $\alpha$. The black trace shows the fitted function $c_1 \alpha^c_2$. Traces are averaged over $10$ independent runs, with the shades showing the $95\%$ confidence interval.}
    \label{fig:rate_vs_gain}
\end{figure}

This reduces the problem to picking the exponent such that the sum stays bounded. We show how the behaviour of the explosion rate at the early layers, for various models, is impacted by the exponent in Figure~\ref{fig:rate_vs_exponent}. Note that for several exponent values, we able to reduce the exponential explosion rate and obtain trainable models, which we show in Appendix~\ref{sec:other_experiments}.

\begin{figure}[ht]
    \centering
    \includegraphics[width=0.8\linewidth]{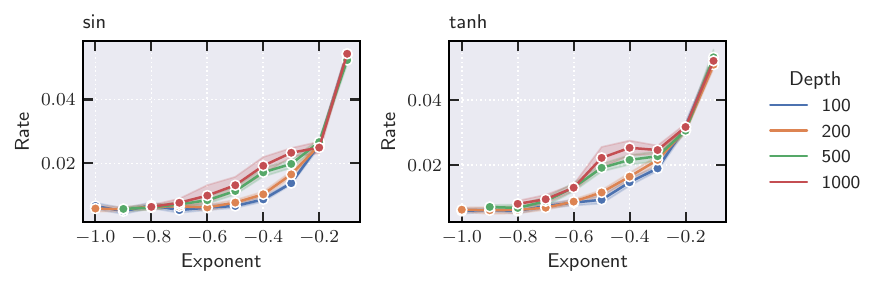}
    \caption{Explosion rate of the log norm of the gradients at initialization for an MLP model with orthogonal weights and batch normalization, for sin and tanh nonlinearities at depths $100, 200, 500, 1000$ as a function of the gain exponent. Traces are averaged over $10$ independent runs, where the shade shows the $95\%$ confidence interval. Rate is measured at $\ell=10$ to avoid the any transient effects of the function}
    \label{fig:rate_vs_exponent}
\end{figure}

\newpage
\section{Implicit orthogonality during training}
\label{sec:implicity_orthogonality}
In this section, we provide empirical evidence that our architecture during training maintains orthogonality across depths, while maintaining bounded gradients. Figure~\ref{fig:ortho_during_training} shows the evolution of the isometry gap of the weight matrices $W_\ell$ during training, for models at different depths and different nonlinearities. In order to show that these weights are updated gradient descent, we also show the evolution of the norm of the loss gradients with regards to matrices $W_\ell$ in Figure~\ref{fig:gradient_during_training}. 

These experiments are performed on an MLP with orthogonal weight matrices and batch normalization, with sin and tanh activations. The width is set to $100$, batch size $100$ and learning rate $0.001$. The gain exponent is set to a fixed value for all experiments. The measurements are performed on a single batch of size $100$ from CIFAR10, after each epoch of training on the same dataset. 

\begin{figure}[htp!]
    \centering
    \includegraphics[width=0.8\linewidth]{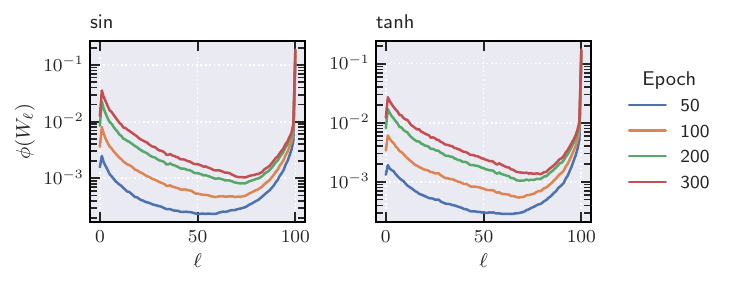}
    \includegraphics[width=0.8\linewidth]{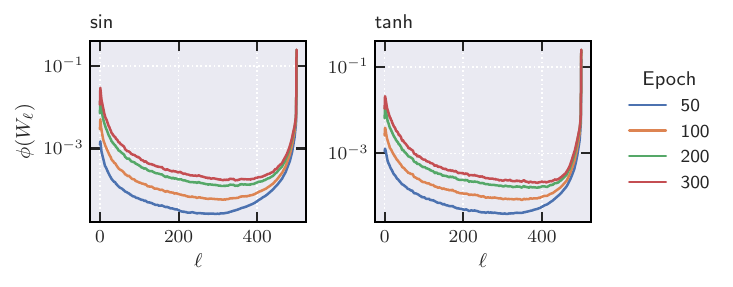}
    \includegraphics[width=0.8\linewidth]{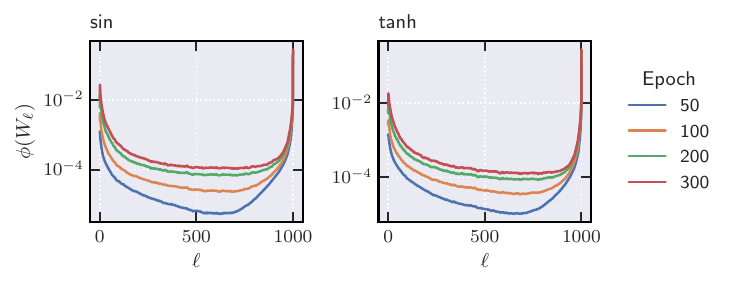}
    \caption{Contrasting the isometry gap of weight matrices during training for MLPs of depth $100$ (top), $500$ (middle), $1000$ (bottom). The middle layers become increasingly more orthogonal with depth, while maintaining a small isometry gap. During training, the isometry gap also remains low, suggesting the matrices remain close to being orthogonal.}
    \label{fig:ortho_during_training}
\end{figure}

\begin{figure}[htp!]
    \centering
    \includegraphics[width=0.8\linewidth]{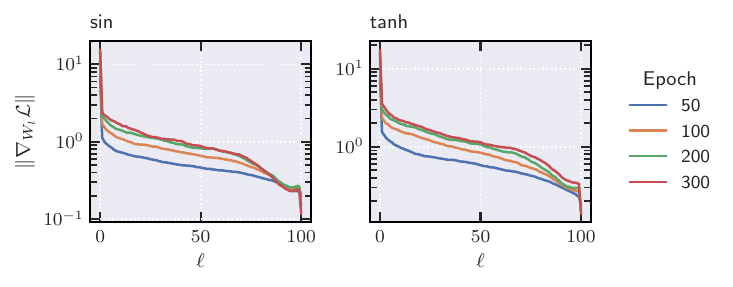}
    \includegraphics[width=0.8\linewidth]{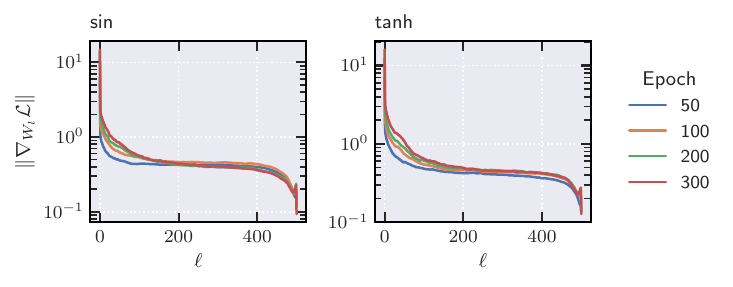}
    \includegraphics[width=0.8\linewidth]{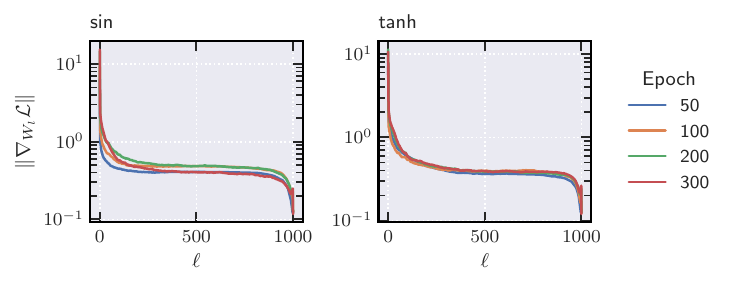}
    \caption{Contrasting the Frobenius norm of the gradients of the loss with respect to the weights during training for MLPs of depth $100$ (top), $500$ (middle), $1000$ (bottom). The gradients do not vanish during training and across different depths for all layers, suggesting that the orthogonality evidenced in Figure~\ref{fig:ortho_during_training} is not due to the weights not being updated during SGD.}
    \label{fig:gradient_during_training}
\end{figure}

\newpage
\section{Other experiments}
\label{sec:other_experiments}
In this section we provide the train and test accuracies of deep MLPs on 4 popular image datasets, namely MNIST, FashionMNIST, CIFAR10, CIFAR100. Hyperparameters and measurements procedure are described in Section~\ref{sec:experiments}.

\subsection{Supplementary train and test results on MNIST, FashionMNIST, CIFAR10, CIFAR100}

\begin{figure}[ht!]
    \centering
    \includegraphics[width=1.0\linewidth]{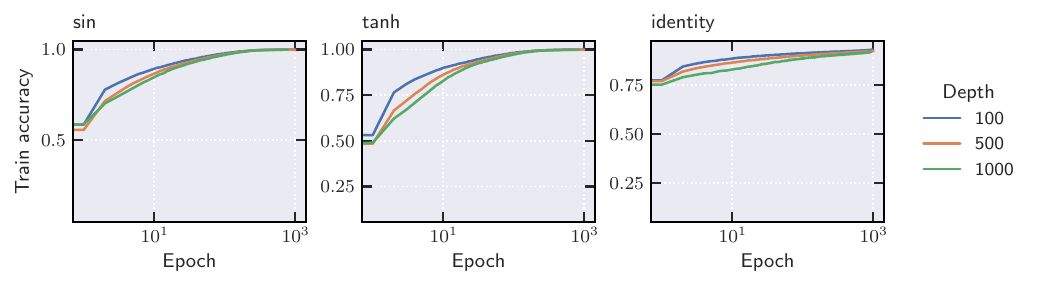}
    \includegraphics[width=1.0\linewidth]{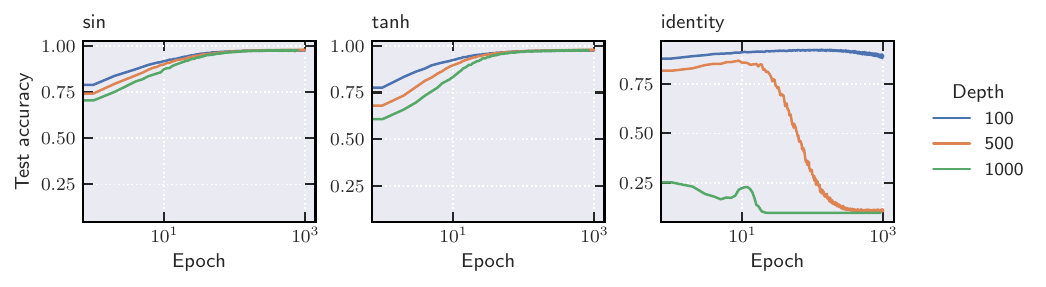}
    \caption{Contrasting the train and test accuracy of MLPs with gained sin, tanh and identity activations on MNIST. The identity activation performs much worse than the nonlinearities, indicating the fact that the sin and tanh networks are not operating in the linear regime. The network is trained with vanilla SGD and the hyperparameters are width $100$, batch size $100$, learning rate $0.001$.}
    \label{fig:mnist}
\end{figure}

\begin{figure}[ht!]
    \centering
    \includegraphics[width=1.0\linewidth]{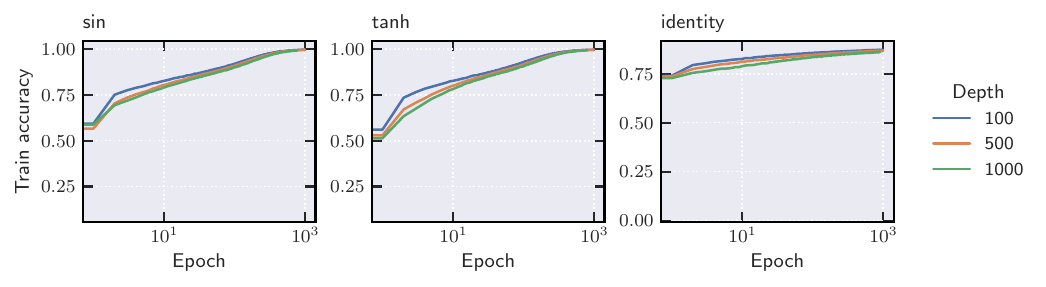}
    \includegraphics[width=1.0\linewidth]{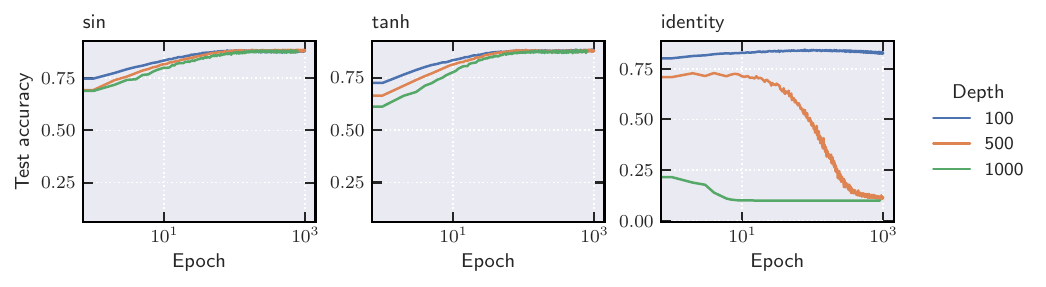}
    \caption{Contrasting the train and test accuracy of MLPs with gained sin, tanh and identity activations on FashionMNIST. The identity activation performs much worse than the nonlinearities, indicating the fact that the sin and tanh networks are not operating in the linear regime. The networks are trained with vanilla SGD and the hyperparameters are width $100$, batch size $100$, learning rate $0.001$.}
    \label{fig:fashionmnist}
\end{figure}

\begin{figure}[ht]
    \centering
    \includegraphics[width=1.0\linewidth]{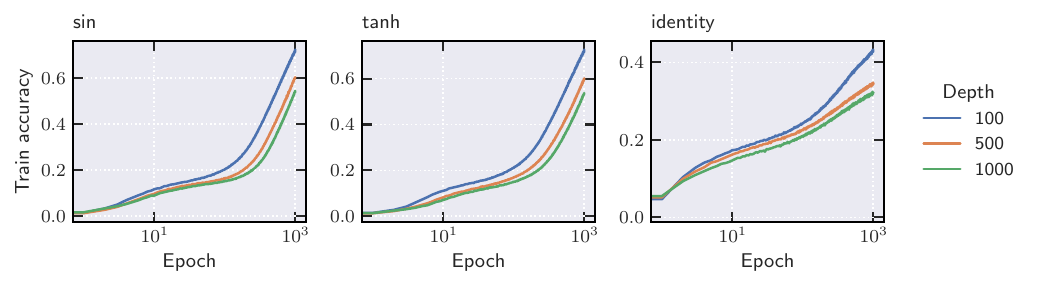}
    \includegraphics[width=1.0\linewidth]{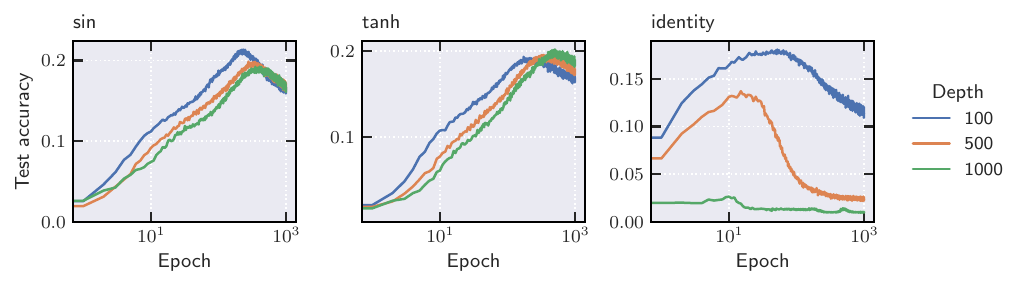}
    \caption{Contrasting the train and test accuracy of MLPs with gained sin, tanh and identity activations on CIFAR100. The identity activation performs much worse than the nonlinearities, indicating the fact that the sin and tanh networks are not operating in the linear regime. The networks are trained with vanilla SGD and the hyperparameters are width $100$, batch size $100$, learning rate $0.001$.}
    \label{fig:cifar100}
\end{figure}

\begin{figure}[ht]
    \centering
    \includegraphics[width=1.0\linewidth]{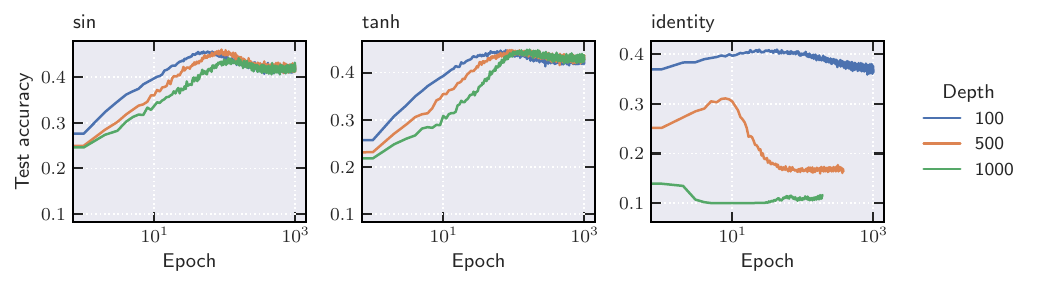}
    \caption{Contrasting the train test accuracy of MLPs with gained sin, tanh and identity activations on CIFAR10. The identity activation performs much worse than the nonlinearities, indicating the fact that the sin and tanh networks are not operating in the linear regime. The networks are trained with vanilla SGD and the hyperparameters are width $100$, batch size $100$, learning rate $0.001$.}
    \label{fig:cifar10_test}
\end{figure}
\newpage
\subsection{Supplemental figures}
We present empirical results in Figure~\ref{fig:degenerate_input} showing that degenerate input batches are a hard constraint for orthogonalization without gradient explosion. For MLPs with different depths, we show that by repeating samples in a batch of size $10$ we get an exponential gradient explosion, which is unavoidable theoretically.

Furthermore, we show how non-linearities affect the gradient explosion rate in Figure~\ref{fig:nonlinear_contrast}. Using standard batch normalization and fully connected layers from PyTorch we show that non-linearities maintain a large isometry gap. This is a critical issue for our theoretical framework, since we take advantage of the fact that the identity activation achieves perfect orthogonality in order to prove that the gradients remain bounded in depth.
\begin{figure}[ht]
    \centering
    \includegraphics[width=0.9\linewidth]{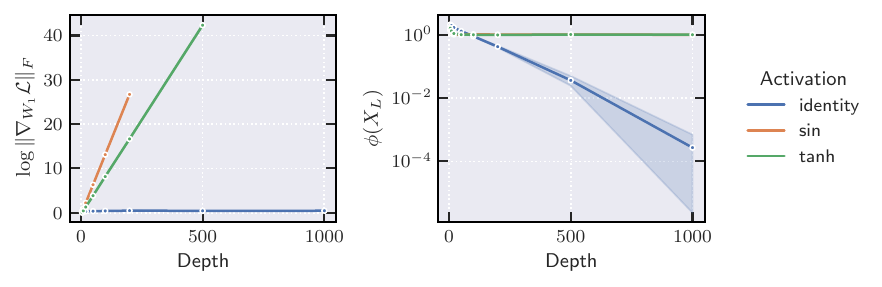}
    \vspace{-.4cm}
    \caption{Left: average log-norm of gradients (log-scale y-axis) at the first layer for networks with different depths, evaluted on CIFAR10. Right: Isometry gap (log-scale y-axis) at the last layer for networks with different depths, evaluted on CIFAR10. The MLP is initialized with orthogonal weights and batch normalization, with standard modules, with $\sin, \tanh, \text{identity}$ non-linearities. After stabilizing the isometry gap, the non-linearities have an exponential gradient explosion.
    }
    \label{fig:nonlinear_contrast} 
\end{figure}

\subsection{Influence of mean reduction on the gradient bound}
In this section, we compare whether adding mean reduction and the additional factor of $\frac1n$ in the denominator of the batch normalization module influences our gradient bound. As expected, we show in Figure~\ref{fig:mean_reduction} that in both cases, for the identity activation, the result remains similar, with the gradients remaining bounded in depth. 
\begin{figure}[ht!]
    \centering
    \includegraphics[width=0.6\linewidth]{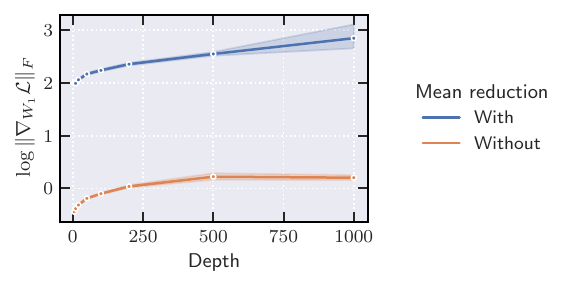}
    \vspace{-.4cm}
    \caption{Comparing the gradient explosion rate in networks with standard batch normalization (blue) and networks with the simplified batch normalization operator from our theoretical framework (orange). Notice that the 2 traces are similar in terms of gradient explosion. Traces are averaged over $10$ runs with the shaded regions showing the $95\%$ confidence interval. Samples are from CIFAR10.
    }
    \label{fig:mean_reduction} 
\end{figure}

\end{document}